\documentclass[journal]{IEEEtran}

\usepackage{amsmath,amssymb,amsfonts}
\usepackage{algorithmic}
\usepackage{algorithm}
\usepackage{graphicx}
\usepackage{textcomp}
\usepackage{xcolor}
\usepackage{cite}
\usepackage{hyperref}
\usepackage{booktabs}
\usepackage{multirow}
\usepackage{bm}
\usepackage{amsthm}

\newtheorem{theorem}{Theorem}
\newtheorem{proposition}[theorem]{Proposition}
\newtheorem{corollary}[theorem]{Corollary}

\theoremstyle{definition}
\newtheorem{definition}[theorem]{Definition}
\newtheorem{assumption}[theorem]{Assumption}

\theoremstyle{remark}
\newtheorem{remark}[theorem]{Remark}
\newcommand{\E}{\mathbb{E}}
\newcommand{\R}{\mathbb{R}}
\newcommand{\x}{\mathbf{x}}

\newcommand{\qL}{q_L}
\newcommand{\qLhat}{\hat{q}_L}
\newcommand{\Qtilde}{\tilde{Q}}
\newcommand{\Qp}{Q_P}
\newcommand{\Qc}{Q_C}

\begin{document}

\title{Online Bayesian Imbalanced Learning with Bregman-Calibrated Deep Networks}

\author{
\IEEEauthorblockN{Zahir Alsulaimawi,~\IEEEmembership{Member,~IEEE}}
\IEEEauthorblockA{School of Electrical Engineering and Computer Science\\
Oregon State University, Corvallis, OR, USA\\
alsulaiz@oregonstate.edu}
}

\maketitle

\begin{abstract}
Class imbalance remains a fundamental challenge in machine learning, where standard classifiers exhibit severe performance degradation in minority classes. Although existing approaches address imbalance through resampling or cost-sensitive learning during training, they require retraining or access to labeled target data when class distributions shift at deployment time, a common occurrence in real-world applications such as fraud detection, medical diagnosis, and anomaly detection. We present \textit{Online Bayesian Imbalanced Learning} (OBIL), a principled framework that decouples likelihood-ratio estimation from class-prior assumptions, enabling real-time adaptation to distribution shifts without model retraining. Our approach builds on the established connection between Bregman divergences and proper scoring rules to show that deep networks trained with such losses produce posterior probability estimates from which prior-invariant likelihood ratios can be extracted. We prove that these likelihood-ratio estimates remain valid under arbitrary changes in class priors and cost structures, requiring only a threshold adjustment for optimal Bayes decisions. We derive finite-sample regret bounds demonstrating that OBIL achieves $O(\sqrt{T \log T})$ regret against an oracle with perfect prior knowledge. Extensive experiments on benchmark datasets and medical diagnosis
benchmarks under simulated deployment shifts demonstrate that OBIL maintains robust performance under severe distribution shifts, outperforming state-of-the-art methods in F1 Score when test distributions deviate significantly from the training conditions.
\end{abstract}

\begin{IEEEkeywords}
Imbalanced learning, Bayesian decision theory, likelihood ratio estimation, distribution shift, Bregman divergences, online learning, deep neural networks.
\end{IEEEkeywords}

%=====================================================================
\section{Introduction}
\label{sec:introduction}
%=====================================================================

\IEEEPARstart{C}{lass} imbalance constitutes one of the most pervasive and challenging problems in machine learning, arising naturally in applications where the phenomenon of interest is inherently rare \cite{he2009learning, krawczyk2016learning}. In fraud detection systems, legitimate transactions outnumber fraudulent ones by factors exceeding 1000:1; in medical diagnosis, healthy patients vastly outnumber those with rare diseases; in anomaly detection, normal operating conditions dominate over failure states. Standard classification algorithms, designed to minimize overall error rate, systematically fail in these scenarios by developing strong biases toward majority classes---achieving high accuracy while catastrophically misclassifying the minority instances that often carry the greatest practical significance \cite{chawla2004special}.

The machine learning community has developed extensive methodologies to address class imbalance, broadly categorized into data-level and algorithm-level approaches \cite{galar2012review, fernandez2018learning}. Data-level methods manipulate the training distribution through undersampling majority instances \cite{liu2009exploratory}, oversampling minority instances via techniques such as SMOTE \cite{chawla2002smote} and its variants \cite{he2008adasyn, han2005borderline}, or hybrid combinations thereof. Algorithm-level approaches modify the learning objective through cost-sensitive weighting \cite{elkan2001foundations, zhou2006training}, threshold adjustment \cite{maloof2003learning}, or ensemble methods that integrate multiple rebalancing strategies \cite{seiffert2010rusboost, liu2009exploratory}. While these techniques have achieved considerable success, they share a fundamental limitation: \textit{they assume that the class distribution encountered during deployment matches that observed during training}.

This assumption proves untenable in numerous real-world scenarios. Consider a medical diagnostic system trained on data from a tertiary care hospital, where disease prevalence is elevated due to referral patterns, subsequently deployed in a primary care setting where the same conditions are substantially rarer. The classifier, calibrated for the training distribution, will exhibit degraded specificity as its decision threshold no longer corresponds to optimal operation under the deployment prior. Similarly, fraud detection systems face adversarial drift as attack patterns evolve, credit scoring models encounter demographic shifts across regions, and anomaly detectors must adapt to changing baseline conditions in industrial processes \cite{gama2014survey, lu2018learning}.

Current approaches to this \textit{distribution shift} problem typically require model retraining or recalibration on labeled data from the target distribution---a requirement that is often impractical due to the expense of annotation, the delay inherent in collecting sufficient samples, or the need for immediate deployment. What is needed is a principled framework that can adapt to changing class priors \textit{without} accessing labeled target data and \textit{without} retraining the underlying model.

The invariance of likelihood ratios to class priors is a classical result in Bayesian decision theory \cite{duda2012pattern}. However, exploiting this invariance in modern deep learning systems requires addressing three open questions: (1)~under what conditions do deep networks produce the calibrated posterior estimates needed for valid likelihood ratio extraction? (2)~how can the deployment-time prior be estimated online without labeled data? and (3)~what performance guarantees can be provided for such an adaptive system? This paper provides answers to all three.

\subsection{The Bayesian Perspective: Likelihood Ratios as Invariant Quantities}

Bayesian decision theory provides an elegant solution to this challenge through a fundamental observation: while posterior probabilities $P(y|\x)$ depend on class priors, the \textit{likelihood ratio}
\begin{equation}
    \qL(\x) = \frac{p(\x|y=1)}{p(\x|y=0)}
    \label{eq:likelihood_ratio}
\end{equation}
is determined solely by the class-conditional densities and is therefore \textit{invariant} to changes in prior probabilities and misclassification costs \cite{duda2012pattern}. This invariance property suggests a two-stage approach to imbalanced classification: first, estimate the likelihood ratio from training data under favorable conditions (e.g., after rebalancing); second, combine this estimate with the deployment-time prior to obtain optimal decisions.

The optimal Bayes decision rule for binary classification under costs $C_{ij}$ (cost of deciding class $i$ when true class is $j$) and priors $P_0, P_1$ takes the form
\begin{equation}
    \qL(\x) \underset{y=0}{\overset{y=1}{\gtrless}} \Qc \cdot \Qp = Q
    \label{eq:bayes_rule_1}
\end{equation}
where $\Qc = (C_{10} - C_{00})/(C_{01} - C_{11})$ denotes the cost ratio and $\Qp = P_0/P_1$ denotes the imbalance ratio. Crucially, when class priors change from training ($\Qp^{\text{train}}$) to deployment ($\Qp^{\text{deploy}}$), only the threshold $Q$ requires adjustment---the likelihood ratio estimate $\qLhat(\x)$ remains valid.

This observation, while classical, has received surprisingly limited attention in the modern deep learning literature on imbalanced classification. Roca-Sotelo \textit{et al.} \cite{roca2016exploratory} provided foundational empirical evidence that likelihood ratios estimated from artificially balanced ``associated'' problems can solve the original imbalanced problem using shallow MLPs. Our work builds directly on their conceptual framework while extending it to modern deep architectures with theoretical guarantees, online adaptation, and regret analysis; we provide a detailed comparison in Section~\ref{sec:bayesian_related}.

\subsection{Bregman Divergences and Posterior Probability Estimation}

A key enabler of the likelihood ratio approach is the remarkable property of certain loss functions to yield outputs that directly estimate posterior probabilities. This property traces to Bregman divergence theory \cite{bregman1967relaxation}, which establishes that loss functions satisfying
\begin{equation}
    \frac{\partial C}{\partial o} = -g(o)(t - o), \quad g(o) > 0
    \label{eq:bregman_condition}
\end{equation}
produce optimal predictors $o^*(\x) = \E[t|\x]$, where $t \in \{-1, +1\}$ is the class label \cite{cid1999cost, cid2001structure}. For binary classification, this implies $o^*(\x) = 2P(y=1|\x) - 1$, enabling direct extraction of posterior probabilities from network outputs.

Common loss functions satisfying this property include:
\begin{itemize}
    \item \textbf{Squared error}: $C(o, t) = \frac{1}{2}(t - o)^2$, yielding $g(o) = 1$
    \item \textbf{Cross-entropy} (with appropriate parameterization): yielding $g(o) = (1 - o^2)^{-1}$
    \item \textbf{Exponential loss}: $C(o, t) = \exp(-to)$. The Bregman condition is verified at the optimal point $o^* = \E[t|\x]$; the resulting $g$ function depends on $o$ alone through $g(o) = \cosh(\text{arctanh}(o))^{-1}$ after reparameterization
\end{itemize}

This property is \textit{architecture-agnostic}: whether the predictor is a shallow network, a deep convolutional network, a transformer, or a graph neural network, the Bregman condition ensures posterior probability estimation at the output. This observation opens the door to leveraging the representational power of modern deep architectures while maintaining the principled Bayesian framework for handling imbalance.

\subsection{Contributions}

In this paper, we develop \textit{Online Bayesian Imbalanced Learning} (OBIL), a comprehensive framework that unifies Bayesian decision theory, Bregman divergence properties, and online learning to address class imbalance under distribution shift. Our contributions are as follows:

\begin{enumerate}
    \item \textbf{Theoretical Framework}: We extend the classical Bregman-proper scoring rule connection \cite{cid1999cost, cid2001structure} to modern deep architectures, proving that likelihood ratio estimates extracted from Bregman-calibrated networks are invariant to the training distribution's class balance under a precisely stated capacity condition (Section~\ref{sec:theory}).
    
    \item \textbf{Online Adaptation Algorithm}: We develop OBIL, an algorithm that maintains a likelihood ratio estimator while continuously adapting its decision threshold based on streaming estimates of the deployment class prior. We prove that OBIL requires no labeled data from the target distribution for threshold adaptation, relying instead on confident pseudo-labels derived from the likelihood ratio estimator. We characterize conditions under which this self-referential estimation procedure converges (Section~\ref{sec:algorithm}).
    
    \item \textbf{Regret Analysis}: We derive finite-sample regret bounds for OBIL, proving that under smoothly varying priors and a Tsybakov margin condition, our method achieves $O(\sqrt{T \log T})$ regret relative to an oracle with perfect prior knowledge. We identify conditions under which this bound is tight (Section~\ref{sec:regret}).
    
    \item \textbf{Bregman-Calibrated Deep Networks}: We provide practical guidelines for implementing Bregman-calibrated networks across modern architectures, including loss function selection, output layer design, and calibration verification procedures linked to our error propagation analysis (Section~\ref{sec:implementation}).
    
    \item \textbf{Empirical Validation}: We conduct extensive experiments on benchmark imbalanced datasets and simulated distribution shift scenarios, demonstrating that OBIL maintains robust performance under severe distribution shift while existing methods degrade. We also present failure mode analysis characterizing when OBIL breaks down (Section~\ref{sec:experiments}).
\end{enumerate}

\subsection{Paper Organization}

The remainder of this paper is organized as follows. Section~\ref{sec:related} reviews related work on imbalanced learning, distribution shift adaptation, and likelihood ratio estimation. Section~\ref{sec:preliminaries} establishes notation and reviews the necessary background on Bayesian decision theory and Bregman divergences. Section~\ref{sec:theory} develops the theoretical foundation for likelihood ratio estimation via Bregman-calibrated networks. Section~\ref{sec:algorithm} presents the OBIL algorithm and its variants. Section~\ref{sec:regret} provides regret analysis and theoretical guarantees. Section~\ref{sec:implementation} discusses practical implementation considerations. Section~\ref{sec:experiments} presents experimental results. Section~\ref{sec:conclusion} concludes with discussion and future directions.

%=====================================================================
\section{Related Work}
\label{sec:related}
%=====================================================================

\subsection{Class Imbalance in Machine Learning}

The class imbalance problem has motivated extensive research spanning several decades \cite{japkowicz2002class, he2009learning}. Early work recognized that standard classifiers optimize accuracy, which is dominated by majority class performance when classes are imbalanced \cite{provost2000machine}. This insight led to the development of alternative evaluation metrics---precision, recall, F-measure, area under the ROC curve (AUC), and area under the precision-recall curve (AUPRC)---that better capture minority class performance \cite{davis2006relationship, saito2015precision}.

\subsubsection{Data-Level Methods}

Data-level approaches modify the training distribution to reduce imbalance. Random undersampling removes majority instances but risks discarding informative examples \cite{drummond2003c4}. Random oversampling duplicates minority instances but may cause overfitting \cite{chawla2002smote}. SMOTE \cite{chawla2002smote} addresses overfitting by generating synthetic minority examples through interpolation in feature space. Subsequent variants improve upon SMOTE by focusing generation on borderline instances (Borderline-SMOTE \cite{han2005borderline}), adapting to local density (ADASYN \cite{he2008adasyn}), or combining generation with informed undersampling (SMOTE-ENN, SMOTE-Tomek \cite{batista2004study}).

\subsubsection{Algorithm-Level Methods}

Algorithm-level approaches modify the learning procedure itself. Cost-sensitive learning assigns higher misclassification costs to minority errors, effectively reweighting the loss function \cite{elkan2001foundations, zhou2006training}. This approach is theoretically grounded in Bayesian decision theory but requires specification of cost ratios, which may be unknown in practice. Threshold adjustment methods train standard classifiers but optimize the decision threshold for imbalanced metrics post-hoc \cite{maloof2003learning}. One-class classification learns a model of the minority class only, treating majority instances as ``outliers'' \cite{khan2014one}.

\subsubsection{Ensemble Methods}

Ensemble approaches combine multiple classifiers trained under different rebalancing conditions. Bagging-based methods such as UnderBagging \cite{barandela2003strategies} and OverBagging \cite{wang2009diversity} create diverse base learners through varied sampling. Boosting-based methods such as SMOTEBoost \cite{chawla2003smoteboost} and RUSBoost \cite{seiffert2010rusboost} integrate resampling with adaptive boosting. Recent work explores how ensemble diversity relates to imbalanced classification performance \cite{galar2012review}.

\subsection{Distribution Shift and Domain Adaptation}

Distribution shift occurs when training and test distributions differ \cite{quinonero2009dataset}. In the context of class imbalance, \textit{prior probability shift} (also called \textit{target shift} or \textit{label shift}) refers specifically to changes in class priors $P(y)$ while class-conditional distributions $P(\x|y)$ remain stable \cite{storkey2009training, lipton2018detecting}.

Classical approaches to label shift require labeled target data for importance reweighting \cite{shimodaira2000improving}. Recent work develops methods for detecting and correcting label shift using only unlabeled target data by exploiting the relationship between source and target label distributions \cite{lipton2018detecting, azizzadenesheli2019regularized, garg2020unified}. However, these methods typically assume access to a batch of target samples and do not address the online setting where data arrives sequentially and the prior may change continuously.

Domain adaptation more broadly addresses distribution shift across domains \cite{ben2010theory, ganin2016domain}. While powerful, most domain adaptation methods assume access to (unlabeled) target domain data during training, which may not be available when deployment conditions are unknown a priori.

\subsection{Likelihood Ratio Estimation}

Direct estimation of likelihood ratios (or equivalently, density ratios) has a rich history in statistics and machine learning \cite{sugiyama2012density}. Classical approaches estimate individual densities and compute their ratio, but this two-step procedure can amplify estimation errors. Direct density ratio estimation methods avoid this issue by estimating the ratio directly.

Kernel-based methods include Kernel Mean Matching (KMM) \cite{huang2006correcting}, Kullback-Leibler Importance Estimation Procedure (KLIEP) \cite{sugiyama2008direct}, and unconstrained Least-Squares Importance Fitting (uLSIF) \cite{kanamori2009least}. These methods minimize divergence measures between weighted source and target distributions.

Neural network approaches to density ratio estimation have emerged recently \cite{rhodes2020telescoping, choi2022density}, leveraging the expressive power of deep networks. These methods typically frame density ratio estimation as classification between samples from two distributions, exploiting the connection between optimal classifiers and likelihood ratios \cite{mohamed2016learning}.

Our work differs from existing density ratio estimation methods in several key aspects: (1)~we exploit the Bregman divergence structure to obtain calibrated posterior probability estimates from which likelihood ratios are derived; (2)~we develop an online adaptation procedure that does not require target domain samples; (3)~we provide regret guarantees specific to the imbalanced classification setting.

\subsection{Bayesian Approaches to Imbalanced Learning}
 \label{sec:bayesian_related}

Bayesian perspectives on imbalanced learning have been explored but remain underutilized. Roca-Sotelo \textit{et al.} \cite{roca2016exploratory} introduced the concept of ``associated problems''---artificially balanced versions of imbalanced problems---and showed that likelihood ratios estimated from associated problems transfer to the original problem. Their work provided the foundational insight that likelihood ratios from associated problems transfer to the original problem. However, it was limited to shallow MLPs with a single hidden layer, did not consider online adaptation to changing priors, and provided empirical rather than theoretical analysis of estimation quality. Our work builds directly on their conceptual framework while extending it significantly in scope, theory, and practical applicability.

Bayesian neural networks offer an alternative approach by maintaining uncertainty estimates over predictions \cite{gal2016dropout, blundell2015weight}. While principled, these methods do not directly address the prior shift problem and typically require expensive inference procedures.

\subsection{Logit Adjustment and Post-Hoc Correction}

Recent work on long-tail recognition has developed post-hoc logit adjustment methods that share our Bayesian motivation. Menon \textit{et al.} \cite{menon2021longtail} showed that adjusting logits by log-prior ratios yields Bayes-optimal classifiers under label shift, which is equivalent to threshold adjustment in the binary case. Cao \textit{et al.} \cite{cao2019ldam} proposed label-distribution-aware margin loss for long-tail learning. These methods adjust for known prior shifts but do not address the online setting where priors are unknown and changing.

Post-hoc calibration methods beyond temperature scaling, including Platt scaling \cite{platt1999probabilistic} and isotonic regression \cite{zadrozny2002transforming}, can improve posterior estimates but do not address prior shift adaptation. Our approach complements these methods: calibration techniques improve the quality of posterior estimates (and hence likelihood ratios), while OBIL adapts to prior shift at deployment time.

%=====================================================================
\section{Preliminaries}
\label{sec:preliminaries}
%=====================================================================

\subsection{Notation}

We consider binary classification with input space $\mathcal{X} \subseteq \R^d$ and label space $\mathcal{Y} = \{0, 1\}$. We use $y \in \{0,1\}$ for classification labels and $t = 2y - 1 \in \{-1,+1\}$ as the corresponding target for Bregman loss computation. All subsequent equations use $y$ unless explicitly noted as using $t$. Let $P_0 = P(y=0)$ and $P_1 = P(y=1) = 1 - P_0$ denote the class priors, and let $p(\x|y=i)$ denote the class-conditional density for class $i$. We define:
\begin{itemize}
    \item \textbf{Imbalance ratio}: $\Qp = P_0/P_1$ (ratio of majority to minority prior)
    \item \textbf{Cost ratio}: $\Qc = (C_{10} - C_{00})/(C_{01} - C_{11})$ where $C_{ij}$ is the cost of predicting class $i$ when true class is $j$
    \item \textbf{Likelihood ratio}: $\qL(\x) = p(\x|y=1)/p(\x|y=0)$
    \item \textbf{Combined threshold}: $Q = \Qc \cdot \Qp$
\end{itemize}

We use the following probability notation consistently throughout: $P(y=1|\x)$ denotes the true posterior under the original problem, $\tilde{P}(y=1|\x)$ denotes the true posterior under an associated problem, and $\hat{P}(y=1|\x)$ denotes an estimated posterior. We write $\Pr(\cdot)$ for general probability statements not conditioned on $\x$.

For a classifier $f: \mathcal{X} \to \R$ with decision rule $\hat{y} = \mathbb{1}[f(\x) > \tau]$, we denote the posterior probability estimate as $\hat{P}(y=1|\x)$ and the likelihood ratio estimate as $\qLhat(\x)$.

\subsection{Bayesian Decision Theory for Classification}

The Bayes-optimal classifier minimizes expected cost:
\begin{equation}
    \hat{y}^* = \arg\min_{j \in \{0,1\}} \sum_{i=0}^{1} C_{ji} P(y=i|\x)
\end{equation}

For binary classification with arbitrary costs, this reduces to deciding $\hat{y}=1$ when:
\begin{equation}
    \frac{P(y=1|\x)}{P(y=0|\x)} > \frac{C_{10} - C_{00}}{C_{01} - C_{11}} = \Qc
    \label{eq:posterior_rule}
\end{equation}

Using Bayes' theorem to express posteriors in terms of likelihoods:
\begin{equation}
    P(y=1|\x) = \frac{p(\x|y=1)P_1}{p(\x|y=1)P_1 + p(\x|y=0)P_0}
    \label{eq:bayes_posterior}
\end{equation}

Substituting into \eqref{eq:posterior_rule} and simplifying yields the likelihood ratio rule:
\begin{equation}
    \qL(\x) = \frac{p(\x|y=1)}{p(\x|y=0)} \underset{y=0}{\overset{y=1}{\gtrless}} \Qc \cdot \Qp = Q
    \label{eq:lr_rule}
\end{equation}

\textbf{Observation (Prior Invariance).} The likelihood ratio $\qL(\x) = p(\x|y=1)/p(\x|y=0)$ depends only on the class-conditional densities and is therefore invariant to changes in class priors and misclassification costs. This classical observation \cite{duda2012pattern} is the foundation of our approach: if we can estimate $\qL(\x)$ under favorable training conditions (e.g., balanced classes), the estimate remains valid when deployed under different priors.

\subsection{Bregman Divergences and Proper Scoring Rules}

\begin{definition}[Bregman Divergence]
Let $\phi: \Omega \to \R$ be a strictly convex, differentiable function. The Bregman divergence associated with $\phi$ is:
\begin{equation}
    D_\phi(p \| q) = \phi(p) - \phi(q) - \nabla\phi(q)^\top(p - q)
\end{equation}
\end{definition}

\begin{definition}[Proper Scoring Rule]
A loss function $C(o, t)$ is a \textit{proper scoring rule} if, for any distribution over $t$, the expected loss is minimized when $o$ equals the expected value of $t$:
\begin{equation}
    \arg\min_o \E_t[C(o, t)] = \E[t]
\end{equation}
\end{definition}

The connection between Bregman divergences and proper scoring rules is established by the following result:

\begin{theorem}[Savage, 1971; Cid-Sueiro et al., 1999]
\label{thm:bregman_proper}
A loss function $C(o, t)$ is a proper scoring rule if and only if it can be written as a Bregman divergence (plus terms independent of $o$). Equivalently, $C$ is proper if and only if:
\begin{equation}
    \frac{\partial C}{\partial o} = -g(o)(t - o)
\end{equation}
for some function $g(o) > 0$.
\end{theorem}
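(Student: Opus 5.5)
We will prove Theorem~\ref{thm:bregman_proper} in the binary setting of the paper, with $t\in\{-1,+1\}$ and $o\in(-1,1)$. Write $\pi=\Pr(t=+1)$, so $\E[t]=2\pi-1=:\mu$. The expected loss is
\begin{equation}
R(o,\mu)=\tfrac{1+\mu}{2}C(o,+1)+\tfrac{1-\mu}{2}C(o,-1).
\end{equation}
Properness means $\arg\min_o R(o,\mu)=\mu$ for every $\mu\in[-1,1]$. We will argue via the derivative characterization, and treat the Bregman representation as a consequence of it. Throughout, $C$ is assumed differentiable in $o$, and properness is taken in the strict sense (unique minimizer), which is what forces $g>0$ rather than $g\ge 0$.

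\textbf{Sufficiency.} Suppose $\partial C/\partial o=-g(o)(t-o)$ with $g>0$. Taking expectations gives
\begin{equation}
\partial_o R(o,\mu)=-g(o)(\mu-o).
\end{equation}
This is negative for $o<\mu$ and positive for $o>\mu$. Hence $R(\cdot,\mu)$ is strictly decreasing and then strictly increasing, so $\mu$ is its unique minimizer and $C$ is proper.

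\textbf{Necessity.} Suppose $C$ is proper. For every $\mu$, the first-order condition at the interior minimizer $o=\mu$ gives
\begin{equation}
(1+\mu)\,C'(\mu,+1)+(1-\mu)\,C'(\mu,-1)=0,
\end{equation}
where $C'$ denotes $\partial C/\partial o$. Now define
\begin{equation}
g(o):=-\frac{C'(o,+1)}{1-o}.
\end{equation}
The identity above gives $C'(o,-1)=-g(o)(1+o)=-g(o)(-1-o)$. By construction $C'(o,+1)=-g(o)(1-o)$. So both labels satisfy $C'(o,t)=-g(o)(t-o)$.

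\textbf{Positivity of $g$.} This is the main obstacle, because the first-order condition alone only yields some real-valued $g$. We argue by contradiction. Suppose $g(o_0)<0$ for some $o_0$. Then $\partial_o R(o,\mu)=-g(o)(\mu-o)$ has the wrong sign near $o=o_0$ when $\mu=o_0$, so $o_0$ would be a local maximum of $R(\cdot,o_0)$. This contradicts properness. If instead $g$ vanishes on an interval, then $R(\cdot,\mu)$ is flat there, which contradicts uniqueness of the minimizer. Isolated zeros of $g$ cannot be fully excluded without extra regularity. We will state this caveat, or equivalently adopt the convention $g\ge 0$ with $g>0$ almost everywhere.

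\textbf{Bregman form.} Given $g>0$, choose a strictly convex $\phi$ on $(-1,1)$ with $\phi''=g$. A direct computation gives
\begin{equation}
\partial_o D_\phi(t\|o)=-\phi''(o)(t-o)=-g(o)(t-o).
\end{equation}
Therefore $C(o,t)-D_\phi(t\|o)$ has zero $o$-derivative, so it is a function of $t$ alone, which is the claimed "Bregman divergence plus terms independent of $o$." For $t=\pm1$ at the boundary of the domain, we extend $\phi$ continuously to $[-1,1]$, or interpret the terms as limits.

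The converse of the Bregman statement holds as well. If $C=D_\phi(t\|o)+h(t)$ with $\phi$ strictly convex, then $\phi''>0$, and the sufficiency step applies with $g=\phi''$. This closes the equivalence. Finally, we will remark that the general (non-binary) Savage statement follows by the same argument applied to the vector gradient.
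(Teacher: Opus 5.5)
The paper does not prove this theorem; it cites Savage and Cid-Sueiro et al. The closest argument in the paper is the proof of Theorem~\ref{thm:bregman_char}. That proof only sets the expected derivative $-g(o)(\E[t|\x]-o)$ to zero and dismisses the converse with ``in general.'' Your proposal uses the same first-order idea but goes well beyond it. The sign analysis of $\partial_o R(o,\mu)=-g(o)(\mu-o)$ yields a unique global minimizer rather than a mere stationary point. Your necessity direction actually constructs $g$ from the first-order condition at $o=\mu$ and then builds the Bregman potential. The outline is sound, but several statements need correcting.

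\textbf{Positivity of $g$.} Your positivity caveat is not a regularity artifact: the strict ``$g>0$'' in the only-if direction is false. The loss $C(o,t)=o^4/4-t\,o^3/3$ has $\partial_o C=-o^2(t-o)$, so $\partial_o R(o,\mu)=o^2(o-\mu)$. This vanishes only at isolated points, so $R(\cdot,\mu)$ is strictly monotone on each side of $\mu$ and $C$ is strictly proper, yet $g(0)=0$. Hence strict properness does not ``force $g>0$.'' The exact condition is $g\ge 0$ with $\{g=0\}$ containing no interval, which is precisely what your flatness argument yields. (``$g>0$ a.e.'' is sufficient but not necessary.)

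The same example ($\phi=o^4/12$) refutes ``$\phi$ strictly convex, hence $\phi''>0$'' in your converse. You do not need that claim, since $R(o,\mu)-R(\mu,\mu)=D_\phi(\mu\|o)>0$ for $o\neq\mu$ by strict convexity alone.

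\textbf{Continuity in the local-maximum step.} Your argument against $g(o_0)<0$ needs $g<0$ on a neighbourhood of $o_0$, i.e.\ continuity of $C'$, which mere differentiability does not provide. Either assume it (the paper takes $g$ continuous in Theorem~\ref{thm:bregman_char}) or argue pointwise as Savage does. Set $s(o)=\tfrac12\bigl(C(o,+1)-C(o,-1)\bigr)$ and $G(\mu)=R(\mu,\mu)$. Properness gives $G(\mu)\le G(o)+s(o)(\mu-o)$ for all $o,\mu$. Adding this inequality for two points shows $s$ is nonincreasing, and your formulas give $s'=-g$, so $g\ge 0$ everywhere.

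This route also gives the potential canonically: $\phi=-G$, with $C(o,t)=G(o)+G'(o)(t-o)$ identically. That form avoids evaluating $\phi$ at $t=\pm1$. This matters, because $\phi(\pm1)$ can be infinite (e.g.\ $g(o)=(1-o^2)^{-2}$), so ``extend continuously or take limits'' does not always work.

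\textbf{Minor slips.}
\begin{itemize}
\item The correct value is $C'(o,-1)=+g(o)(1+o)$. Your final form $-g(o)(-1-o)$ is right; the middle expression is not.
\item Properness should be imposed only for $\mu\in(-1,1)$, since $o$ ranges over the open interval.
\item The multiclass case is not ``the same argument.'' There the gradient becomes $-H(o)(t-o)$ with a matrix $H$ subject to integrability conditions.
\end{itemize}
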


For binary classification with $t \in \{-1, +1\}$, a network trained to minimize a proper scoring rule produces outputs satisfying:
\begin{equation}
    o^*(\x) = \E[t|\x] = (+1)P(t=+1|\x) + (-1)P(t=-1|\x) = 2P(y=1|\x) - 1
\end{equation}

This enables direct extraction of posterior probabilities:
\begin{equation}
    \hat{P}(y=1|\x) = \frac{o(\x) + 1}{2}
    \label{eq:posterior_from_output}
\end{equation}

\subsection{From Posteriors to Likelihood Ratios}

Given a posterior probability estimate $\hat{P} = \hat{P}(y=1|\x)$ obtained from a network trained under imbalance ratio $\Qtilde_P$, we can recover the likelihood ratio using the inverse of \eqref{eq:bayes_posterior}:
\begin{equation}
    \qLhat(\x) = \Qtilde_P \cdot \frac{\hat{P}}{1 - \hat{P}}
    \label{eq:lr_from_posterior}
\end{equation}

Combining with \eqref{eq:posterior_from_output}:
\begin{equation}
    \qLhat(\x) = \Qtilde_P \cdot \frac{1 + o(\x)}{1 - o(\x)}
    \label{eq:lr_from_output}
\end{equation}

This likelihood ratio estimate, obtained under training conditions $\Qtilde_P$, can be applied to make decisions under deployment conditions $\Qp$ using rule \eqref{eq:lr_rule}.

%=====================================================================
\section{Theoretical Framework}
\label{sec:theory}
%=====================================================================

In this section, we establish the theoretical foundations for likelihood ratio estimation via Bregman-calibrated deep networks. We prove that networks trained with proper scoring rules produce valid likelihood ratio estimates that are invariant to training class balance, and we characterize the conditions under which these estimates transfer to deployment scenarios with different class priors.

\subsection{Likelihood Ratio Estimation from Associated Problems}

The central challenge in imbalanced classification is that standard training procedures produce biased estimators when class frequencies are severely skewed. Our key insight is to transform the original \textit{hard} problem into an \textit{associated} problem that is easier to solve, estimate the likelihood ratio from this associated problem, and then apply the estimate to the original problem.

\begin{definition}[Associated Problem]
\label{def:associated}
Given an original classification problem with class priors $(P_0, P_1)$ and cost ratio $\Qc$, an \textit{associated problem} is defined by modified priors $(\tilde{P}_0, \tilde{P}_1)$ and cost ratio $\Qtilde_C$ such that:
\begin{enumerate}
    \item The class-conditional densities $p(\x|y=0)$ and $p(\x|y=1)$ remain unchanged
    \item The modified imbalance ratio $\Qtilde_P = \tilde{P}_0/\tilde{P}_1$ is more favorable for learning (typically closer to 1)
\end{enumerate}
\end{definition}

The following theorem establishes that likelihood ratio estimates from associated problems are valid for the original problem:

\begin{theorem}[Transferability of Likelihood Ratio Estimates]
\label{thm:transfer}
Let $\qLhat(\x)$ be an estimate of the likelihood ratio obtained from an associated problem with imbalance ratio $\Qtilde_P$. If the estimate satisfies
\begin{equation}
    \qLhat(\x) = \Qtilde_P \cdot \frac{\tilde{P}(y=1|\x)}{1 - \tilde{P}(y=1|\x)}
    \label{eq:lr_estimate}
\end{equation}
where $\tilde{P}(y=1|\x)$ is the true posterior probability under the associated problem, then $\qLhat(\x) = \qL(\x)$, the true likelihood ratio, regardless of $\Qtilde_P$.
\end{theorem}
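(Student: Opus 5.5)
The plan is a direct computation: apply Bayes' theorem in the form \eqref{eq:bayes_posterior} to the associated problem, substitute into \eqref{eq:lr_estimate}, and watch the associated priors cancel. By Definition~\ref{def:associated}, the associated problem keeps the class-conditional densities $p(\x|y=0)$ and $p(\x|y=1)$ and changes only the priors to $(\tilde{P}_0,\tilde{P}_1)$. So the true associated posterior is
\begin{equation*}
\tilde{P}(y=1|\x) = \frac{p(\x|y=1)\tilde{P}_1}{p(\x|y=1)\tilde{P}_1 + p(\x|y=0)\tilde{P}_0},
\end{equation*}
and correspondingly $1-\tilde{P}(y=1|\x) = p(\x|y=0)\tilde{P}_0 / \bigl(p(\x|y=1)\tilde{P}_1 + p(\x|y=0)\tilde{P}_0\bigr)$.

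The key steps, in order, are as follows.
\begin{enumerate}
\item Form the posterior odds. The common denominators cancel, giving
\begin{equation*}
\frac{\tilde{P}(y=1|\x)}{1-\tilde{P}(y=1|\x)} = \frac{p(\x|y=1)}{p(\x|y=0)}\cdot\frac{\tilde{P}_1}{\tilde{P}_0} = \qL(\x)\,\Qtilde_P^{-1}.
\end{equation*}
\item Multiply by $\Qtilde_P = \tilde{P}_0/\tilde{P}_1$ as prescribed in \eqref{eq:lr_estimate}. This yields $\qLhat(\x) = \qL(\x)$.
\item Observe that the prior factor cancels identically, so the conclusion holds for every admissible $\Qtilde_P$. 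This gives the claimed independence from the associated balance, and hence validity for the original problem, whose densities coincide with those of the associated problem.
\end{enumerate}

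There is no genuine analytic obstacle. The only care needed is in the well-definedness conditions, which I would state explicitly. We need $\tilde{P}_0,\tilde{P}_1\in(0,1)$ so that $\Qtilde_P$ is finite and positive. We also need to restrict to $\x$ with $p(\x|y=0)>0$, so that $\tilde{P}(y=1|\x)<1$ and $\qL(\x)$ is defined. Where $p(\x|y=0)=0<p(\x|y=1)$, both sides equal $+\infty$ under the usual convention. Where both densities vanish, the point lies outside the support and can be ignored, since the identity is only needed almost everywhere with respect to the marginal.

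Finally, I would remark that the hypothesis is an exactness assumption: the theorem transfers the \emph{true} associated posterior. The substantive question of when a Bregman-trained network actually attains $\tilde{P}(y=1|\x)$ is deferred to Theorem~\ref{thm:bregman_proper} together with \eqref{eq:posterior_from_output}, and is not part of this proof.
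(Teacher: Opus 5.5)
Your proposal is correct and matches the paper's proof: apply Bayes' theorem under the associated priors $(\tilde{P}_0,\tilde{P}_1)$, take the posterior odds so the common denominator cancels, and multiply by $\tilde{Q}_P=\tilde{P}_0/\tilde{P}_1$ so the priors cancel and leave $p(\x|y=1)/p(\x|y=0)$. Your explicit well-definedness conditions (priors in $(0,1)$, and restricting to points with $p(\x|y=0)>0$ or else using the $+\infty$ convention) are a small, welcome tightening that the paper leaves implicit.
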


\begin{proof}
Under the associated problem with priors $(\tilde{P}_0, \tilde{P}_1)$, Bayes' theorem gives:
\begin{equation}
    \tilde{P}(y=1|\x) = \frac{p(\x|y=1)\tilde{P}_1}{p(\x|y=1)\tilde{P}_1 + p(\x|y=0)\tilde{P}_0}
\end{equation}

Substituting into \eqref{eq:lr_estimate}:
\begin{align}
    \qLhat(\x) &= \Qtilde_P \cdot \frac{\tilde{P}(y=1|\x)}{1 - \tilde{P}(y=1|\x)} \\
    &= \frac{\tilde{P}_0}{\tilde{P}_1} \cdot \frac{p(\x|y=1)\tilde{P}_1}{p(\x|y=0)\tilde{P}_0} \\
    &= \frac{p(\x|y=1)}{p(\x|y=0)} = \qL(\x)
\end{align}
The $\tilde{P}_0$ and $\tilde{P}_1$ terms cancel exactly, leaving only the ratio of class-conditional densities.
\end{proof}

\subsection{Bregman-Calibrated Networks}

We now establish conditions under which neural networks produce the posterior probability estimates required by Theorem~\ref{thm:transfer}.

\begin{definition}[Bregman-Calibrated Network]
\label{def:bregman_calibrated}
A neural network $f_\theta: \mathcal{X} \to (-1, 1)$ is \textit{Bregman-calibrated} if it is trained to minimize a loss function $\mathcal{L}(\theta)$ such that at the optimum:
\begin{equation}
    f_{\theta^*}(\x) = \E[t|\x] = 2P(y=1|\x) - 1
\end{equation}
for all $\x \in \mathcal{X}$, where $t = 2y-1 \in \{-1, +1\}$ is the label encoding.
\end{definition}

\begin{theorem}[Characterization of Bregman-Calibrated Losses]
\label{thm:bregman_char}
A loss function $C(o, t)$ produces a Bregman-calibrated network if and only if it satisfies:
\begin{equation}
    \frac{\partial C(o, t)}{\partial o} = -g(o)(t - o)
    \label{eq:bregman_condition_full}
\end{equation}
for some continuous function $g: (-1, 1) \to \R^+$ with $g(o) > 0$ for all $o \in (-1, 1)$.
\end{theorem}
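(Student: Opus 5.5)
The plan is to reduce the network-level statement to a pointwise statement about the conditional risk. By Definition~\ref{def:bregman_calibrated}, we need the minimizer of $\mathcal{L}(\theta)=\E_{\x,t}[C(f_\theta(\x),t)]$ to satisfy $f_{\theta^*}(\x)=\E[t|\x]$. Under the implicit capacity assumption that the function class is rich enough to realize any measurable map $\mathcal{X}\to(-1,1)$, minimizing $\mathcal{L}$ is equivalent to minimizing, for almost every $\x$, the conditional risk
\begin{equation*}
R(o;\eta)=\tfrac{1+\eta}{2}\,C(o,+1)+\tfrac{1-\eta}{2}\,C(o,-1),
\end{equation*}
where $\eta=\E[t|\x]\in(-1,1)$. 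So the theorem reduces to this: $\arg\min_{o} R(o;\eta)=\eta$ for every $\eta\in(-1,1)$ if and only if condition \eqref{eq:bregman_condition_full} holds with $g>0$ continuous. This is essentially Theorem~\ref{thm:bregman_proper} specialized to $t\in\{-1,+1\}$, but I would prove both directions directly.

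For sufficiency, differentiate under the expectation:
\begin{equation*}
R'(o;\eta)=\E[-g(o)(t-o)\mid\x]=-g(o)(\eta-o).
\end{equation*}
Since $g>0$, we have $R'<0$ for $o<\eta$ and $R'>0$ for $o>\eta$. Hence $R$ is strictly decreasing then strictly increasing, and $o=\eta$ is the unique minimizer. Continuity of $g$ only ensures $C(\cdot,t)$ is $C^1$, so the sign argument is legitimate.

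For necessity, assume $\arg\min_o R(o;\eta)=\eta$ for all $\eta$, and assume $C(\cdot,\pm1)$ are differentiable. The first-order condition at the interior point $o=\eta$ gives
\begin{equation*}
(1+\eta)C'(\eta,+1)+(1-\eta)C'(\eta,-1)=0 \quad\text{for every }\eta\in(-1,1).
\end{equation*}
Renaming $\eta$ as $o$, define $g(o)=-C'(o,+1)/(1-o)$. The identity then yields $C'(o,-1)=-g(o)(-1-o)$, and by construction $C'(o,+1)=-g(o)(1-o)$, which together give \eqref{eq:bregman_condition_full} for both values of $t$.

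The main obstacle is showing $g>0$ strictly rather than merely $g\ge 0$. My first approach is to use that $\eta$ must be the unique minimizer. If $C'(o,+1)>0$ at some $o$, then $C'(o,-1)<0$ there, so $R'(o;\eta')$ changes sign differently for $\eta'$ near $o$, contradicting minimality at $\eta'$. Concretely, I would use the relation $R'(o;\eta)=-g(o)(\eta-o)$, which now holds by the identity just derived. If $g(o_0)<0$, then $o_0$ is a local maximum of $R(\cdot;\eta)$ for $\eta$ near $o_0$, and global minimality at $\eta$ fails. If $g$ vanishes on an interval, $R(\cdot;\eta)$ is flat there and uniqueness fails. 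Isolated zeros of $g$ can't be excluded by this argument alone. I would either impose strict properness (unique minimizer), which rules out intervals, or note that the paper's statement should be read with $g>0$ up to a null set.

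Continuity of $g$ follows if $C$ is $C^1$, which I would state as a standing regularity assumption. Finally, I would remark that the capacity/realizability assumption is exactly what lifts the pointwise result to the network optimum $\theta^*$.
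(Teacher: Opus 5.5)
Your proposal is correct. For sufficiency it follows the paper's route: both reduce to the pointwise stationarity condition $-g(o)(\E[t|\x]-o)=0$ and invoke $g>0$. Your sign analysis of $R'(o;\eta)=-g(o)(\eta-o)$ adds something the paper omits, namely that $\eta$ is the unique global minimizer and not merely the only stationary point.

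The real difference is necessity. The paper dismisses it in one sentence: a loss violating the condition gives $o^*\neq\E[t|\x]$ ``in general''. You instead derive $(1+o)C'(o,+1)+(1-o)C'(o,-1)=0$ from interior minimality at every $o$, define $g(o)=-C'(o,+1)/(1-o)$, and verify the identity for both labels. Doing so surfaces three hypotheses the paper never states:
\begin{enumerate}
\item $C(\cdot,\pm1)\in C^1$, which is what makes $g$ continuous;
\item a function class rich enough to realize arbitrary maps (the paper's capacity condition \eqref{eq:capacity_condition} is enough for sufficiency but not for necessity);
\item calibration demanded for every data distribution, so that every $\eta\in(-1,1)$ actually occurs as a posterior value.
\end{enumerate}

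Your doubt about strict positivity is well founded, and it is a defect of the statement rather than of your argument. Take $g(o)=o^2$, i.e.\ $C(o,+1)=-o^3/3+o^4/4$ and $C(o,-1)=o^3/3+o^4/4$. Then $R'(o;\eta)=o^2(o-\eta)$, so $R(\cdot;\eta)$ is strictly decreasing on $(-1,\eta]$ and strictly increasing on $[\eta,1)$. The loss is therefore strictly proper, yet $g(0)=0$. So the ``only if'' direction with $g>0$ everywhere is false, and the paper's proof does not detect this.

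``Positive up to a null set'' is not the right repair either. A continuous $g\ge 0$ vanishing exactly on a fat Cantor set also yields a strictly proper loss, so zero sets of positive measure are allowed. The correct characterization, for $C^1$ losses with a unique minimizer, is: $g\ge 0$ continuous, with a zero set containing no open interval.
\begin{itemize}
\item Your sign argument proves the ``if'' part verbatim, since a function with $R'\le 0$ that is constant on no subinterval is strictly decreasing.
\item Your local-maximum and flatness arguments prove the ``only if'' part.
\item If you want $g\ge0$ without invoking continuity, combine the properness inequalities at $\eta=o_1$ and $\eta=o_2$ for $o_1<o_2$. This shows $C(\cdot,+1)$ is nonincreasing, hence $C'(o,+1)\le 0$ directly.
\end{itemize}
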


\begin{proof}
The expected loss for a predictor $o(\x)$ is:
\begin{equation}
    \mathcal{L}[o] = \E_{\x}\left[\E_{t|\x}[C(o(\x), t)]\right]
\end{equation}

Taking the functional derivative and setting to zero:
\begin{align}
    \frac{\delta \mathcal{L}}{\delta o}(\x) &= \E_{t|\x}\left[\frac{\partial C}{\partial o}\right] = 0 \\
    &= -g(o(\x)) \E_{t|\x}[t - o(\x)] = 0 \\
    &= -g(o(\x)) \left(\E[t|\x] - o(\x)\right) = 0
\end{align}

Since $g(o) > 0$, we must have $o^*(\x) = \E[t|\x]$.

For the converse, if $C$ does not satisfy \eqref{eq:bregman_condition_full}, then the first-order condition yields $o^* \neq \E[t|\x]$ in general.
\end{proof}

\begin{corollary}[Architecture Independence]
\label{cor:architecture}
Let $\mathcal{F}_\Theta$ denote the function class realized by a neural network architecture with parameter space $\Theta$. The Bregman calibration property (Definition~\ref{def:bregman_calibrated}) depends only on the loss function, not on the network architecture, provided the following \emph{capacity condition} is satisfied:
\begin{equation}
    \exists\, \theta^* \in \Theta \text{ such that } 
    f_{\theta^*}(\x) = 2P(y=1|\x) - 1 
    \quad \forall\, \x \in \mathcal{X}.
    \label{eq:capacity_condition}
\end{equation}
That is, the architecture must be expressive enough that the true posterior mapping $\x \mapsto 2P(y=1|\x)-1$ lies within $\mathcal{F}_\Theta$. Under this condition, any architecture (MLP, CNN, Transformer, GNN) trained with a Bregman-type loss will produce exact posterior probability estimates at convergence. When the capacity condition holds only approximately---i.e., $\min_{\theta \in \Theta} \|f_\theta - (2P(y=1|\cdot)-1)\|_\infty \leq \epsilon_{approx}$---the posterior estimation error is bounded by $\epsilon_{approx}$, which propagates to the likelihood ratio estimate through Theorem~\ref{thm:error_prop}.
\end{corollary}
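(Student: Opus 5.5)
The corollary has two parts: an exact statement under the capacity condition \eqref{eq:capacity_condition}, and an approximate statement under an $\epsilon_{approx}$ version of it. I will prove them in that order.

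\textbf{Exact case.} I first reduce it to Theorem~\ref{thm:bregman_char}. Write the population risk over the realizable class as $\mathcal{L}(\theta)=\E_{\x}\bigl[\E_{t|\x}[C(f_\theta(\x),t)]\bigr]$. The key observation is a pointwise inequality. For each fixed $\x$, Theorem~\ref{thm:bregman_char} (equivalently Theorem~\ref{thm:bregman_proper}) says that the conditional risk $o\mapsto \E_{t|\x}[C(o,t)]$ is minimized over all $o\in(-1,1)$ at $o^*=\E[t|\x]$. Positivity of $g$ makes the derivative $-g(o)(\E[t|\x]-o)$ change sign only at $o^*$, so this minimizer is unique. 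Integrating in $\x$ gives $\mathcal{L}(\theta)\ge \E_{\x}\bigl[\min_o \E_{t|\x}C(o,t)\bigr]$ for every $\theta$.

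By the capacity condition there is a $\theta^*$ with $f_{\theta^*}(\x)=\E[t|\x]$ for all $\x$, and this $\theta^*$ attains the lower bound. Hence the constrained minimum over $\Theta$ equals the unconstrained functional minimum. Conversely, suppose some minimizer $\theta'$ of $\mathcal{L}$ over $\Theta$ differs from $\E[t|\cdot]$ on a set of positive $\x$-measure. Then uniqueness of the pointwise minimizer makes its risk strictly larger, which is a contradiction. So every minimizer equals $2P(y=1|\x)-1$ almost everywhere.

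Nothing in this argument refers to the structure of $\mathcal{F}_\Theta$, only to its membership property. That is exactly the architecture-independence claim. I will state it as ``almost everywhere with respect to $p(\x)$'' and note that pointwise equality for all $\x$ requires continuity or full support.

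\textbf{Approximate case.} Here I have to be careful, and this is the step I expect to be the main obstacle. The risk minimizer over $\Theta$ need not be the sup-norm-closest element, so the claim ``error $\le\epsilon_{approx}$'' does not follow automatically. My plan is to define the network estimate as the best sup-norm approximant $\bar\theta\in\arg\min_\theta\|f_\theta-(2P(y=1|\cdot)-1)\|_\infty$. For that choice the bound on $|f_{\bar\theta}(\x)-\E[t|\x]|$ holds by definition, and \eqref{eq:posterior_from_output} then gives $|\hat P(y=1|\x)-P(y=1|\x)|\le\epsilon_{approx}/2\le\epsilon_{approx}$.

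For the actual risk minimizer I would add a sandwich argument. Assume $g$ is bounded between constants $0<m\le g\le M$ on the relevant range. The conditional excess risk is $\int_{o^*}^{o} g(s)(s-o^*)\,ds$, which lies between $\tfrac m2(o-o^*)^2$ and $\tfrac M2(o-o^*)^2$. This yields $\|f_{\hat\theta}-o^*\|_{L^2}\le\sqrt{M/m}\,\epsilon_{approx}$. That is a constant-factor, $L^2$ version of the claim, and I would flag that the sup-norm statement as written holds exactly only for the best approximant.

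Finally, I would not re-derive the propagation of this error to $\qLhat$. I would simply invoke Theorem~\ref{thm:error_prop}, plugging the posterior error into \eqref{eq:lr_from_posterior}.
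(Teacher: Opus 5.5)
Your proposal is correct and more careful than what the paper provides. The paper gives no separate proof of this corollary. It is presented as an immediate consequence of Theorem~\ref{thm:bregman_char}, whose proof sets the \emph{unconstrained} functional derivative to zero; the capacity condition then just asserts that this optimum is realizable, and the approximate clause is stated without argument.

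Your exact case keeps that skeleton but closes two holes. First, your sign argument shows that $\partial_o \E_{t|\x}[C(o,t)] = g(o)(o-o^*)$ is negative left of $o^*$ and positive right of it. This makes $o^*$ the unique global minimizer of the conditional risk, not merely a critical point. Second, your pointwise bound $\mathcal{L}(\theta)\ge \E_{\x}[\min_o \E_{t|\x}C(o,t)]$ is what actually justifies moving to minimization over $\Theta$. Stationarity in $\theta$ does not imply the pointwise first-order condition, so the paper's functional-derivative argument says nothing by itself about minimizers over $\mathcal{F}_\Theta$. Your argument shows every global minimizer equals $2P(y=1|\x)-1$ $p$-a.e. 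One small correction: upgrading a.e.\ equality to equality for all $\x\in\mathcal{X}$ needs continuity \emph{and} full support of $p(\x)$, not either one alone.

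On the approximate clause, your caution is warranted. Read as a sup-norm bound on the trained (risk-minimizing) network, the paper's claim is false in general. Take squared loss, $o^*$ with values in $[-\tfrac14,\tfrac14]$ and $\epsilon<\tfrac14$, and a class containing $f_1=o^*+\epsilon$ and $f_2=o^*+\tfrac12\mathbb{1}_A$ with $p(A)=\delta<4\epsilon^2$. The excess risks are $\epsilon^2/2$ and $\delta/8$, so the risk selects $f_2$, whose posterior error is $\tfrac14>\epsilon_{approx}=\epsilon$.

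So your best-approximant version (with the extra factor $\tfrac12$) and your $L^2$ sandwich are the right salvage. For squared error, $g\equiv1$ reproduces exactly $\epsilon_{approx}$ in $L^2(p)$. Be precise about where each constant is needed. $m$ must lower-bound $g$ on the whole output range, because the minimizer's outputs are uncontrolled; for the logistic form this is free, since $g\ge1$. $M$ is only needed within $\epsilon_{approx}$ of $o^*$, which for the logistic form requires posteriors bounded away from $0$ and $1$.

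Finally, Theorem~\ref{thm:error_prop} is pointwise, so your plan to simply plug in works directly only for the sup-norm bound. For the risk minimizer you need a Markov step. This gives the relative-LR guarantee outside an exceptional set of $p$-measure at most $M\epsilon_{approx}^2/(4ms^2)$, where $s$ is the posterior-error threshold.
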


\begin{remark}
In practice, deep networks are universal approximators \cite{hornik1989multilayer}: for any $\epsilon_{approx} > 0$, there exists a network of sufficient width/depth satisfying the approximate capacity condition. However, the required network size depends on the complexity of the posterior mapping and may be exponential in dimension for certain problem classes. The approximation error contributes to the $\epsilon_{LR}$ term in our regret analysis (Theorem~\ref{thm:regret}), and practical calibration verification (Section~\ref{sec:implementation}) is essential to ensure this error is controlled.
\end{remark}

\subsection{Common Bregman-Type Loss Functions}

We catalog several loss functions satisfying the Bregman condition:

\begin{proposition}[Squared Error Loss]
\label{prop:squared}
The squared error loss $C(o, t) = \frac{1}{2}(t - o)^2$ satisfies \eqref{eq:bregman_condition_full} with $g(o) = 1$.
\end{proposition}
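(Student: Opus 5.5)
The argument is a direct computation of the partial derivative of the squared error loss with respect to the output $o$, followed by reading off $g$ against the form required in \eqref{eq:bregman_condition_full}. For $C(o,t) = \frac{1}{2}(t-o)^2$, the chain rule gives
\begin{equation*}
    \frac{\partial C(o,t)}{\partial o} = \frac{1}{2}\cdot 2(t-o)\cdot(-1) = -(t-o).
\end{equation*}
This matches $-g(o)(t-o)$ with the constant choice $g(o) \equiv 1$.

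The remaining task is to confirm that this $g$ is admissible in the sense of Theorem~\ref{thm:bregman_char}, namely that it is a continuous function $g:(-1,1)\to\R^+$ with $g(o)>0$ everywhere on $(-1,1)$. A constant function equal to $1$ is trivially continuous and strictly positive, so the condition holds. The identity is polynomial in $o$ and $t$, so it holds for every $t\in\{-1,+1\}$ and every $o\in(-1,1)$, and indeed for all real $o$.

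As a sanity check I would add one line: by Theorem~\ref{thm:bregman_char}, the minimizer of $\E_{t|\x}[C(o,t)]$ is then $o^*(\x)=\E[t|\x]=2P(y=1|\x)-1$. This can be verified independently, since $\E_{t|\x}[\frac{1}{2}(t-o)^2]$ is a strictly convex quadratic in $o$ whose stationary point is the conditional mean. The same observation shows that $C$ is a Bregman divergence with $\phi(o)=\frac{1}{2}o^2$, consistent with Theorem~\ref{thm:bregman_proper}.

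There is no real obstacle here. The only point needing care is the sign convention: the derivative comes out as $-(t-o)$ rather than $(o-t)$ written differently, and it must be matched to the factor $-g(o)(t-o)$ so that $g=+1$, not $-1$.
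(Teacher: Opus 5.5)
Your proof is correct: differentiating gives $\partial C/\partial o = -(t-o)$, which matches \eqref{eq:bregman_condition_full} with the constant, continuous, strictly positive choice $g \equiv 1$. The paper states this proposition without a written proof and treats exactly this one-line differentiation as immediate, so your argument (including the optional check that $C$ is the Bregman divergence generated by $\phi(o)=\tfrac{1}{2}o^2$) matches the intended approach.
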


\begin{proposition}[Logistic Loss]
\label{prop:logistic}
The logistic loss, when parameterized as
\begin{equation}
    C(o, t) = \log\left(1 + \exp(-t \cdot \text{arctanh}(o))\right)
\end{equation}
satisfies \eqref{eq:bregman_condition_full} with $g(o) = (1 - o^2)^{-1}$.
\end{proposition}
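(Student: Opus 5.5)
The plan is a direct computation: differentiate $C(o,t)=\log(1+\exp(-t\,\text{arctanh}(o)))$ in $o$ by the chain rule, then rewrite the result in the form $-g(o)(t-o)$ required by \eqref{eq:bregman_condition_full}. Since $t\in\{-1,+1\}$, I will handle the two label values separately. The check then amounts to verifying that both cases give the same positive function $g(o)$, which is exactly what Theorem~\ref{thm:bregman_char} asks for.

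\textbf{Step 1 (chain rule).} Set $z=\text{arctanh}(o)$, so that $dz/do=(1-o^2)^{-1}$. Writing $\sigma$ for the logistic sigmoid, differentiating the softplus gives
\begin{equation*}
\frac{\partial C}{\partial o} = -\frac{t\,\sigma(-tz)}{1-o^2} = -\frac{1}{1-o^2}\cdot\frac{t}{1+e^{tz}} .
\end{equation*}
The factor $(1-o^2)^{-1}$ is already the claimed $g(o)$.

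\textbf{Step 2 (case check).} It remains to show $t/(1+e^{tz}) = t-o$ for $t=\pm1$. For $t=+1$ this reads $\sigma(-z)=1-o$. For $t=-1$ it reads $-\sigma(z)=-1-o$.

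\textbf{Step 3 (tanh--sigmoid identity).} I would express $e^{z}$ through $o$ using $\tanh$. The relevant identity is $2\sigma(a)-1=\tanh(a/2)$, so $\sigma(\pm a)=(1\pm\tanh(a/2))/2$. Substituting $o=\tanh z$ into this identity yields the per-case forms.

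\textbf{Main obstacle.} Step 3 is where I expect the argument to break, because the factor of two in the identity does not cancel. With $o=\tanh z$ we have $e^{z}=\sqrt{(1+o)/(1-o)}$. This gives $\sigma(-z)\ne 1-o$, and even $\sigma(-z)\ne(1-o)/2$, which would be the natural target. The clean identity is $1/(1+e^{2z})=(1-o)/2$, and it needs logit $2z$, not $z$.

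To test whether the result survives with some other positive $g$, I would compare the two cases directly. A common $g$ exists iff $(1+e^{z})(1-o)=(1+e^{-z})(1+o)$. Since $(1+e^{z})/(1+e^{-z})=e^{z}$, this is equivalent to $e^{z}=(1+o)/(1-o)=e^{2z}$. That holds only at $o=0$.

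So I would first carry out the chain-rule computation and then attempt Step 3 under the stated parameterization. I expect this to show that the proposition cannot be verified as worded. In that case the honest outcome is to report that the claimed identity holds only at $o=0$, and the proposition as stated fails at every other $o$. The factor of two in the argument of $\text{arctanh}$ is exactly what is missing.
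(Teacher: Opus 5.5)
Your computation is correct, and so is your conclusion: as worded, the proposition is false. The paper gives no proof of it; it is simply asserted next to the squared-error case, so there is no argument there that your check overlooks.

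The decisive step is your common-$g$ test. Equating the two label cases forces $e^{z}=e^{2z}$, so for $o\neq 0$ no positive function of $o$ alone can put $\partial C/\partial o$ in the form \eqref{eq:bregman_condition_full}. Changing $g$ therefore cannot repair the defect.

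A quick check shows the failure matters, not just the form of $g$. With $p=P(t=+1\mid\x)$, the expected loss has $z$-derivative $\sigma(z)-p$. So the minimizer is $o^*=\tanh\bigl(\log\frac{p}{1-p}\bigr)$, whereas $\E[t\mid\x]=2p-1=\tanh\bigl(\tfrac12\log\frac{p}{1-p}\bigr)$. At $p=0.75$ these are $0.8$ and $0.5$, so \eqref{eq:posterior_from_output} would be miscalibrated under this loss.

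Your repair is right. Take $C(o,t)=\log\bigl(1+\exp(-2t\,\text{arctanh}(o))\bigr)$, which is the cross-entropy of $\hat P=(1+o)/2$. Then $e^{2\,\text{arctanh}(o)}=(1+o)/(1-o)$, and $\partial C/\partial o=-(1-o^2)^{-1}(t-o)$ holds exactly for $t=\pm1$.

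One imprecision is in your final paragraph. What holds only at $o=0$ is the existence of \emph{some} common $g$. The stated $g(o)=(1-o^2)^{-1}$ fails even there: at $o=0$, $\partial C/\partial o$ equals $-t/2$, while the claimed right-hand side is $-t$. With the stated $g$, the $t=+1$ equation $\sigma(-z)=1-o=2\sigma(-2z)$ holds only at $o=1/\sqrt2$. The $t=-1$ equation $\sigma(z)=1+o=2\sigma(2z)$ holds only at $o=-1/\sqrt2$. So the claimed identity holds at no $o$ for both labels.
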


\begin{remark}
In practice, the standard cross-entropy loss applied to sigmoid outputs produces well-calibrated probability estimates under appropriate conditions, though it does not exactly satisfy the Bregman form. We address calibration verification in Section~\ref{sec:implementation}.
\end{remark}

\subsection{Estimation Error Analysis}

In practice, we obtain estimates $\hat{P}(y=1|\x)$ rather than true posteriors. We now characterize how estimation errors propagate to likelihood ratio estimates.

\begin{theorem}[Error Propagation in Likelihood Ratio Estimation]
\label{thm:error_prop}
Let $\hat{P} = \hat{P}(y=1|\x)$ be an estimate of the true posterior $P^* = P(y=1|\x)$ with error $\epsilon = \hat{P} - P^*$. The resulting likelihood ratio estimate $\qLhat(\x)$ satisfies:
\begin{equation}
    \left|\frac{\qLhat(\x) - \qL(\x)}{\qL(\x)}\right| \leq \frac{|\epsilon|}{P^*(1 - P^*) - |\epsilon|(1 - 2P^*)}
    \label{eq:relative_error}
\end{equation}
provided $|\epsilon| < \min(P^*, 1 - P^*)$.
\end{theorem}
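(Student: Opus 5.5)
The plan is to reduce the relative error to an exact closed form and then bound its denominator from below. By \eqref{eq:lr_from_posterior}, $\qLhat(\x) = \Qtilde_P\,\hat{P}/(1-\hat{P})$. By Theorem~\ref{thm:transfer}, the true likelihood ratio equals the same expression evaluated at the true posterior, $\qL(\x) = \Qtilde_P\,P^*/(1-P^*)$. The prior factor $\Qtilde_P$ therefore cancels in the ratio:
\begin{equation*}
\frac{\qLhat(\x)}{\qL(\x)} = \frac{\hat{P}(1-P^*)}{P^*(1-\hat{P})}.
\end{equation*}
Subtracting $1$ and substituting $\hat{P} = P^* + \epsilon$ gives the exact identity
\begin{equation*}
\frac{\qLhat(\x)-\qL(\x)}{\qL(\x)} = \frac{\hat{P}-P^*}{P^*(1-\hat{P})} = \frac{\epsilon}{P^*(1-P^*) - \epsilon P^*}.
\end{equation*}
The hypothesis $|\epsilon| < \min(P^*,1-P^*)$ keeps $\hat{P}\in(0,1)$, so both likelihood ratios are finite and positive and the denominator $P^*(1-P^*-\epsilon)$ is strictly positive.

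The second step bounds the absolute value of this expression, which amounts to lower-bounding the denominator $P^*(1-P^*) - \epsilon P^*$ uniformly over the sign of $\epsilon$. I will split into two cases.
\begin{itemize}
\item If $\epsilon \le 0$, the denominator is at least $P^*(1-P^*)$, and the error is at most $|\epsilon|/(P^*(1-P^*))$, which is tighter than any of the stated forms.
\item If $\epsilon > 0$, the denominator equals $P^*(1-P^*) - |\epsilon|P^*$.
\end{itemize}
To reach the form in \eqref{eq:relative_error}, I would then need $|\epsilon| P^* \le |\epsilon|(1-2P^*)$, that is, $P^* \le 1/3$.

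The main obstacle is this last comparison. When $\epsilon>0$ and $P^*>1/3$, the claimed denominator $P^*(1-P^*)-|\epsilon|(1-2P^*)$ exceeds the exact one, so the stated inequality appears to fail. For example, at $P^*=1/2$ the exact error is $\epsilon/(1/4-\epsilon/2)$, whereas the claimed bound is only $4\epsilon$. My first attempt would be to recheck the computation, in case the intended quantity is the error in the posterior odds or a symmetric first-order expansion. If the discrepancy persists, I would instead prove the uniformly valid bound
\begin{equation*}
\left|\frac{\qLhat(\x)-\qL(\x)}{\qL(\x)}\right| \le \frac{|\epsilon|}{P^*(1-P^*) - |\epsilon| P^*} = \frac{|\epsilon|}{P^*(1-P^*-|\epsilon|)},
\end{equation*}
which follows directly from the two cases above under the stated hypothesis. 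I would also note that \eqref{eq:relative_error} holds as written in the regime $P^*\le 1/3$, which is the relevant one for a rare minority class.
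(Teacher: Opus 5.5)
Your exact identity $\frac{\qLhat(\x)-\qL(\x)}{\qL(\x)}=\frac{\epsilon}{P^*(1-P^*-\epsilon)}$ is the same computation the paper carries out. The paper writes $\Qp$ where you write $\Qtilde_P$, and the factor cancels either way. The paper then finishes with ``taking absolute values and using $|\epsilon|<1-P^*$.'' That is exactly the step that does not go through. Your diagnosis is correct: the theorem is false as stated.

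Concretely, take $P^*=1/2$ and $\epsilon=0.1$, which satisfies $|\epsilon|<\min(P^*,1-P^*)$. Then $\hat{P}=0.6$ and $\qLhat/\qL=1.5$, so the relative error is $0.5$, whereas the stated right-hand side is $0.1/0.25=0.4$. More generally, for any $\epsilon>0$ the stated denominator exceeds the true one, $P^*(1-P^*)-\epsilon P^*$, by $\epsilon(3P^*-1)$. It is therefore too large precisely when $P^*>1/3$, and no argument can rescue the bound there. The bound you fall back on, $|\epsilon|/\bigl(P^*(1-P^*-|\epsilon|)\bigr)$, is the correct uniform statement under the given hypothesis, and it holds with equality for every $\epsilon>0$.

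Two small corrections to your write-up.

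First, in the case $\epsilon\le 0$, your intermediate bound $|\epsilon|/\bigl(P^*(1-P^*)\bigr)$ lies below the stated right-hand side only when $P^*\le 1/2$. For $P^*>1/2$ the stated denominator $P^*(1-P^*)+|\epsilon|(2P^*-1)$ is larger than $P^*(1-P^*)$. The stated inequality still holds for all $\epsilon\le 0$, but for a different reason: the exact denominator $P^*(1-P^*)+|\epsilon|P^*$ dominates $P^*(1-P^*)+|\epsilon|(2P^*-1)$, since $P^*\ge 2P^*-1$. So the precise picture is that the stated bound holds for every $\epsilon\le 0$, and for $\epsilon>0$ if and only if $P^*\le 1/3$.

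Second, your remark that $P^*\le 1/3$ is ``the relevant regime for a rare minority class'' conflates the pointwise posterior with the class prior. Here $P^*$ is the posterior at $\x$ under the (typically rebalanced) associated problem. The paper's own Corollary~\ref{cor:sensitivity} singles out $P^*\approx 1/2$, which is exactly where the stated bound fails. Consequently, downstream uses of this theorem, such as the ECE thresholds recommended in Section~\ref{sec:implementation}, would need to be recomputed from your corrected bound. For example, at $P^*=0.9$ and $\epsilon=0.05$ the true relative error is about $1.11$.
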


\begin{proof}
The true likelihood ratio is $\qL = \Qp \frac{P^*}{1-P^*}$ and the estimate is $\qLhat = \Qp \frac{\hat{P}}{1-\hat{P}}$. The relative error is:
\begin{align}
    \frac{\qLhat - \qL}{\qL} &= \frac{\hat{P}(1-P^*) - P^*(1-\hat{P})}{P^*(1-\hat{P})} \\
    &= \frac{\hat{P} - P^*}{P^*(1-\hat{P})} = \frac{\epsilon}{P^*(1-P^*-\epsilon)}
\end{align}
Taking absolute values and using $|\epsilon| < 1 - P^*$ yields \eqref{eq:relative_error}.
\end{proof}

\begin{corollary}[Sensitivity Near Decision Boundaries]
\label{cor:sensitivity}
The relative error in $\qLhat$ is amplified when $P^*$ is close to 0 or 1 (i.e., near certain classification regions) and minimized when $P^* \approx 0.5$ (near the decision boundary under balanced priors). This has important implications: likelihood ratio estimates are most reliable in the regions where classification is most uncertain, which is precisely where accurate estimates matter most for decision-making.
\end{corollary}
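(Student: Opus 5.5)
The plan is to read Corollary~\ref{cor:sensitivity} off the exact identity in the proof of Theorem~\ref{thm:error_prop}. That proof shows the signed relative error is
\begin{equation*}
\frac{\qLhat - \qL}{\qL} = \frac{\epsilon}{P^*(1-P^*-\epsilon)} .
\end{equation*}
So, for a fixed posterior error $|\epsilon|$, the size of the relative error is governed by the denominator. Near $\epsilon=0$ this denominator is essentially $P^*(1-P^*)$. The corollary then reduces to a statement about how $h(P^*) = P^*(1-P^*)$ behaves on $(0,1)$.

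\textbf{Step 1: the first-order behaviour.} I would expand the identity in $\epsilon$:
\begin{equation*}
\left|\frac{\qLhat-\qL}{\qL}\right| = \frac{|\epsilon|}{P^*(1-P^*)} + O(\epsilon^2).
\end{equation*}
This holds uniformly on any compact subinterval of $(0,1)$, under the admissibility condition $|\epsilon|<\min(P^*,1-P^*)$ from Theorem~\ref{thm:error_prop}. The amplification factor is therefore $1/h(P^*)$. Elementary calculus then finishes the qualitative claims. The function $h$ is symmetric about $1/2$. Its derivative $h'(P^*) = 1-2P^*$ vanishes only at $P^*=1/2$, where $h$ attains its unique maximum $1/4$, so the relative error is minimized there at about $4|\epsilon|$. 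Moreover $h(P^*)\to 0$ as $P^*\to 0$ or $P^*\to 1$, so the factor $1/h$ blows up in the confident regions.

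\textbf{Step 2: the exact, non-asymptotic statement.} To go beyond first order, I would work with the exact denominator $P^*(1-P^*-\epsilon)$, whose worst case over $|\epsilon|\le\delta$ is $P^*(1-P^*-\delta)$. This worst case is positive whenever $\delta<1-P^*$, and it tends to $0$ as $P^*\to 0$. As $P^*\to 1$ it is forced to $0$ as well, because $\delta<1-P^*\to 0$. For the minimization claim I would \emph{not} rely on the bound \eqref{eq:relative_error}. Its denominator $P^*(1-P^*) - |\epsilon|(1-2P^*)$ is not symmetric in $P^*$, so its exact minimizer shifts slightly away from $1/2$, by an amount of order $|\epsilon|$. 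I would therefore state the minimization conclusion as "$P^*\approx 0.5$ up to $O(|\epsilon|)$", which matches the corollary's own use of $\approx$.

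\textbf{Step 3: the interpretive sentence.} The corollary also claims that the reliable region is exactly where decisions matter. Under balanced training priors, $\Qtilde_P=1$, and with $\Qc=1$ the decision threshold on $\qL$ is $1$, which corresponds to $P^*=1/2$. So the minimal-sensitivity region coincides with the decision boundary. For a general deployment threshold $Q$ this alignment no longer holds automatically, and I would flag it as a caveat rather than prove it.

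\textbf{Main obstacle.} The hard part is making the informal words "amplified" and "minimized" precise. The clean route is to define the sensitivity as the first-order coefficient $1/(P^*(1-P^*))$, prove everything for that coefficient, and then use Step 2 to show that the exact finite-$\epsilon$ behaviour agrees up to $O(\epsilon)$ corrections.
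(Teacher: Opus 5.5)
Your proposal is correct and rests on the same idea as the paper. The paper gives no separate proof of Corollary~\ref{cor:sensitivity}; it simply reads the claim off the factor $P^*(1-P^*)$ in the denominator of \eqref{eq:relative_error}.

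The one substantive difference is that you argue from the exact identity $\epsilon/(P^*(1-P^*-\epsilon))$ in the proof of Theorem~\ref{thm:error_prop} rather than from the displayed bound. This matters more than your stated reason (asymmetry) suggests: \eqref{eq:relative_error} is not a valid upper bound whenever $\epsilon>0$ and $P^*>1/3$. For example, at $P^*=1/2$ and $\epsilon=0.1$ the true relative error is $0.5$, while the bound gives $0.4$. The correct worst case over $|\epsilon|\le\delta$ is $\delta/(P^*(1-P^*-\delta))$, which is exactly what your Step~2 uses. Its minimizer is $P^*=(1-\delta)/2$, so your formulation ``$P^*\approx 1/2$ up to $O(|\epsilon|)$'' is the right one. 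Your route therefore gives a correct statement where the paper's implicit route does not.

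One sentence of Step~2 needs tightening. As $P^*\to 1$, the admissible $\delta$ shrinks together with the denominator, so the denominator tending to $0$ does not by itself give amplification. For instance, with $\delta=(1-P^*)/2$ the worst-case relative error equals $1/P^*\to 1$. State the exact claim for the amplification factor instead: for every $\epsilon\neq 0$ with $|\epsilon|<\min(P^*,1-P^*)$,
\[
\frac{1}{|\epsilon|}\left|\frac{\qLhat-\qL}{\qL}\right| = \frac{1}{P^*(1-P^*-\epsilon)} \ge \frac{1}{P^*(1-P^*+|\epsilon|)} > \frac{1}{2P^*(1-P^*)},
\]
which diverges at both ends however $\epsilon$ scales.

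Your caveat on the interpretive sentence is well placed and, if anything, understated. The deployment boundary $\qL=Q$ corresponds to $P^*=Q/(Q+\Qtilde_P)$. For a balanced associated problem and an imbalanced deployment prior (large $Q$), this lies near $1$, inside the amplified region rather than at $P^*\approx 1/2$.
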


\subsection{Effect of Rebalancing on Likelihood Estimation}

When creating associated problems through resampling, the empirical class-conditional distributions may be distorted. We characterize conditions under which this distortion is benign.

\begin{assumption}[Likelihood-Preserving Resampling]
\label{assump:preserving}
A resampling procedure is \textit{likelihood-preserving} if the resampled class-conditional distributions $\tilde{p}(\x|y=i)$ satisfy:
\begin{equation}
    \frac{\tilde{p}(\x|y=1)}{\tilde{p}(\x|y=0)} = \frac{p(\x|y=1)}{p(\x|y=0)} = \qL(\x)
\end{equation}
for all $\x$ in the support of the original distribution.
\end{assumption}

\begin{proposition}[Random Undersampling is Likelihood-Preserving]
\label{prop:undersampling}
Uniform random undersampling of the majority class is likelihood-preserving in expectation.
\end{proposition}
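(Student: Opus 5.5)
The plan is to model uniform random undersampling explicitly as independent thinning of the majority class. Fix a retention probability $r \in (0,1]$. Each majority instance, drawn from $p(\x|y=0)$, is kept independently with probability $r$, and the decision to keep it does not depend on $\x$. Minority instances are all retained. Let $S \in \{0,1\}$ indicate retention. The key structural fact I will use is
\begin{equation}
    \Pr(S=1 \mid \x, y=0) = r, \qquad \Pr(S=1 \mid \x, y=1) = 1,
\end{equation}
so retention is conditionally independent of $\x$ given $y$.

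The first step computes the resampled class-conditional densities by Bayes' theorem on the selection event:
\begin{equation}
    \tilde{p}(\x|y=i) = p(\x \mid y=i, S=1) = \frac{\Pr(S=1|\x,y=i)\, p(\x|y=i)}{\Pr(S=1|y=i)} = p(\x|y=i).
\end{equation}
This holds because the $\x$-independent factor cancels between numerator and denominator. Taking the ratio of the two densities then gives Assumption~\ref{assump:preserving} exactly at the population level, for every $\x$ in the support. The only effect of undersampling is on the priors: $\tilde{P}_0 \propto r P_0$ and $\tilde{P}_1 \propto P_1$, so $\Qtilde_P = r\Qp$. This is precisely an associated problem in the sense of Definition~\ref{def:associated}, and Theorem~\ref{thm:transfer} then applies.

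The second step makes the phrase ``in expectation'' precise at the level of finite samples. Take the empirical majority measure after undersampling,
\begin{equation}
    \tilde{\mu}_0 = \frac{1}{\sum_j S_j}\sum_j S_j \delta_{\x_j}.
\end{equation}
For any bounded test function $h$, I will show that $\E\bigl[\sum_j S_j h(\x_j)\bigr] = r \sum_j h(\x_j)$. Dividing by $\E\bigl[\sum_j S_j\bigr] = rn_0$ shows that the retained empirical measure matches the original empirical majority measure, and hence $p(\x|y=0)$, in expectation. Equivalently, this is the ratio of expectations. For sampling exactly $m$ of $n_0$ points without replacement, the marginal inclusion probability of each point is $m/n_0$, so the same identity holds exactly, with a deterministic denominator. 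The minority empirical measure is untouched. As a result, the ratio of the expected class-conditional measures equals $\qL(\x)$.

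I expect the main obstacle to be interpretive rather than computational. For Bernoulli thinning, the expectation of a ratio is not the ratio of expectations, and likelihood preservation cannot hold pointwise for any single realization, since empirical measures are atomic. I will therefore state the result at the population level, where it is exact, and for fixed-size sampling without replacement, where the expectation statement is also exact. For the Bernoulli variant I will add a remark that the discrepancy is $O(1/(rn_0))$ by a delta-method argument.
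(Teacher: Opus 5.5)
Your proof is correct, but it takes a different route from the paper's. The paper argues asymptotically. It keeps each majority point with probability $k/N_0$ and invokes the law of large numbers: the subsample's empirical distribution converges to $p(\x|y=0)$ as $N_0 \to \infty$.

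You argue exactly instead. Because retention is independent of $\x$ given $y$, the selection factor cancels in $p(\x \mid y, S=1)$. So the class-conditional densities, and hence $\qL$, are preserved at the population level with no limit taken. Only the prior moves ($\Qtilde_P = r\Qp$), which places the construction directly inside Definition~\ref{def:associated} and Theorem~\ref{thm:transfer}.

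Your inclusion-probability identity then turns ``in expectation'' into a literal, non-asymptotic unbiasedness statement. It covers the fixed-size sampling without replacement that Algorithm~\ref{alg:offline} actually performs. It also delivers the ``for any $N_0$'' form that Proposition~\ref{prop:smote} later attributes to undersampling.

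The LLN route buys something different: a consistency statement that the finite subsample approximates the density, which addresses the fluctuation question your argument deliberately sets aside. As written, though, that route tacitly needs the expected retained count $k$, and not just $N_0$, to grow. With $k$ fixed, the subsample's empirical law does not converge.

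One refinement: your delta-method caveat for Bernoulli thinning is unnecessary. Conditional on $\sum_j S_j = m \ge 1$, every retention pattern with $m$ ones has the same probability $r^m(1-r)^{n_0-m}$. The retained set is therefore a uniformly random $m$-subset, and your fixed-size identity applies for each $m$. This gives $\E\bigl[\tilde\mu_0(h) \mid \sum_j S_j \ge 1\bigr] = \frac{1}{n_0}\sum_j h(\x_j)$ exactly. The only discrepancy comes from the empty-sample event, of probability $(1-r)^{n_0}$, and the second-order delta-method terms in fact cancel.
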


\begin{proof}
Let $\mathcal{D}_0 = \{\x_i\}_{i=1}^{N_0}$ be i.i.d. samples from $p(\x|y=0)$. Uniform subsampling selects each point with probability $k/N_0$ for some $k < N_0$. The empirical distribution of the subsample converges to $p(\x|y=0)$ as $N_0 \to \infty$ by the law of large numbers. Thus, the likelihood ratio is preserved.
\end{proof}

\begin{proposition}[Likelihood Distortion Under Synthetic Oversampling]
\label{prop:smote}
Synthetic oversampling methods such as SMOTE modify the minority class-conditional distribution. Specifically, the distortion $\|\tilde{p}(\x|y=1)/p(\x|y=0) - \qL(\x)\|$ depends on the kernel bandwidth implicit in the interpolation and converges to zero only as the number of original minority samples $N_1 \to \infty$. In contrast, uniform undersampling preserves the likelihood ratio in expectation for any $N_0$, though its finite-sample variance scales as $O(1/\tilde{N}_0)$.
\end{proposition}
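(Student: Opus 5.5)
The plan treats the two halves of the statement separately. In both, ``distortion'' is made precise through a fixed smoothing or test-function functional, since empirical measures have no density. Concretely, for a bounded test function $h$ I compare $\int h\,\tilde{p}(\x|y=1)\,d\x$ with $\int h\,p(\x|y=1)\,d\x$. Equivalently, I compare the ratio $\tilde{p}(\x|y=1)/p(\x|y=0)$ with $\qL(\x)$ after convolving both with a fixed kernel. I read the displayed distortion as the ratio of the resampled minority density to the unchanged majority density, consistent with Assumption~\ref{assump:preserving}.

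\textbf{SMOTE part.} I will model one SMOTE draw as follows: pick $\x_i$ uniformly from the $N_1$ minority points, pick $\x_j$ uniformly among its $k$ nearest minority neighbours, and output $\x_i+\lambda(\x_j-\x_i)$ with $\lambda\sim U[0,1]$. Conditionally on the sample, the synthetic density is the empirical minority measure convolved with a data-dependent segment kernel. The scale of that kernel is the $k$-NN radius $r_k(N_1)\asymp (k/(N_1 p(\x|y=1)))^{1/d}$. The resampled minority density is then a mixture of the original empirical measure and this smoothed measure.

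Taking expectations over the sample, I will write $\E[\tilde{p}(\x|y=1)] = p(\x|y=1) + \mathrm{Bias}(\x)$. A second-order Taylor expansion of $p(\x|y=1)$ along the segment gives $\mathrm{Bias}(\x) = c\, r_k^2 \,\Delta p(\x|y=1) + o(r_k^2)$, in the spirit of a kernel density estimator with implicit bandwidth $h=r_k$. Because the kernel is one-sided toward the neighbours, a first-order drift term also appears where the density gradient is nonzero.

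Dividing by $p(\x|y=0)$, the distortion equals $\mathrm{Bias}(\x)/p(\x|y=0)$ plus a variance term. The bias is generically nonzero at fixed $N_1$; a Gaussian minority density, for which $\Delta p\neq 0$, is an explicit witness. The bias vanishes as $r_k\to 0$, i.e.\ as $N_1\to\infty$ with $k/N_1\to 0$. This gives both ``depends on bandwidth'' and ``converges only as $N_1\to\infty$''.

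\textbf{Undersampling part.} I will use the exchangeability argument, which is sharper than the law-of-large-numbers argument in Proposition~\ref{prop:undersampling}. A uniformly random size-$\tilde{N}_0$ subset of an i.i.d.\ sample from $p(\x|y=0)$ is itself an i.i.d.\ sample from $p(\x|y=0)$, because selection is independent of the values. Hence $\E\bigl[\tfrac{1}{\tilde N_0}\sum_{i\in S} h(\x_i)\bigr]=\int h\,p(\x|y=0)\,d\x$ exactly, for every $N_0$. Thus the expected resampled majority measure, and therefore the ratio in the smoothed sense, is unchanged. The variance of this empirical functional is $\mathrm{Var}(h(\x))/\tilde{N}_0 = O(1/\tilde{N}_0)$ for bounded $h$, which gives the stated scaling.

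\textbf{Main obstacle.} The hard step is the SMOTE bias expansion. The neighbour radius is random and depends on the local density, so the implicit kernel is not a fixed-bandwidth kernel. I would first condition on $r_k$ and use known concentration of $k$-NN radii, $r_k \approx (k/(N_1 p))^{1/d}$, to replace it by its deterministic scale. I would then control the remainder with a smoothness assumption ($p(\x|y=1)\in C^2$ and bounded away from zero on the support). Boundary regions, where segments exit the support, would be handled separately or excluded.
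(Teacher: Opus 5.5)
Your proposal is correct and follows the same route as the paper's proof. The paper views SMOTE as a kernel density estimate with an implicit linear-interpolation kernel, whose bias is generically nonzero and vanishes only as $N_1\to\infty$, and treats uniform undersampling as unbiased, contributing only variance; you make both halves quantitative with the neighbour-radius bandwidth $r_k\asymp (k/(N_1 p))^{1/d}$, a second-order bias expansion, and an exchangeability argument giving exact unbiasedness for every $N_0$ with variance $\mathrm{Var}(h)/\tilde N_0$, which is more than the paper's short heuristic (and its law-of-large-numbers argument for undersampling) actually establishes.
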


\begin{proof}
SMOTE generates synthetic points on line segments between existing minority samples. The resulting distribution $\tilde{p}(\x|y=1)$ is a kernel density estimate with linear interpolation kernel, which differs from $p(\x|y=1)$ except in the limit of infinite original samples. Thus, $\tilde{p}(\x|y=1)/p(\x|y=0) \neq \qL(\x)$ in general, introducing both bias and variance. By contrast, undersampling introduces variance but not bias in the likelihood ratio estimate. This bias-variance tradeoff favors undersampling for likelihood ratio estimation, motivating our algorithmic choices in Section~\ref{sec:algorithm}, though undersampling discards potentially useful majority class information.
\end{proof}

%=====================================================================
\section{The OBIL Algorithm}
\label{sec:algorithm}
%=====================================================================

We now present the Online Bayesian Imbalanced Learning (OBIL) framework, which consists of two phases: (1) offline training of a Bregman-calibrated likelihood ratio estimator, and (2) online adaptation of the decision threshold to track changing class priors.

\subsection{Offline Phase: Likelihood Ratio Estimator Training}

The offline phase trains an ensemble of Bregman-calibrated networks on associated problems with varying imbalance ratios, enabling robust likelihood ratio estimation across the input space.

\begin{algorithm}[t]
\caption{OBIL Offline Training}
\label{alg:offline}
\begin{algorithmic}[1]
\REQUIRE Training data $\mathcal{D} = \{(\x_i, y_i)\}_{i=1}^N$, original imbalance ratio $\Qp$, number of ensemble members $K$, target imbalance ratios $\{\Qtilde_P^{(k)}\}_{k=1}^K$
\ENSURE Ensemble of likelihood ratio estimators $\{\qLhat^{(k)}\}_{k=1}^K$, fusion weights $\{w_k(\cdot)\}_{k=1}^K$

\STATE \textbf{// Phase 1: Train individual estimators}
\FOR{$k = 1$ to $K$}
    \STATE Create associated dataset $\tilde{\mathcal{D}}^{(k)}$ with imbalance ratio $\Qtilde_P^{(k)}$:
    \IF{$\Qtilde_P^{(k)} < \Qp$}
        \STATE Undersample majority class uniformly at random
    \ELSE
        \STATE Oversample minority class (with replacement)
    \ENDIF
    \STATE Initialize network $f_{\theta^{(k)}}$ with architecture $\mathcal{A}$
    \STATE Train $f_{\theta^{(k)}}$ on $\tilde{\mathcal{D}}^{(k)}$ using Bregman loss $\mathcal{L}_{Bregman}$
    \STATE Define $\qLhat^{(k)}(\x) = \Qtilde_P^{(k)} \cdot \frac{1 + f_{\theta^{(k)}}(\x)}{1 - f_{\theta^{(k)}}(\x)}$
\ENDFOR

\STATE \textbf{// Phase 2: Learn fusion weights}
\STATE Split $\mathcal{D}$ into training $\mathcal{D}_{train}$ and calibration $\mathcal{D}_{cal}$ sets
\FOR{each $\x \in \mathcal{D}_{cal}$}
    \STATE Compute variance estimates $\{\hat{\sigma}_k^2(\x)\}_{k=1}^K$ via MC Dropout
    \STATE Set $w_k(\x) \propto \exp(-\hat{\sigma}_k^2(\x) / \tau)$ with temperature $\tau$
\ENDFOR
\STATE Train weight network $w_\phi(\x)$ to predict optimal weights

\RETURN $\{\qLhat^{(k)}\}_{k=1}^K$, $w_\phi$
\end{algorithmic}
\end{algorithm}

\subsubsection{Associated Problem Construction}

Given training data with $N_0$ majority and $N_1$ minority samples (where $N_0 \gg N_1$), we construct associated problems by undersampling the majority class to achieve target imbalance ratios. For target ratio $\Qtilde_P^{(k)}$, we sample $\tilde{N}_0^{(k)} = \Qtilde_P^{(k)} \cdot N_1$ majority instances uniformly at random.

\subsubsection{Bregman Loss Training}

Each network in the ensemble is trained using a Bregman-type loss. We recommend the squared error loss for its simplicity and exact Bregman form:
\begin{equation}
    \mathcal{L}_{Bregman}(\theta) = \frac{1}{|\tilde{\mathcal{D}}|} \sum_{(\x, y) \in \tilde{\mathcal{D}}} \frac{1}{2}\left(t - f_\theta(\x)\right)^2
\end{equation}
where $t = 2y - 1 \in \{-1, +1\}$.

\subsubsection{Ensemble Fusion}

The final likelihood ratio estimate combines individual estimates using learned spatially-adaptive weights in log-space, which is natural since likelihood ratios live on $(0, \infty)$ and log-likelihood ratios are additive for independent evidence:
\begin{equation}
    \log \qLhat^{fusion}(\x) = \sum_{k=1}^K w_k(\x) \cdot \log \qLhat^{(k)}(\x)
    \label{eq:fusion}
\end{equation}
or equivalently, $\qLhat^{fusion}(\x) = \prod_{k=1}^K \left(\qLhat^{(k)}(\x)\right)^{w_k(\x)}$, where $\sum_k w_k(\x) = 1$. This geometric averaging prevents one extreme estimate from dominating, unlike linear averaging.

The weights $w_k(\x)$ are determined by the local estimation uncertainty of each ensemble member, estimated via Monte Carlo Dropout \cite{gal2016dropout}:
\begin{equation}
    \hat{\sigma}_k^2(\x) = \frac{1}{M} \sum_{m=1}^M \left(\log\hat{q}_{L,m}^{(k)}(\x) - \overline{\log q}_L^{(k)}(\x)\right)^2
\end{equation}
where $\hat{q}_{L,m}^{(k)}(\x)$ denotes the $m$-th stochastic forward pass with dropout enabled.

\subsection{Online Phase: Adaptive Threshold Learning}

The online phase adapts the decision threshold to track changes in the deployment class prior without requiring labeled data or model retraining.

\begin{algorithm}[t]
\caption{OBIL Online Adaptation}
\label{alg:online}
\begin{algorithmic}[1]
\REQUIRE Likelihood ratio estimator $\qLhat(\cdot)$, cost ratio $\Qc$, initial prior estimate $\hat{P}_1^{(0)}$, learning rate $\alpha$, confidence threshold $\gamma$, mixing coefficient $\beta$, window size $W$, stability clamp $\Delta_{max}$
\ENSURE Sequence of predictions $\{\hat{y}_t\}_{t=1}^T$

\STATE Initialize $\hat{\Qp}^{(0)} = (1 - \hat{P}_1^{(0)}) / \hat{P}_1^{(0)}$
\STATE Initialize threshold $Q^{(0)} = \Qc \cdot \hat{\Qp}^{(0)}$

\FOR{$t = 1$ to $T$}
    \STATE Receive unlabeled instance $\x_t$
    \STATE Compute likelihood ratio $\qLhat(\x_t)$
    
    \STATE \textbf{// Make prediction}
    \IF{$\qLhat(\x_t) > Q^{(t-1)}$}
        \STATE $\hat{y}_t \leftarrow 1$
    \ELSE
        \STATE $\hat{y}_t \leftarrow 0$
    \ENDIF
    
    \STATE \textbf{// Estimate prior via two independent signals}
    \STATE LR-based posterior: $\hat{P}_t^{LR} = \frac{\qLhat(\x_t)}{\qLhat(\x_t) + \hat{\Qp}^{(t-1)}}$
    \STATE Frequency-based estimate: $\hat{P}_t^{freq} = \frac{1}{W} \sum_{s=t-W+1}^{t} \mathbb{1}[\qLhat(\x_s) > 1]$
    \STATE Combined: $\hat{P}_t^{comb} = \beta \hat{P}_t^{LR} + (1 - \beta) \hat{P}_t^{freq}$
    
    \STATE \textbf{// Update with stability clamp}
    \IF{$|\hat{P}_t^{comb} - \hat{P}_1^{(t-1)}| < \Delta_{max}$}
        \IF{$\hat{P}_t^{comb} > \gamma$ \OR $\hat{P}_t^{comb} < 1 - \gamma$}
            \STATE $\hat{P}_1^{(t)} \leftarrow \alpha \cdot \hat{P}_t^{comb} + (1 - \alpha) \cdot \hat{P}_1^{(t-1)}$
        \ELSE
            \STATE $\hat{P}_1^{(t)} \leftarrow \hat{P}_1^{(t-1)}$ \COMMENT{No update for uncertain}
        \ENDIF
    \ELSE
        \STATE $\hat{P}_1^{(t)} \leftarrow \hat{P}_1^{(t-1)} + \text{sign}(\hat{P}_t^{comb} - \hat{P}_1^{(t-1)}) \cdot \Delta_{max}$
    \ENDIF
    
    \STATE \textbf{// Update threshold}
    \STATE $\hat{\Qp}^{(t)} \leftarrow (1 - \hat{P}_1^{(t)}) / \hat{P}_1^{(t)}$
    \STATE $Q^{(t)} \leftarrow \Qc \cdot \hat{\Qp}^{(t)}$
\ENDFOR

\RETURN $\{\hat{y}_t\}_{t=1}^T$
\end{algorithmic}
\end{algorithm}

\subsubsection{Prior Estimation from Unlabeled Data}

A key challenge in online adaptation is estimating the deployment class prior $P_1^{deploy}$ without labeled data. A naive approach averages posterior estimates, exploiting $\E_{\x \sim p(\x)}[\hat{P}(y=1|\x)] = P_1^{deploy}$. However, directly averaging posterior estimates can be biased when the likelihood ratio estimator has systematic errors, and using posterior-derived pseudo-labels creates a feedback loop with the threshold (see Remark~\ref{rem:feedback}).

\begin{definition}[Confident Pseudo-Label]
For confidence threshold $\gamma \in (0.5, 1)$, a pseudo-label $\tilde{y}$ for instance $\x$ is:
\begin{equation}
    \tilde{y} = \begin{cases}
        1 & \text{if } \hat{P}(y=1|\x) > \gamma \\
        0 & \text{if } \hat{P}(y=1|\x) < 1 - \gamma \\
        \text{undefined} & \text{otherwise}
    \end{cases}
\end{equation}
\end{definition}

\begin{remark}[Selection Bias in Confident Pseudo-Labels]
\label{rem:selection_bias}
The confidence threshold $\gamma$ introduces selection bias: the prior estimate is computed only from samples where $\hat{P}(y=1|\x) > \gamma$ or $\hat{P}(y=1|\x) < 1-\gamma$. In highly imbalanced settings, the model produces confident negative predictions far more frequently than confident positive ones, biasing the prior estimate downward. The frequency-based component $\hat{P}_t^{freq}$ in Algorithm~\ref{alg:online} mitigates this by providing an independent signal that does not depend on the confidence threshold.
\end{remark}

\begin{remark}[Feedback Loop and Stability]
\label{rem:feedback}
Algorithm~\ref{alg:online} uses the current threshold to compute posteriors, from which the prior is estimated, which in turn updates the threshold. This circular dependency can potentially diverge. We address this through two mechanisms: (1)~the frequency-based estimate $\hat{P}_t^{freq} = \frac{1}{W}\sum_{s=t-W+1}^t \mathbb{1}[\qLhat(\x_s) > 1]$ provides an independent signal that breaks the feedback loop, since it depends only on the likelihood ratio estimator and not on the current threshold; and (2)~the stability clamp $\Delta_{max}$ limits the per-step change in the prior estimate, preventing runaway dynamics. Proposition~\ref{prop:stability} establishes formal convergence guarantees under these conditions.
\end{remark}

\subsection{Theoretical Properties of OBIL}

We establish key theoretical properties of the OBIL algorithm.

\begin{proposition}[Stability of Prior Estimation]
\label{prop:stability}
With damping parameter $\Delta_{max} > 0$ and mixing coefficient $\beta \in (0,1)$, the prior estimate $\hat{P}_1^{(t)}$ remains in $[\eta, 1-\eta]$ for all $t$ provided $\hat{P}_1^{(0)} \in [\eta, 1-\eta]$ and $\Delta_{max} < \eta$. Furthermore, the feedback loop between threshold and prior estimation converges to a fixed point that is $O(\epsilon_{LR})$-close to the true prior $P_1^*$ under stationary conditions.
\end{proposition}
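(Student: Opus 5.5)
The proof splits into two parts: an invariance argument showing that the estimate $\hat{P}_1^{(t)}$ stays in $[\eta,1-\eta]$, and a contraction argument for the fixed point of the feedback loop.

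\textbf{Invariance.} We proceed by induction on $t$, treating separately the three branches of the update in Algorithm~\ref{alg:online}.

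First, consider the two branches inside the clamp. If the update is skipped, then $\hat{P}_1^{(t)}=\hat{P}_1^{(t-1)}$. If the convex update fires, then $\hat{P}_1^{(t)}$ lies between $\hat{P}_1^{(t-1)}$ and $\hat{P}_t^{comb}$. Since $|\hat{P}_t^{comb}-\hat{P}_1^{(t-1)}|<\Delta_{max}$, in both cases the new value moves by less than $\Delta_{max}$.

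Second, the clamped branch moves $\hat{P}_1^{(t)}$ by exactly $\Delta_{max}$.

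So in every branch $|\hat{P}_1^{(t)}-\hat{P}_1^{(t-1)}|\le\Delta_{max}<\eta$. This per-step bound alone does not keep the iterate inside $[\eta,1-\eta]$: a sequence of steps of size $\Delta_{max}$ can still drift to the boundary. I would therefore add a second fact. Assume, or enforce via a projection step, that the combined signal $\hat{P}_t^{comb}$ takes values in $[\eta,1-\eta]$. This follows if $\hat{q}_L$ is bounded away from $0$ and $\infty$, which makes both $\hat{P}_t^{LR}$ and $\hat{P}_t^{freq}$ lie in a compact subinterval. Every move is directed toward $\hat{P}_t^{comb}$, and neither the convex update nor the clamped step overshoots it, because the clamp only fires when the gap exceeds $\Delta_{max}$. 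An iterate in $[\eta,1-\eta]$ moving toward a target in $[\eta,1-\eta]$ without overshooting stays in that interval, which closes the induction. The strict inequality $\Delta_{max}<\eta$ is then only needed to guarantee that the iterate stays strictly positive, so that $\hat{Q}_P^{(t)}$ is finite.

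\textbf{Fixed point.} Under stationarity, replace the stochastic recursion by its mean map. Averaging the convex update gives $F(p)=\beta\,\E_{\x}\bigl[\hat{q}_L(\x)/(\hat{q}_L(\x)+(1-p)/p)\bigr]+(1-\beta)\bar{f}$, where $\bar{f}=\Pr(\hat{q}_L(\x)>1)$ is the population limit of the frequency term; note $\bar{f}$ does not depend on $p$. Then:

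1. With exact likelihood ratios, $\E_{\x}\bigl[q_L/(q_L+Q_P^*)\bigr]=P_1^*$ at $p=P_1^*$, since the posterior averages to the prior. So $P_1^*$ is a fixed point of the LR component.
2. The map $p\mapsto \E\bigl[q\,p/(qp+1-p)\bigr]$ has derivative $\E\bigl[q/(qp+1-p)^2\bigr]$. I would show this derivative is strictly less than $1$ on $[\eta,1-\eta]$ away from the fixed point, using boundedness of $q$. Then $F$ is a contraction with Lipschitz constant $L=\beta\kappa<1$.
3. The fixed point of $F$ exists by Banach, using the invariance established above.
4. Perturbation: replacing $q_L$ by $\hat{q}_L$ changes $F$ by $O(\epsilon_{LR})$ uniformly, via Theorem~\ref{thm:error_prop} and the Lipschitz dependence of $q/(q+Q)$ on $\log q$. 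A contraction's fixed point moves by at most $\sup|F-\hat F|/(1-L)$, which gives $O(\epsilon_{LR})$.

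\textbf{Main obstacle.} The frequency term is the weak point. $\bar{f}=\Pr(\hat{q}_L>1)$ is not equal to $P_1^*$ even with an exact $q_L$. Its bias is a problem constant, not an $O(\epsilon_{LR})$ quantity, and it pulls the fixed point by $(1-\beta)(\bar f-P_1^*)/(1-L)$. To get the claimed rate I would either absorb this bias into the constant, for instance by assuming a bias-corrected frequency estimator in the style of Lipton et al., or let $\beta\to1$. I expect the statement to require such an added hypothesis.

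A second difficulty is passing from the mean map to the stochastic iterates. For this I would use a standard stochastic-approximation (ODE method) argument with constant step size $\alpha$. That argument yields convergence to an $O(\sqrt{\alpha})$ neighborhood of the fixed point, which is absorbed when $\alpha$ is small.
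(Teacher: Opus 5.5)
You are right that the statement as written cannot be proved, and the paper's own proof breaks at exactly the two points you flag. For invariance the paper uses only $|\hat P_1^{(t)}-\hat P_1^{(t-1)}|\le\Delta_{max}$ and $\Delta_{max}<\eta$. That yields $[\eta-\Delta_{max},1-\eta+\Delta_{max}]$ after one step and nothing uniform in $t$. For the fixed point it records the frequency bias as $(1-\beta)O(\epsilon_{LR}^{freq})$ and then drops it, although $\Pr(\qL(\x)>1)=P_1^*\,\mathrm{TPR}+(1-P_1^*)\,\mathrm{FPR}\neq P_1^*$ in general.

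Your no-overshoot induction is the right repair, and it gives more than you claim. Since $\beta\in(0,1)$ and $\qLhat\in(0,\infty)$ force $\hat P_t^{comb}\in(0,1)$, every iterate stays in $(0,1)$ whatever $\Delta_{max}$ is. So $\Delta_{max}<\eta$ is never used; it is not needed for positivity, contrary to your remark.

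Your justification of the extra hypothesis fails, however. Boundedness of $\qLhat$ does not confine $\hat P_t^{freq}$, which is an average of indicators. Suppose $\hat P_1^{(t-1)}=\eta$, $\qLhat(\x_t)<1$, and the window contains no $\qLhat(\x_s)>1$. Then $\hat P_t^{comb}=\beta\hat P_t^{LR}<\beta\eta<\eta$. For $\qLhat(\x_t)$ small enough the gap exceeds $\Delta_{max}$, and the clamp sends the iterate to $\eta-\Delta_{max}$; this also refutes the first claim outright. Enforcing the hypothesis by clipping the \emph{signal} does not help. It shifts the conditional mean of $\hat P_t^{comb}$ by $\E[(\eta-\hat P_t^{comb})_+]-\E[(\hat P_t^{comb}-1+\eta)_+]$, a bias unrelated to $\epsilon_{LR}$ that defeats the second claim. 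Project the \emph{iterate} instead.

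Your mean-map formulation improves on Appendix~A, whose bound $|b_{LR}|\le C_1(\epsilon_{LR}+|e|\epsilon_{LR}/\eta^2)$ wrongly makes the LR term unbiased at every $p$ when $\epsilon_{LR}=0$. In fact $G(p)-P_1^*\approx G'(P_1^*)\,e$. Two gaps remain in your argument.

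First, step~2 is false in general. Put $G(p)=\E_{\x}[\qL p/(\qL p+1-p)]$ and $\pi(\x)=P(y=1|\x)$. Since $G(0)=0$, $G(P_1^*)=P_1^*$ and $G(1)=1$, the derivative $G'$ averages to $1$ on $[0,P_1^*]$ and on $[P_1^*,1]$. But $G'(P_1^*)=1-\mathrm{Var}(\pi)/(P_1^*(1-P_1^*))<1$, so $G'>1$ somewhere on each side. For weak features this happens on roughly $[0,P_1^*/2)$, which meets $[\eta,1-\eta]$ whenever $\eta<P_1^*/2$.

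You do not need a global contraction. Use $G(p)-p=p(1-p)\ell'(p)$ with $\ell(p)=\E\log(1+(\qL-1)p)$, which is strictly concave; this is the EM prior-adjustment map. With a bias-corrected, $p$-independent frequency term, the drift $F(p)-p$ changes sign exactly once, at $P_1^*$. Its slope there is $\beta(G'(P_1^*)-1)-(1-\beta)<0$. That gives global attraction of the mean dynamics and, by the implicit function theorem, an $O(\epsilon_{LR})$ shift of the root.

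Second, $F$ is not the mean field of Algorithm~\ref{alg:online}, because you dropped the confidence gate and the clamp. The clamp is triggered by the per-sample gap $|\hat P_t^{comb}-\hat P_1^{(t-1)}|$, not by the error, so it fires with positive probability at stationarity. Its step $\pm\Delta_{max}$ does not scale with $\alpha$. As $\alpha\to0$ the drift is therefore dominated by $\Delta_{max}\bigl(\Pr(\hat P_t^{comb}\ge p+\Delta_{max})-\Pr(\hat P_t^{comb}\le p-\Delta_{max})\bigr)$. The zero of this term is a median-type functional, not the root of $F(p)=p$. In addition, for $\hat P_1^{(t-1)}\in(1-\gamma+\Delta_{max},\gamma-\Delta_{max})$ the gate blocks every EMA step.

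So your $O(\sqrt\alpha)$ ODE step does not apply to the algorithm as written. The paper has the same hole: its Step~4 dismisses the clamp because $|e_\infty|\ll\Delta_{max}$, which conflates the error with the per-sample gap. A provable version should analyze an ungated update such as $\hat P_1^{(t)}=\Pi_{[\eta,1-\eta]}\bigl((1-\alpha)\hat P_1^{(t-1)}+\alpha\hat P_t^{comb}\bigr)$ with a bias-corrected frequency term. For that update your argument, repaired as above, goes through.
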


\begin{proof}
The stability clamp ensures $|\hat{P}_1^{(t)} - \hat{P}_1^{(t-1)}| \leq \Delta_{max}$ for all $t$. Since $\hat{P}_1^{(0)} \in [\eta, 1-\eta]$ and $\Delta_{max} < \eta$, we have $\hat{P}_1^{(t)} \in [\eta - \Delta_{max}, 1-\eta+\Delta_{max}] \subset (0, 1)$ for all $t$.

For convergence, define $V_t = |\hat{P}_1^{(t)} - P_1^*|$. Under stationarity and the mixing estimator with $\beta < 1$, the frequency-based component satisfies $\E[\hat{P}_t^{freq}] = \Pr(\qL(\x) > 1)$, which is a monotone function of $P_1^*$ independent of the current threshold. This breaks the circular dependency: the combined estimate has bias bounded by $\beta \cdot O(\epsilon_{LR}) + (1-\beta) \cdot O(\epsilon_{LR}^{freq})$ where $\epsilon_{LR}^{freq}$ captures the frequency estimator bias. Standard contraction arguments for the EMA with bounded perturbations then yield $\lim_{t \to \infty} \E[V_t] = O(\epsilon_{LR})$.
\end{proof}

\begin{theorem}[Consistency of Prior Estimation]
\label{thm:consistency}
Assume the likelihood ratio estimator satisfies $|\qLhat(\x) - \qL(\x)| \leq \epsilon_{LR}$ uniformly. Under stationary deployment distribution with true prior $P_1^*$, the EMA estimator in Algorithm~\ref{alg:online} satisfies:
\begin{equation}
    \lim_{t \to \infty} \E[|\hat{P}_1^{(t)} - P_1^*|] \leq O(\epsilon_{LR})
\end{equation}
\end{theorem}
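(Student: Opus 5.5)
Throughout, I treat the EMA recursion of Algorithm~\ref{alg:online} as a stochastic approximation scheme with constant step $\alpha$ and a state-dependent bias. Write $e_t = \hat{P}_1^{(t)} - P_1^*$. On steps where the clamp is inactive and the confidence gate passes, the update is
\begin{equation}
e_t = (1-\alpha) e_{t-1} + \alpha\bigl(\hat{P}_t^{comb} - P_1^*\bigr).
\end{equation}
I will split the innovation $\hat{P}_t^{comb} - P_1^*$ into three parts: a conditional-bias term $b_t = \E[\hat{P}_t^{comb}\mid \mathcal{F}_{t-1}] - P_1^*$, a martingale-difference noise term $\xi_t$, and, separately, the steps where the clamp or gate modifies the update. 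The target is a recursion of the form
\begin{equation}
\E|e_t| \le (1-\alpha c)\,\E|e_{t-1}| + \alpha\, O(\epsilon_{LR}) + \text{noise},
\end{equation}
with $c>0$, which unrolls to $\limsup_t \E|e_t| = O(\epsilon_{LR})$ plus a term from the noise.

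\textbf{Bias of the LR component.} Under the true prior, $P(y=1\mid\x) = \qL(\x)P_1^*/(\qL(\x)P_1^* + P_0^*)$, so $\E_{\x}[P(y=1\mid\x)] = P_1^*$. The LR posterior $\hat{P}_t^{LR} = \qLhat/(\qLhat + \hat{\Qp}^{(t-1)})$ has the form $h(\qLhat, \hat{P}_1^{(t-1)})$ with $h(q,p) = qp/(qp+1-p)$. This $h$ is Lipschitz in $q$ with constant $p(1-p)/(qp+1-p)^2 \le 1/\eta$-type bounds, using Proposition~\ref{prop:stability} to keep $p \in [\eta, 1-\eta]$. Hence replacing $\qLhat$ by $\qL$ costs $O(\epsilon_{LR})$.

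The remaining map $p \mapsto \E_\x[h(\qL(\x),p)]$ has fixed point $P_1^*$. Its derivative there is $\E[\qL/(\qL p+1-p)^2] \cdot$ const. I would show this derivative is strictly below $1$ whenever $\qL$ is not a.s.\ constant, by Jensen or Cauchy--Schwarz, since the derivative equals $1 - \mathrm{Var}$-type quantity $/(P_1^*P_0^*)$. That gives the LR part a contraction with factor $1-\alpha\beta c$. This is exactly the self-referential piece, and I expect it to be the main obstacle: I need a uniform contraction constant $c$ over $[\eta,1-\eta]$, not just at the fixed point. I would first try to prove monotonicity and concavity/convexity of the fixed-point map on that interval. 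Failing that, I would rely on the $(1-\beta)$ frequency term, which has zero derivative in $p$, to supply contraction alone with $c \ge 1-\beta$.

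\textbf{Frequency component.} $\hat{P}_t^{freq}$ is independent of the state. Its mean is $\Pr(\qLhat>1)$, which I will relate to $P_1^*$ through the same $O(\epsilon_{LR})$ bias assumption used in the proof of Proposition~\ref{prop:stability}, absorbing the constant into the $O(\cdot)$.

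\textbf{Clamp, gate, and noise.} Clamp steps move $e_t$ toward the target by $\Delta_{max}$, so they do not increase $|e_t|$ beyond the unclamped bound. Gate-skipped steps leave $e_t$ unchanged, and I will assume they occur with probability bounded away from $1$, which only rescales $\alpha$.

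Finally, the martingale noise $\xi_t$ has bounded variance because $\hat{P}^{comb} \in [0,1]$ and the window is finite. Its contribution to $\E|e_t|$ in the limit is $O(\sqrt{\alpha})$. I will either fold this into the $O(\epsilon_{LR})$ notation by taking $\alpha$ small (for example $\alpha \lesssim \epsilon_{LR}^2$), or state it as an additive term. Unrolling the recursion with a geometric sum then gives the claimed limit.
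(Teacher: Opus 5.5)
Your handling of the self-referential LR term is correct, and in fact sharper than the paper's. Appendix~\ref{app:consistency_proof} asserts $|b_{LR}(e,\epsilon_{LR})|\le C_1(\epsilon_{LR}+|e|\epsilon_{LR}/\eta^2)$, which would make the feedback vanish at $\epsilon_{LR}=0$. Yet for an uninformative $\qL\equiv 1$ one gets $\hat{P}^{LR}_t=\hat{P}_1^{(t-1)}$ exactly, so the bias is $e$ itself. Your slope $1-\mathrm{Var}(P(y=1|\x))/(P_1^*P_0^*)$ is right. Your fallback $c\ge 1-\beta$ is also valid: with $F(p)=\E_{\x}[h(\qL(\x),p)]$ one has $F(p)-p=p(1-p)\ell'(p)$, where $\ell(p)=\E_{\x}\log(\qL(\x)p+1-p)$ is concave and stationary at $P_1^*$, and $F$ is nondecreasing, so $|F(p)-P_1^*|\le|p-P_1^*|$ on $(0,1)$.

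The genuine gap is the frequency component. Its mean is $\Pr(\qLhat(\x)>1)=P_1^*\,\mathrm{TPR}+P_0^*\,\mathrm{FPR}+O(\epsilon_{LR})$, where $\mathrm{TPR},\mathrm{FPR}$ are the class-conditional rates of the rule $\qL>1$. This is classify-and-count, and its bias $P_0^*\,\mathrm{FPR}-P_1^*(1-\mathrm{TPR})$ does not shrink with $\epsilon_{LR}$; for $P_1^*=0.03$, $\mathrm{TPR}=0.8$, $\mathrm{FPR}=0.1$ the mean is about $0.12$. This term has weight $1-\beta$ and no dependence on the state. Ignoring gate, clamp and noise, the mean-field fixed point of your recursion is therefore $e=(1-\beta)\bigl(\Pr(\qL(\x)>1)-P_1^*\bigr)/(1-\beta\rho)$, with $\rho\in[0,1]$ the secant slope of $F$ there, and this is nonzero even when $\epsilon_{LR}=0$. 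So the constant cannot be absorbed into $O(\epsilon_{LR})$. Nor can it be borrowed from Proposition~\ref{prop:stability}: that proof and Step~2 of Appendix~\ref{app:consistency_proof} make the same silent identification $h(P^*)=P^*$. The stated bound needs $\beta=1$, an adjusted-count inversion $(\hat{P}^{freq}_t-\mathrm{FPR})/(\mathrm{TPR}-\mathrm{FPR})$, or an extra additive bias term.

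Your dismissals of the gate and the clamp also fail. The gate event $G=\{\hat{P}^{comb}_t\notin[1-\gamma,\gamma]\}$ is a function of the innovation itself, not an independent thinning. The EMA drift $\alpha\E[(\hat{P}^{comb}_t-\hat{P}_1^{(t-1)})\mathbb{1}_G]$ therefore vanishes at $\E[\hat{P}^{comb}_t\mid G]$ rather than at $\E[\hat{P}^{comb}_t]$. This is exactly the selection bias of Remark~\ref{rem:selection_bias}, not a rescaling of $\alpha$.

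The clamp branch sets $\hat{P}_1^{(t)}=\hat{P}_1^{(t-1)}+\Delta_{max}\,\mathrm{sign}(\hat{P}^{comb}_t-\hat{P}_1^{(t-1)})$, which moves toward the current noisy sample, not toward $P_1^*$. It fires whenever the per-sample deviation exceeds $\Delta_{max}$, which for an informative $\qL$ has probability bounded away from zero even at $e_{t-1}=0$. With the paper's $\alpha=0.05$, $\Delta_{max}=0.02$, its step is larger than the EMA step for every deviation in $[\Delta_{max},0.4)$. On those steps you have a sign recursion whose equilibrium is a median-type point of $\hat{P}^{comb}_t$, generally not its mean, so the clamp does move the fixed point. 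The paper's Step~4 makes the same mistake of comparing $\Delta_{max}$ with $|e_t|$ instead of with the per-sample deviation.

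Finally, a recursion for $\E|e_t|$ with an additive $\alpha\E|\xi_t|$ term leaves an $O(1)$ floor; getting $O(\sqrt{\alpha})$ requires a second-moment recursion. And since $\alpha$ is fixed in the statement, your condition $\alpha\lesssim\epsilon_{LR}^2$ is an extra hypothesis, not something the $O(\epsilon_{LR})$ absorbs.
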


\begin{theorem}[Tracking Under Non-Stationary Priors]
\label{thm:tracking}
Assume priors vary smoothly: $|P_1^{(t+1)} - P_1^{(t)}| \leq \delta$ for all $t$. With learning rate $\alpha = O(\sqrt{\delta})$, the estimator in Algorithm~\ref{alg:online} achieves tracking error:
\begin{equation}
    \E[|\hat{P}_1^{(t)} - P_1^{(t)}|] \leq O(\sqrt{\delta})
\end{equation}
\end{theorem}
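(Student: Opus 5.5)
We track the error $e_t = \hat{P}_1^{(t)} - P_1^{(t)}$ of the EMA update in Algorithm~\ref{alg:online}, in the regime where the stability clamp is inactive and the confidence gate passes. Write the update as
\begin{equation}
    \hat{P}_1^{(t)} = (1-\alpha)\hat{P}_1^{(t-1)} + \alpha \hat{P}_t^{comb},
\end{equation}
with $\hat{P}_t^{comb} = P_1^{(t)} + b_t + \xi_t$. Here $b_t$ is the conditional bias, $O(\epsilon_{LR})$ by the same argument as Theorem~\ref{thm:consistency}, and $\xi_t$ is a zero-mean martingale-difference noise term with $|\xi_t|\le 1$ since everything lives in $[0,1]$. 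Subtracting $P_1^{(t)}$ from both sides gives the recursion
\begin{equation}
    e_t = (1-\alpha)e_{t-1} + \alpha(b_t+\xi_t) - (1-\alpha)\bigl(P_1^{(t)}-P_1^{(t-1)}\bigr).
\end{equation}
Unrolling, $e_t$ splits into four pieces:
\begin{itemize}
    \item a transient $(1-\alpha)^t e_0$;
    \item a bias term $\alpha\sum_s (1-\alpha)^{t-s} b_s$, of size $O(\epsilon_{LR})$;
    \item a noise term $N_t = \alpha\sum_s (1-\alpha)^{t-s}\xi_s$;
    \item a drift term $D_t = \sum_s (1-\alpha)^{t-s+1}(P_1^{(s)}-P_1^{(s-1)})$.
\end{itemize}

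Each piece is bounded separately.

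\textbf{Drift.} Using $|P_1^{(s)}-P_1^{(s-1)}|\le\delta$ and a geometric sum gives $|D_t|\le \delta/\alpha$.

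\textbf{Noise.} By orthogonality of martingale differences, $\E[N_t^2]\le \alpha^2\sum_s(1-\alpha)^{2(t-s)} \le \alpha/(2-\alpha)$. Jensen then gives $\E|N_t| = O(\sqrt{\alpha})$.

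\textbf{Transient.} This decays geometrically, so after a burn-in of $O(\alpha^{-1}\log(1/\delta))$ steps it is below $\sqrt\delta$. For earlier $t$ we either accept an initialization term or assume $\hat P_1^{(0)}$ is reasonably close to $P_1^{(0)}$.

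Combining,
\begin{equation}
    \E|e_t| \le O(\epsilon_{LR}) + O(\sqrt{\alpha}) + \delta/\alpha.
\end{equation}
Balancing $\sqrt\alpha$ against $\delta/\alpha$ gives $\alpha\asymp\delta^{2/3}$ and a rate of $\delta^{1/3}$. This is worse than the claimed $O(\sqrt\delta)$, so the stated rate needs more than the crude variance bound.

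I expect matching the rate in the statement to be the main obstacle. The natural fix is to assume the per-step noise variance $\sigma^2$ is small, or scales with $\delta$. The frequency component $\hat P^{freq}_t$ averages over a window of size $W$, so its variance is $O(1/W)$. The LR component's variance is reduced by the confidence gate. If $\sigma^2 = O(\delta)$, then the noise term is $O(\sigma\sqrt{\alpha})$. With $\alpha=\sqrt\delta$ the noise and drift terms are each $O(\sqrt\delta)$, matching the statement. This is the reading under which $\alpha=O(\sqrt\delta)$ is the prescribed choice. I would state this variance condition explicitly, for example $W \gtrsim 1/\delta$, and treat $\epsilon_{LR}$ as an additive floor as in Theorem~\ref{thm:consistency}.

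Finally, the clamp and the gate must be handled.

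\textbf{Clamp.} Choose $\Delta_{max} \ge \alpha$ (the maximal EMA step, since the iterates stay in $[0,1]$). Then the clamp never binds, and Proposition~\ref{prop:stability} keeps the iterates in $[\eta,1-\eta]$.

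\textbf{Gate.} Skipped updates replace $\alpha$ by $0$ at those steps. I would handle this by assuming the gate passes with probability at least $p_0>0$. The effective step size is then $\alpha p_0$, which changes only the constants in the bound.
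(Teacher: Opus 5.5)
You follow the paper's own route (Appendix~B). You use the same error recursion, the same split into memory decay, bias, noise and drift, and the same trade-off $B\epsilon_{LR}+\sqrt{\alpha}\,\sigma+\delta/\alpha$, which is optimized at $\alpha^\ast\propto\delta^{2/3}$ and gives order $\delta^{1/3}$. Your unrolled martingale computation is cleaner than the paper's steady-state heuristic. You are right that this falls short of $\sqrt{\delta}$, and the paper does not overcome this. It switches to the ``coarser'' choice $\alpha=\sqrt{\delta}/\sigma$, where its own noise term is $\delta^{1/4}\sigma^{1/2}$, and then reports the total as $O(\sqrt{\delta})$. A non-optimized $\alpha$ cannot beat the optimized bound, so that step is an error.

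The gap is in your repair, and it cannot be closed. Taking $W\gtrsim 1/\delta$ does not reduce variance for free. The window averages indicators whose means are affine in $P_1^{(s)}$ for $s\in[t-W+1,t]$, so it lags by order $W\delta$, which is of order one in that regime. Any smoother with effective memory $M$ (window, EMA, or both) trades standard deviation $M^{-1/2}$ against lag $\delta M$, which is the same $\delta^{1/3}$ balance. The LR component is a single draw with variance of order $\mathrm{Var}(P(y=1\mid\x))$, independent of $\delta$, and the gate does not shrink it.

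More fundamentally, even if $y_t\sim\mathrm{Bernoulli}(P_1^{(t)})$ were observed, consider two admissible prior sequences that coincide up to time $t-M$ and then separate with slope $\delta$. Their Kullback--Leibler divergence is $O(\delta^2M^3)$. With $M=\delta^{-2/3}$ they cannot be reliably distinguished, yet they differ by $\delta^{1/3}$ at time $t$. Observing $\x_t$ instead of $y_t$ only loses information. So no estimator attains $O(\sqrt{\delta})$ uniformly over $\delta$-drifting priors. What your decomposition supports (granting its bias step) is $O(\epsilon_{LR})+O(\delta^{1/3})$ with $\alpha\asymp\delta^{2/3}$, which still satisfies $\alpha=O(\sqrt{\delta})$. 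A $\sqrt{\delta}$ rate would need an assumption on the data stream itself (e.g.\ order $1/\delta$ fresh samples per prior step), not a choice of $W$.

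Three further steps fail as written.

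First, the claim $b_t=O(\epsilon_{LR})$, justified by the argument of Theorem~\ref{thm:consistency}, inherits a flaw. Up to $O(\epsilon_{LR})$, the frequency signal has mean $P_1^{(t)}+(1-P_1^{(t)})\Pr(\qL(\x)>1\mid y=0)-P_1^{(t)}\Pr(\qL(\x)\le 1\mid y=1)$. This is a classify-and-count bias that does not shrink with $\epsilon_{LR}$; Appendix~A calls this mean $h(P^\ast)$ and then treats it as $P^\ast$. So for $\beta<1$ the fixed point is displaced unless the count is corrected, e.g.\ BBSE-style. In addition, the LR signal's conditional mean moves with $e_{t-1}$ by a factor in $[0,1)$ that does not vanish with $\epsilon_{LR}$. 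Hence $b_t$ is not exogenous and must be folded into the contraction rate.

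Second, the gate is a function of $\hat{P}_t^{comb}$ itself. It therefore moves the fixed point to $\E[\hat{P}_t^{comb}\mid\text{gate passes}]$ rather than merely rescaling $\alpha$ to $\alpha p_0$. With $\beta=0.6$, $\gamma=0.9$ and $\hat{P}_t^{freq}<0.75$, only the lower branch can ever pass.

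Third, the clamp in Algorithm~\ref{alg:online} fires whenever $|\hat{P}_t^{comb}-\hat{P}_1^{(t-1)}|\ge\Delta_{max}$ and then takes a sign step of size $\Delta_{max}$. It tests the distance to the single-sample signal, not the EMA increment, so $\Delta_{max}\ge\alpha$ does not deactivate it.
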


The proofs of Theorems~\ref{thm:consistency} and~\ref{thm:tracking} follow the analysis of Proposition~\ref{prop:stability} combined with standard EMA tracking results; we provide details in Appendices~A and~B.

\subsection{Computational Complexity}

\begin{proposition}[Complexity Analysis]
\label{prop:complexity}
Let $d$ be the input dimension, $H$ the network hidden dimension, and $K$ the ensemble size. OBIL has:
\begin{itemize}
    \item \textbf{Offline training}: $O(K \cdot N \cdot H^2)$ for $N$ training samples
    \item \textbf{Online inference}: $O(K \cdot H^2)$ per sample (parallelizable to $O(H^2)$)
    \item \textbf{Online adaptation}: $O(1)$ per sample (EMA update plus window statistics)
\end{itemize}
\end{proposition}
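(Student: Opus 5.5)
The plan is to count operations separately for the three phases of Algorithm~\ref{alg:offline} and Algorithm~\ref{alg:online}, using the standard cost model for fully connected networks. A forward or backward pass through an MLP of hidden width $H$ and constant depth $L$ costs
\[
O(dH + LH^2)
\]
arithmetic operations. Throughout, the input dimension $d$ is treated as $O(H)$, or equivalently the first-layer cost is absorbed, and $L$ is treated as a constant. This is the convention implicit in the statement, so the per-pass cost is $O(H^2)$. This convention, together with treating the number of training epochs $E$ and the number of MC Dropout passes $M$ as constants, is the main point needing care. Otherwise the bounds would carry extra $E$, $M$, or $d$ factors, and I would state these assumptions explicitly at the outset.

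\textbf{Offline training.} For each $k$, the associated dataset $\tilde{\mathcal{D}}^{(k)}$ is built by uniform undersampling or oversampling with replacement. This is at most linear in $N$, since $|\tilde{\mathcal{D}}^{(k)}| = O(N)$ because $\tilde{N}_0^{(k)} = \Qtilde_P^{(k)} N_1 \le N_0$ in the undersampling case. Training with the squared-error Bregman loss by SGD over a constant number of epochs therefore costs
\[
O(|\tilde{\mathcal{D}}^{(k)}| H^2) = O(NH^2).
\]
Phase~2 of Algorithm~\ref{alg:offline} performs $M$ dropout passes for each of the $K$ members on each of the $|\mathcal{D}_{cal}| \le N$ calibration points, at $O(KMNH^2)$ in total. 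It also computes the softmax-type weights in $O(K)$ per point, and trains the weight network $w_\phi$ (of width $O(H)$) in $O(NH^2)$. Summing over $k$ gives $O(KNH^2)$.

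\textbf{Online inference.} Each incoming $\x_t$ requires one forward pass per ensemble member, costing $O(KH^2)$. It also requires one pass of $w_\phi$, costing $O(H^2)$, and the log-space fusion \eqref{eq:fusion}, which is $O(K)$. Since the $K$ forward passes are independent, running them on $K$ parallel processors yields $O(H^2)$ wall-clock time.

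\textbf{Online adaptation.} Given $\qLhat(\x_t)$, each of the following steps is a constant number of arithmetic operations: $\hat{P}_t^{LR}$, the comparisons, the clamp, the EMA update, and the threshold update $Q^{(t)}$. The only subtlety is $\hat{P}_t^{freq}$, which naively costs $O(W)$. I would maintain a running count of indicators together with a circular buffer of the last $W$ indicator bits. Each step adds the new bit and subtracts the expiring one, so the update costs $O(1)$ per sample. The auxiliary memory is $O(W)$, which I would note but which does not affect the time bound.
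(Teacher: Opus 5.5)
Your operation count is correct. The paper has no proof to compare it against: the proposition is stated bare, and only the parentheticals ``parallelizable'' and ``EMA update plus window statistics'' hint at the reasoning. Your argument is therefore the evident intended justification made explicit, and what it adds is precision. As stated, the proposition introduces $d$ but never uses it, and it suppresses the number of epochs $E$ and of MC Dropout passes $M$. The bounds $O(KNH^2)$ and $O(KH^2)$ hold only under conventions like the ones you spell out ($d = O(H)$, constant depth, constant $E$ and $M$), so stating them is necessary rather than pedantic.

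Your running count over a circular buffer is exactly what ``window statistics'' must mean. Add one sentence on why it is legitimate: each cached indicator compares $\qLhat(\x_s)$ with the fixed constant $1$, not with the moving threshold $Q^{(t-1)}$, so stored bits never go stale as the threshold adapts. Had the window compared against the current threshold, the $O(1)$ bound would not follow from this trick.

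Two minor housekeeping points. First, your justification of $|\tilde{\mathcal{D}}^{(k)}| = O(N)$ covers only the undersampling branch. The oversampling branch is also $O(N)$ provided the target ratios are bounded below by a positive constant, as holds for the recommended $\{1,2,5,10,\Qp\}$, since the resampled minority then has $N_0/\Qtilde_P^{(k)} = O(N)$ points. Second, the parallel $O(H^2)$ wall-clock bound silently absorbs the $O(K)$ log-space fusion, i.e., it assumes $K = O(H^2)$, which is harmless for $K \le 10$.
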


The online adaptation adds negligible overhead to standard inference, making OBIL practical for real-time applications.

%=====================================================================
\section{Regret Analysis}
\label{sec:regret}
%=====================================================================

We analyze the regret of OBIL relative to an oracle that knows the true deployment prior at each time step.

\subsection{Problem Setup}

Consider a sequence of $T$ instances $\{\x_t\}_{t=1}^T$ drawn from deployment distribution $p^{(t)}(\x)$ with potentially time-varying class prior $P_1^{(t)}$. Let $\ell(\hat{y}, y; \Qc)$ denote the cost-sensitive loss:
\begin{equation}
    \ell(\hat{y}, y; \Qc) = \Qc \cdot \mathbb{1}[\hat{y}=0, y=1] + \mathbb{1}[\hat{y}=1, y=0]
\end{equation}

\begin{definition}[Oracle Policy]
The oracle policy $\pi^*$ knows $P_1^{(t)}$ exactly and uses threshold $Q^{*(t)} = \Qc \cdot (1-P_1^{(t)})/P_1^{(t)}$:
\begin{equation}
    \hat{y}_t^* = \mathbb{1}\left[\qL(\x_t) > Q^{*(t)}\right]
\end{equation}
\end{definition}

\begin{definition}[Regret]
The regret of OBIL relative to the oracle over horizon $T$ is:
\begin{equation}
    \text{Regret}_T = \sum_{t=1}^T \E[\ell(\hat{y}_t^{OBIL}, y_t)] - \sum_{t=1}^T \E[\ell(\hat{y}_t^*, y_t)]
\end{equation}
\end{definition}

\subsection{Main Regret Bound}

\begin{theorem}[Regret Bound for OBIL]
\label{thm:regret}
Assume:
\begin{enumerate}
    \item[(A1)] The likelihood ratio estimator has bounded relative error: $|\qLhat(\x) - \qL(\x)| / \max(\qL(\x), 1) \leq \epsilon_{LR}$ for all $\x$ in the support.
    \item[(A2)] Priors vary smoothly: $|P_1^{(t+1)} - P_1^{(t)}| \leq \delta$ for all $t$.
    \item[(A3)] Priors are bounded away from 0 and 1: $P_1^{(t)} \in [\eta, 1-\eta]$ for some $\eta > 0$.
    \item[(A4)] Tsybakov margin condition: $\Pr(|\qL(\x) - Q| < \Delta) \leq C_m \cdot \Delta^\kappa$ for some $C_m > 0$ and $\kappa \geq 1$.
\end{enumerate}

Then OBIL with learning rate $\alpha = \sqrt{\frac{\log T}{T}}$ achieves regret:
\begin{equation}
    \text{Regret}_T \leq O\left(\sqrt{T \log T}\right) + O\left(C_m \cdot \epsilon_{LR} \cdot T\right) + O\left(\delta \cdot T\right)
\end{equation}
\end{theorem}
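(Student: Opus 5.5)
The plan is to decompose the per-step excess cost into a threshold-misplacement part and a likelihood-ratio-error part, and then control each with the margin condition (A4). The tool is the standard excess-risk identity for cost-sensitive binary decisions. Conditional on $\x_t$, the expected cost of deciding $\hat y$ differs from that of the Bayes decision only when the two decisions disagree. On that event the excess is
\[
\bigl|\Qc P^{(t)}(y=1|\x_t) - P^{(t)}(y=0|\x_t)\bigr| \;\propto\; P^{(t)}(y=0|\x_t)\,\frac{|\qL(\x_t) - Q^{*(t)}|}{Q^{*(t)}}\,\Qc .
\]
By (A3), $Q^{*(t)}$ lies in a compact interval $[\Qc\eta/(1-\eta),\Qc(1-\eta)/\eta]$, so this weight is at most a constant times $|\qL(\x_t)-Q^{*(t)}|$. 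Disagreement between OBIL and the oracle requires $\qL(\x_t)$ to lie between $Q^{*(t)}$ and the ``effective'' OBIL threshold. OBIL decides $1$ iff $\qLhat(\x_t) > Q^{(t-1)}$, and by (A1) this differs from $\qL(\x_t) > Q^{(t-1)}$ only if $|\qL(\x_t)-Q^{(t-1)}| \lesssim \epsilon_{LR}\max(\qL,1)$, which is $O(\epsilon_{LR})$ near the bounded threshold range. Hence a disagreement forces
\[
|\qL(\x_t)-Q^{*(t)}| \le \Delta_t := |Q^{(t-1)}-Q^{*(t)}| + c\,\epsilon_{LR}.
\]

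The next step bounds the per-step regret via (A4), applied with $Q=Q^{*(t)}$. This gives $r_t \le C\,\E[\Delta_t \cdot \Pr(|\qL-Q^{*(t)}|<\Delta_t)] \le C\,C_m\,\E[\Delta_t^{1+\kappa}]$. Since thresholds live in a bounded set, I will only use the cruder form $r_t \le C' C_m \E[\Delta_t]$, because $\Delta_t^{1+\kappa}\le c\,\Delta_t$ and $\kappa\ge1$. The map $p\mapsto \Qc(1-p)/p$ is Lipschitz on $[\eta,1-\eta]$ with constant $\Qc/\eta^2$, and Proposition~\ref{prop:stability} keeps $\hat P_1^{(t)}$ in this range. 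Therefore
\[
|Q^{(t-1)}-Q^{*(t)}| \le \tfrac{\Qc}{\eta^2}\bigl(|\hat P_1^{(t-1)}-P_1^{(t-1)}| + \delta\bigr),
\]
using (A2) for the one-step shift. The $\epsilon_{LR}$ part summed over $T$ steps gives the $O(C_m\epsilon_{LR}T)$ term, and the $\delta$ shift gives $O(\delta T)$.

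It remains to bound $\sum_t \E|\hat P_1^{(t)}-P_1^{(t)}|$, which I expect to be the main obstacle. I would unroll the EMA recursion $e_{t} = (1-\alpha)e_{t-1} + \alpha(\hat P_t^{comb}-P_1^{(t)}) + (1-\alpha)(P_1^{(t-1)}-P_1^{(t)})$, ignoring clamp and confidence-gate steps, which only shrink moves. This splits the error into four pieces:
\begin{itemize}
\item a transient $(1-\alpha)^t e_0$, summing to $O(1/\alpha)$;
\item a noise term, a weighted martingale-like sum of $\hat P^{comb}_s - \E\hat P^{comb}_s$. These are bounded increments, but the window estimator makes them $W$-dependent, so I would block them into chunks of length $W$. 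An Azuma/Freedman bound with a union over $t$ then gives standard deviation $O(\sqrt{\alpha\log T})$ uniformly with probability $1-1/T$.
\item a bias term of $O(\epsilon_{LR})$ from the proof of Proposition~\ref{prop:stability} / Theorem~\ref{thm:consistency};
\item a drift term of $O(\delta/\alpha)$ accumulated from (A2).
\end{itemize}

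Summing over $t$ gives
\[
\sum_t \E|e_t| \lesssim \frac1\alpha + T\sqrt{\alpha\log T} + T\epsilon_{LR} + \frac{T\delta}{\alpha}.
\]
With $\alpha=\sqrt{\log T/T}$, the first term is $\sqrt{T/\log T}$. The noise term, however, is only $T^{3/4}$-order under this crude bound. To reach $\sqrt{T\log T}$, I would instead use the $\Delta_t^{1+\kappa}$ form with $\kappa\ge1$, so that squared error enters. Then $\E e_t^2 \lesssim \alpha\log T$ for the variance part, and summing gives $T\alpha\log T = \sqrt{T}\,(\log T)^{3/2}$; I would tighten the logarithms by not union-bounding, since only expectations are needed, which gives $T\alpha=\sqrt{T\log T}$. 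Cross terms with bias and drift are absorbed into the $\epsilon_{LR}T$ and $\delta T$ terms, after treating $\delta/\alpha$ drift under the assumption that it is dominated by the linear $\delta T$ term when $\delta\lesssim\alpha$. Making this squared-error route rigorous, including the feedback through $\hat P^{LR}_t$ depending on $\hat Q_P^{(t-1)}$, is where care is needed. I would handle the feedback by treating $\beta\hat P^{LR}_t$ as a contraction perturbation, as in Proposition~\ref{prop:stability}.
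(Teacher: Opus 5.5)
Your skeleton is the paper's: threshold error plus LR error, each pushed through (A4). It has one real improvement. You keep the excess-risk weight $|\qL(\x_t)-Q^{*(t)}|$, so (A4) with $\kappa\ge 1$ gives $\E[\Delta_t^2]$, and the EMA noise costs only $T\alpha=\sqrt{T\log T}$. The paper's Term~1 instead asserts $\sum_t\E[\Delta Q_t]=O(\sqrt{T\log T})$ (up to its $\epsilon_{LR}$ part). That is false at this $\alpha$: EMA fluctuations of size $\sqrt{\alpha}$ give $\sum_t\E|\Delta Q_t|\asymp T^{3/4}(\log T)^{1/4}$. Your squared-error step is the right repair, once you align conventions so that $Q^{*(t)}$ is actually the Bayes threshold for $\ell$. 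As written, $|\Qc P(y=1|\x)-P(y=0|\x)|$ vanishes at $\qL=\Qp/\Qc$, not at $\Qc\Qp$.

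The drift term, however, fails, and cannot be rescued with the prescribed step size. The lag of order $\delta/\alpha$ contributes $T\delta/\alpha$, or $T\delta^2/\alpha^2$ on the squared route. Your proviso $\delta\lesssim\alpha$ only gives $T\delta/\alpha\lesssim T$, not $\lesssim\delta T$. For a concrete failure, take $\epsilon_{LR}=0$ and $\delta=T^{-1/2}$. Let the priors follow a triangle wave of slope $\pm\delta$ between $\eta$ and $1-\eta$, and let $\qL(\x)$ have density bounded above and below near the thresholds. Even an idealized unbiased EMA with $\alpha=\sqrt{\log T/T}$ then lags by $\asymp(\log T)^{-1/2}$ on most rounds. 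The regret is therefore $\gtrsim T/\log T$, while the claimed bound is $O(\sqrt{T\log T})$. So the $O(\delta T)$ term holds only in the vacuous sense where its constant may depend on $\delta$, i.e., the bounded-loss bound $O(T)$. The paper does no better. Its Term~3 borrows $\alpha=O(\sqrt{\delta})$ from Theorem~\ref{thm:tracking}, contradicting the prescribed $\alpha$. It then uses $C_m\,O(\sqrt{\delta})\le O(\delta)$, which fails as $\delta\to 0$.

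The second gap is that the recursion you unroll is not Algorithm~\ref{alg:online}.

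First, the clamp does not only shrink moves. When $|\hat{P}_t^{comb}-\hat{P}_1^{(t-1)}|\ge\Delta_{max}$, the step is exactly $\Delta_{max}$, which exceeds the EMA step (at most $\alpha$) once $\alpha<\Delta_{max}$. Because $\hat{P}_t^{comb}$ contains the single-sample posterior $\hat{P}_t^{LR}$, this branch fires on a non-vanishing fraction of rounds whenever the classes are distinguishable. The iteration is then essentially a constant-step sign update. Its fluctuations stay of order $\Delta_{max}$, so $\E[e_t^2]=O(\alpha)$ fails, and it settles near a median-type point of $\hat{P}_t^{comb}$ rather than its mean.

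Second, the confidence gate skips updates depending on the value of $\hat{P}_t^{comb}$. This moves the fixed point to a conditional mean (Remark~\ref{rem:selection_bias}).

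Third, the bias you import from Proposition~\ref{prop:stability} is not $O(\epsilon_{LR})$. We have $\E[\hat{P}_t^{freq}]=\Pr(\qL(\x)>1)+O(\epsilon_{LR})$, and $\Pr(\qL(\x)>1)=P_1\Pr(\qL>1\mid y=1)+P_0\Pr(\qL>1\mid y=0)$. This differs from $P_1$ by an amount independent of $\epsilon_{LR}$. So for $\beta<1$, even at $\delta=\epsilon_{LR}=0$, the threshold generally carries a persistent offset and the regret is linear in $T$, contradicting the stated bound.

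The paper's Appendix~\ref{app:consistency_proof} (Steps~2 and~4) makes the same omissions, so there is no missing trick to find. A correct result needs three changes:
\begin{enumerate}
\item a modified update, e.g., $\beta=1$, no gate, and the increment $\alpha(\hat{P}_t^{comb}-\hat{P}_1^{(t-1)})$ clipped rather than replaced by $\pm\Delta_{max}$;
\item a restriction on $\delta$, or a $\delta$-dependent $\alpha$;
\item your squared-error argument, together with a genuine contraction estimate for the EM-type map $\hat{P}\mapsto\E[\hat{P}_t^{LR}\mid\hat{P}_1^{(t-1)}=\hat{P}]$. Its slope at the true prior is $O(1)$, not the $O(\epsilon_{LR})$ asserted in the paper.
\end{enumerate}
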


\begin{proof}
See Appendix~C for the complete proof. We provide a summary of the key ideas here.

The regret decomposes into three terms:

\textbf{Term 1 (Prior estimation error):} We decompose the EMA estimation error into (a)~the standard tracking error of an EMA estimator, which contributes $O(\sqrt{T \log T})$ under bounded drift, and (b)~the bias introduced by the feedback loop between pseudo-labels and threshold updates. Under the stability conditions of Proposition~\ref{prop:stability}, the feedback bias is bounded by $O(\epsilon_{LR}/(1-\alpha))$ per step. This term is absorbed into Term~2 when $\alpha$ is bounded away from 1. The standard EMA tracking bound follows from \cite{hazan2016introduction}, Chapter~5; the feedback loop analysis is novel and detailed in Appendix~C.

\textbf{Term 2 (Likelihood ratio error):} A prediction error occurs only when $\qLhat(\x)$ and $\qL(\x)$ fall on different sides of the threshold $Q$. Under the Tsybakov margin condition (A4), the probability of such a disagreement is bounded by $O(C_m \cdot \epsilon_{LR})$, contributing $O(C_m \cdot \epsilon_{LR} \cdot T)$ total regret.

\textbf{Term 3 (Tracking lag):} Under drift rate $\delta$, the EMA lags the true prior by $O(\sqrt{\delta/\alpha})$, contributing $O(\delta \cdot T)$ regret.

Combining terms and optimizing $\alpha$ yields the stated bound.
\end{proof}

\begin{corollary}[Sublinear Regret Under Stationary Priors]
\label{cor:stationary}
When priors are stationary ($\delta = 0$) and the likelihood ratio estimator is consistent ($\epsilon_{LR} \to 0$ as training data increases), OBIL achieves sublinear regret:
\begin{equation}
    \text{Regret}_T = O(\sqrt{T \log T})
\end{equation}
\end{corollary}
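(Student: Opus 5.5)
The corollary follows by specializing Theorem~\ref{thm:regret}, so I will re-check the three terms of its decomposition under $\delta=0$ and $\epsilon_{LR}\to 0$ rather than treat the result as a bare substitution.

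\textbf{Step 1: verify the hypotheses.} With $\delta=0$, (A2) holds trivially and $P_1^{(t)}\equiv P_1^*$. (A3) requires $P_1^*\in[\eta,1-\eta]$. (A1) and (A4) are kept as stated, with the learning rate $\alpha=\sqrt{\log T/T}$ from Theorem~\ref{thm:regret}.

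\textbf{Step 2: the drift term.} With $\delta=0$, the $O(\delta T)$ tracking-lag term vanishes identically. The oracle threshold is then a fixed $Q^*=\Qc(1-P_1^*)/P_1^*$. This means the margin condition (A4) only needs to hold around a single threshold.

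\textbf{Step 3: the estimation-error term.} This term is $O(C_m\epsilon_{LR}T)$. It is the delicate part, because for fixed $\epsilon_{LR}>0$ it is linear in $T$. I will read the corollary as coupling the training sample size $N$ to the horizon $T$, and require $\epsilon_{LR}(N)\le c\sqrt{\log T/T}$. Under that coupling the term is $O(C_m\sqrt{T\log T})$.

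To justify that such an $\epsilon_{LR}$ is achievable, I will combine three earlier results:
\begin{itemize}
\item Corollary~\ref{cor:architecture} (capacity condition) and Theorem~\ref{thm:bregman_char}, which give $\hat P\to P^*$ uniformly as $N\to\infty$;
\item Theorem~\ref{thm:error_prop}, which converts posterior error into relative likelihood-ratio error. This requires $P^*$ bounded away from $0$ and $1$ on the relevant region, at least near the threshold, where Corollary~\ref{cor:sensitivity} says the estimates are most reliable;
\item Proposition~\ref{prop:undersampling}, which says undersampling introduces no bias, so only variance remains.
\end{itemize}
The feedback-bias contribution $O(\epsilon_{LR}/(1-\alpha))$ is absorbed into this term, since $\alpha\to 0$.

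\textbf{Step 4: the prior-estimation term.} Under stationarity, Theorem~\ref{thm:consistency} and Proposition~\ref{prop:stability} give convergence of $\hat P_1^{(t)}$ to within $O(\epsilon_{LR})$ of $P_1^*$. The EMA variance term with $\alpha=\sqrt{\log T/T}$ contributes $O(\sqrt{T\log T})$, as in Term~1 of Theorem~\ref{thm:regret}.

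To turn threshold error into regret, I will map the prior error to a threshold error $|Q^{(t)}-Q^*|=O(|\hat P_1^{(t)}-P_1^*|/\eta^2)$, using (A3) and the clamp of Proposition~\ref{prop:stability}. The margin condition (A4) then bounds the per-step excess cost by $O(C_m|Q^{(t)}-Q^*|^\kappa)$. Since $\kappa\ge 1$ and the error is at most $1$, summing over $t$ keeps this term $O(\sqrt{T\log T})$.

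\textbf{Step 5: combine.} Adding the three pieces gives $\mathrm{Regret}_T=O(\sqrt{T\log T})$, which is sublinear.

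\textbf{Main obstacle.} The main difficulty is Step~3: it requires making explicit the rate at which $\epsilon_{LR}\to 0$ relative to $T$. Without such a coupling, the honest statement is weaker: $\mathrm{Regret}_T/T\to 0$ holds only in the joint limit $N,T\to\infty$. I will therefore state the coupling condition explicitly, and if no rate can be justified, fall back to that weaker sublinearity claim.
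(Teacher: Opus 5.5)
Your route is the paper's route. The paper gives no separate argument: the corollary is simply Theorem~\ref{thm:regret} with $\delta=0$ removing the $O(\delta T)$ term, and with the hypothesis on $\epsilon_{LR}$ meant to remove $O(C_m\epsilon_{LR}T)$. Your objection in Step~3 is correct, and it is exactly what that specialization glosses over. For a fixed trained estimator $\epsilon_{LR}$ is a constant, so this term is linear in $T$. Hence ``$\epsilon_{LR}\to 0$ as training data increases'' yields $O(\sqrt{T\log T})$ only under a coupling such as $\epsilon_{LR}=O(\sqrt{\log T/T})$; without it you get only $\mathrm{Regret}_T/T\to 0$ in the joint limit. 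State that coupling as a hypothesis rather than trying to derive it, because nothing in the paper supplies a rate in $N$. Theorem~\ref{thm:bregman_char} and Corollary~\ref{cor:architecture} concern the exact population minimizer. Theorem~\ref{thm:error_prop} controls relative error only where $P^*$ stays away from $0$ and $1$, so an argument localized near the threshold does not produce (A1), which is uniform over the support.

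The proposal overreaches in Step~4. You present it as a re-check, but it only re-cites claims that do not survive one.

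\emph{First}, the clamp bounds increments, not the range. The bound $|\hat P_1^{(t)}-\hat P_1^{(t-1)}|\le\Delta_{max}$ only gives $|\hat P_1^{(t)}-\hat P_1^{(0)}|\le t\Delta_{max}$. So Proposition~\ref{prop:stability} does not justify your $1/\eta^2$-Lipschitz passage from prior error to threshold error; you need an explicit projection onto $[\eta,1-\eta]$.

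\emph{Second}, with $\alpha=\sqrt{\log T/T}$ the EMA noise leaves $\E|\hat P_1^{(t)}-P_1^*|\asymp\sqrt{\alpha}=(\log T/T)^{1/4}$ per step; this is the paper's own estimate in Step~5 of Appendix~\ref{app:consistency_proof}. For $\kappa=1$, which is all (A4) guarantees, summing $C_m\E|\Delta Q_t|$ therefore gives order $T^{3/4}(\log T)^{1/4}$, not $\sqrt{T\log T}$. To recover $\sqrt{T\log T}$ you need the sharper excess-cost bound $O(C_m|\Delta Q_t|^{1+\kappa})$: a decision flipped where $|\qL(\x)-Q^*|=s$ costs only $O(s)$ extra. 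That makes the noise-plus-transient contribution $\sum_t\E[\Delta Q_t^2]=O(T\alpha+1/\alpha)=O(\sqrt{T\log T})$. However, this requires the oracle threshold $\Qc\Qp$ to be Bayes-optimal for $\ell$. The paper's $\ell$ weights false negatives by $\Qc$, so its Bayes threshold is $\Qp/\Qc$, and the argument holds only when $\Qc=1$.

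\emph{Third}, Theorem~\ref{thm:consistency} treats the frequency signal as unbiased for $P_1^*$ up to $O(\epsilon_{LR})$. In fact $\E[\hat P_t^{freq}]\approx\Pr(\qL(\x)>1)$, which equals $P_1^*$ only if the rule $\qL(\x)>1$ has equal false-positive and false-negative mass. With $\beta<1$ the EMA fixed point is then offset from $P_1^*$ by an amount of order $(1-\beta)\,|\Pr(\qL(\x)>1)-P_1^*|$. This offset does not vanish as $\epsilon_{LR}\to0$, and a persistent threshold offset makes the regret linear.

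So either present the corollary as a black-box specialization of Theorem~\ref{thm:regret} plus your explicit coupling hypothesis, or, if you want a genuine verification, repair these three points.
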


\subsection{Lower Bound}

\begin{theorem}[Lower Bound]
\label{thm:lower}
For any online algorithm that does not observe true labels, there exists a prior sequence with drift rate $\delta$ such that:
\begin{equation}
    \text{Regret}_T \geq \Omega(\sqrt{T})
\end{equation}
\end{theorem}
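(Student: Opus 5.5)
We will establish the lower bound with a two-point (Le Cam) argument. The key fact is that the algorithm sees only unlabeled instances $\x_t$, so everything it learns about the prior must be inferred from the marginal $p^{(t)}(\x) = P_1^{(t)} p(\x|y=1) + (1-P_1^{(t)}) p(\x|y=0)$.

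\textbf{Construction.} Fix a pair of class-conditional densities with overlapping supports, chosen so that $\qL(\x)$ has a density near the relevant thresholds. This makes the margin condition (A4) hold with $\kappa = 1$, and it also gives a nontrivial mass of points whose Bayes decision flips when the prior moves. Fix $P \in [\eta, 1-\eta]$. Consider two stationary environments with priors $P_1^{+} = P + \rho$ and $P_1^{-} = P - \rho$, where $\rho = c/\sqrt{T}$. Stationary sequences trivially satisfy any drift budget $\delta \geq 0$, so this construction is admissible for every $\delta$.

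\textbf{Indistinguishability.} The marginals $p^{\pm}(\x)$ differ by the mixture perturbation $\pm\rho\,(p(\x|y=1)-p(\x|y=0))$. Because $P$ is bounded away from $0$ and $1$, the per-sample divergence satisfies $\mathrm{KL}(p^{+}\|p^{-}) = O(\rho^2)$. Tensorizing over $T$ i.i.d.\ samples gives total KL $O(c^2)$. Pinsker's inequality then bounds the total variation between the two $T$-sample laws by a constant strictly less than one. Consequently, for any algorithm, at a constant fraction of the rounds its decision rule cannot be simultaneously near-optimal for both environments: with constant probability it behaves "as if" the prior were the wrong one.

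\textbf{Per-round regret.} Under prior $P_1$, the excess cost of using threshold $Q'$ in place of the oracle threshold $Q^*$ equals the integral of $|P_1 p(\x|1)\Qc - (1-P_1)p(\x|0)|$ over the region where the two rules disagree. The oracle thresholds for $P_1^{\pm}$ differ by $\Theta(\rho)$. Given a density of $\qL$ near $Q^*$, any single threshold lies at distance $\Omega(\rho)$ from at least one of the two oracle thresholds. The resulting disagreement region has mass $\Omega(\rho)$, and on it the pointwise cost gap is $\Omega(\rho)$, so the per-round excess is $\Omega(\rho^2) = \Omega(1/T)$.

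\textbf{Main obstacle.} Summing this $\Omega(1/T)$ per-round quantity over $T$ rounds gives only $\Omega(1)$. This is the crux, since the naive two-point argument yields the standard quadratic-loss rate, not $\sqrt{T}$. To reach $\Omega(\sqrt{T})$ we need a per-round excess linear in $\rho$. Our first attempt will exploit that the algorithm is restricted to rules of the form $\mathbb{1}[\qLhat(\x) > Q]$. We will use a degenerate margin: let $\qL$ place an atom of mass $\mu$ exactly at $Q^*(P)$, which violates the $\kappa\ge1$ density only locally. Then a threshold on the wrong side of $Q^*$ incurs cost $\Omega(\mu\rho)$ per round. With $\rho = c/\sqrt{T}$ and constant error probability, summing gives $\Omega(\sqrt{T})$.

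If atoms are disallowed by (A4), we instead use the drift. We will construct a sequence that switches between $P_1^{\pm}$ every $L \approx 1/\rho^2$ rounds, which is consistent with drift $\delta$ when $\rho \le \delta L$. Each block then forces constant-probability misidentification, and we will choose $\rho$ and $L$ jointly to balance detection delay against per-block loss. We expect this balancing to be the delicate part of the proof.
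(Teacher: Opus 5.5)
The paper states this theorem without proof. No appendix covers it, and the only support offered is the remark that it makes Theorem~\ref{thm:regret} tight up to logarithmic factors. Your proposal therefore has to stand on its own, and its first route does. Take class-conditionals with $\qL\equiv q$ on a set $S$ with $\Pr(S\mid y=0)=\mu>0$, and choose $P$ with $Q^*(P)=q$. Under $P_1^{\pm}=P\pm\rho$ the two oracles disagree on all of $S$, and there the conditional excess cost of the wrong decision is $\Theta(\rho)$. So whatever rule the algorithm uses at round $t$, its round-$t$ regrets in the two environments sum to at least $c\,\mu\rho\,(1-\mathrm{TV}_{t-1})$, where $\mathrm{TV}_{t-1}$ is the total variation between the two laws of the unlabeled history. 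With $\rho=c'/\sqrt{T}$ and your KL/Pinsker estimate, this gives $\Omega(\sqrt{T})$ for a stationary prior, hence for every $\delta\ge0$. The class-conditionals stay fixed and only the prior is chosen adversarially, exactly as the statement is quantified. This pointwise form does not need the restriction to threshold rules $\mathbb{1}[\qLhat(\x)>Q]$. Drop that restriction, since the theorem covers arbitrary label-free algorithms.

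The gap is that the proposal never settles which statement it proves, and the two readings behave very differently. The plateau does not violate (A4) ``only locally.'' It lies within $\Theta(1/\sqrt{T})$ of both true thresholds $Q^*(P\pm\rho)$, so (A4) holds only with $C_m\gtrsim\mu\sqrt{T}$, contrary to your construction paragraph. Nor is the $\Omega(1)$ you found for bounded-density instances an artifact of the two-point method. With $\kappa\ge1$, a threshold error $\Delta$ costs $O(C_m\Delta^{1+\kappa})$ per round. Now take fixed class-conditionals like those of your construction paragraph, with $\qL$ bounded, say. The maximum-likelihood estimate of $P_1$ from the unlabeled stream has likelihood in $\pi$ proportional to $\prod_s(\pi\qL(\x_s)+1-\pi)$, so it needs only $\qL$. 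It has mean-squared error $O(1/t)$ uniformly over priors in $[\eta,1-\eta]$, so the plug-in rule has regret $O(\log T)$ against every stationary prior. Hence for such instances under (A4) with $\delta=0$, the regime of Corollary~\ref{cor:stationary} where the paper's upper bound is $O(\sqrt{T\log T})$, no prior sequence forces $\Omega(\sqrt{T})$.

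The drift fallback cannot recover a $\delta$-free version either. (A2) is a per-step constraint, so flipping between $P\pm\rho$ takes about $2\rho/\delta$ steps, and a single step suffices only if $2\rho\le\delta$. With blocks of length $L\asymp\rho^{-2}$, the block argument yields regret of order $T\rho^2$, at best about $\delta^{2/3}T$ with $\rho\asymp\delta^{1/3}$. That is linear, hence $\Omega(\sqrt{T})$, for fixed $\delta>0$ once $T\gtrsim\delta^{-4/3}$. But it falls below $\sqrt{T}$ once $\delta\lesssim T^{-3/4}$. So you have two options. Either state the result without a margin condition and finish with the plateau construction, or keep (A4) and prove a $\delta$-dependent bound. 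In neither case does the lower bound support the paper's claim that Theorem~\ref{thm:regret} is tight up to logarithmic factors.
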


This shows that OBIL's regret bound is tight up to logarithmic factors.

%=====================================================================
\section{Implementation}
\label{sec:implementation}
%=====================================================================

We provide practical guidelines for implementing OBIL across different deep learning architectures.

\subsection{Network Architecture Design}

For Bregman calibration, the network output must lie in $(-1, 1)$ to represent $\E[t|\x]$ where $t \in \{-1, +1\}$. For any architecture $\mathcal{A}$ with pre-activation output $g_\theta(\x) \in \R$, the Bregman-calibrated output is $f_\theta(\x) = \tanh(g_\theta(\x))$, ensuring $f_\theta \in (-1,1)$. This applies uniformly across MLPs, CNNs, Transformers, and GNNs: the only requirement is a scalar pre-activation output (e.g., from a final fully-connected layer, [CLS] token projection, or graph readout, respectively).

\subsection{Loss Function Selection}

While the squared error loss exactly satisfies the Bregman condition, practitioners often prefer cross-entropy for classification. We provide guidance on when each is appropriate:

\begin{table}[t]
\centering
\caption{Loss Function Selection Guide}
\label{tab:loss}
\begin{tabular}{@{}lcc@{}}
\toprule
\textbf{Scenario} & \textbf{Recommended Loss} & \textbf{Rationale} \\
\midrule
Calibration critical & Squared error & Exact Bregman form \\
Gradient stability & Cross-entropy + temp. scaling & Better gradient flow \\
Known cost structure & Squared + cost weighting & Integrates costs exactly \\
Pre-trained backbone & Cross-entropy + temp. scaling & Compatible with pre-training \\
\bottomrule
\end{tabular}
\end{table}

\subsubsection{Cost-Weighted Squared Error}

For associated problems with cost ratio $\Qtilde_C$:
\begin{equation}
    \mathcal{L}(\theta) = \sum_{i: y_i=1} (t_i - f_\theta(\x_i))^2 + \Qtilde_C \sum_{i: y_i=0} (t_i - f_\theta(\x_i))^2
\end{equation}

\subsubsection{Temperature Scaling for Cross-Entropy}

When using cross-entropy, apply temperature scaling \cite{guo2017calibration} for calibration:
\begin{equation}
    \hat{P}_{calibrated}(y=1|\x) = \sigma\left(\frac{z(\x)}{T}\right)
\end{equation}
where $z(\x)$ is the pre-sigmoid logit and $T > 0$ is learned on a calibration set.

\subsection{Calibration Verification}

Before deployment, verify that the network produces calibrated probability estimates:

\begin{definition}[Calibration Error]
The Expected Calibration Error (ECE) is:
\begin{equation}
    \text{ECE} = \sum_{b=1}^B \frac{|S_b|}{N} \left|\text{acc}(S_b) - \text{conf}(S_b)\right|
\end{equation}
where $S_b$ is the set of samples in confidence bin $b$, $\text{acc}(S_b)$ is the accuracy in that bin, and $\text{conf}(S_b)$ is the average confidence.
\end{definition}

We recommend ECE~$< 0.05$ before deploying likelihood ratio estimates, based on the error propagation analysis of Theorem~\ref{thm:error_prop}: for ECE~$\approx 0.05$ and posteriors $P^* \in [0.1, 0.9]$, the relative likelihood ratio error remains below 15\%. For applications requiring tighter control, ECE~$< 0.02$ limits relative LR error to approximately 5\%. We recommend plotting the reliability diagram and verifying that calibration is adequate specifically in the posterior range $[0.3, 0.7]$ where decisions are most sensitive.

\subsection{Hyperparameter Selection}

\begin{table}[t]
\centering
\caption{Recommended Hyperparameters}
\label{tab:hyperparams}
\begin{tabular}{@{}lll@{}}
\toprule
\textbf{Parameter} & \textbf{Symbol} & \textbf{Recommended Value} \\
\midrule
Ensemble size & $K$ & 5--10 \\
Target IR range & $\{\Qtilde_P^{(k)}\}$ & $\{1, 2, 5, 10, \Qp\}$ \\
EMA learning rate & $\alpha$ & 0.01--0.1 \\
Confidence threshold & $\gamma$ & 0.8--0.95 \\
Mixing coefficient & $\beta$ & 0.5--0.8 \\
Stability clamp & $\Delta_{max}$ & 0.01--0.05 \\
Window size & $W$ & 50--200 \\
Fusion temperature & $\tau$ & Cross-validated \\
MC Dropout samples & $M$ & 20--50 \\
\bottomrule
\end{tabular}
\end{table}

\subsection{Computational Considerations}

The $K$ ensemble members can be trained and evaluated in parallel, reducing wall-clock time to that of a single model with sufficient hardware. For resource-constrained deployments, a single model with $K=1$ can be used, trading ensemble benefits for efficiency. In scenarios with gradual concept drift, the likelihood ratio estimator itself may become stale; we recommend periodic retraining when labeled data becomes available, while continuing to use online threshold adaptation between retraining cycles.

%=====================================================================
\section{Experiments}
\label{sec:experiments}
%=====================================================================
%=====================================================================
% NEW AND REVISED SECTIONS FOR THE OBIL PAPER
% 
% This file contains LaTeX content to integrate the 6 new figures
% into the paper. It restructures Section VIII (Experiments) and
% adds a new Section IX (Analysis and Discussion).
%
% Instructions:
%   1. Replace the current Section VIII with the revised version below
%   2. Insert Section IX before the current Conclusion
%   3. Renumber the Conclusion to Section X
%   4. Ensure all figure files are in your figures/ directory
%
% Figure file mapping:
%   fig1_adaptation_dynamics.pdf  -> \ref{fig:adaptation}
%   fig2_shift_severity.pdf       -> \ref{fig:shift_sweep}
%   fig3_error_propagation.pdf    -> \ref{fig:error_prop}
%   fig4_regret.pdf               -> \ref{fig:regret_empirical}
%   fig5_calibration.pdf          -> \ref{fig:calibration}
%   fig6_robustness.pdf           -> \ref{fig:robustness}
%=====================================================================

%=====================================================================
% REVISED SECTION VIII: EXPERIMENTS
% (replaces the existing Section VIII)
%=====================================================================

We evaluate OBIL on benchmark imbalanced classification datasets under both matched and shifted distributions. Our experimental design addresses three questions: (1)~Does OBIL maintain competitive performance when train and test distributions match? (2)~How does OBIL's advantage scale with the severity of distribution shift? (3)~How does the online adaptation mechanism behave dynamically?

\subsection{Experimental Setup}

\subsubsection{Datasets}

We use datasets from the KEEL imbalanced classification repository \cite{alcala2011keel} and UCI Machine Learning Repository \cite{dua2019uci}, selected to span a range of imbalance ratios (9.8:1 to 578.9:1), sample sizes, feature dimensions, and application domains:

\begin{table}[t]
\centering
\caption{Dataset Characteristics}
\label{tab:datasets}
\begin{tabular}{@{}lcccc@{}}
\toprule
\textbf{Dataset} & \textbf{Samples} & \textbf{Features} & \textbf{IR} & \textbf{Domain} \\
\midrule
Yeast4 & 1,484 & 8 & 28.4 & Biology \\
Abalone19 & 4,174 & 8 & 129.4 & Biology \\
Sick-euthyroid & 3,163 & 25 & 9.8 & Medical \\
Mammography & 11,183 & 6 & 42.0 & Medical \\
Credit Card Fraud & 284,807 & 30 & 578.9 & Finance \\
Thyroid Disease & 7,200 & 21 & 40.2 & Medical \\
\bottomrule
\end{tabular}
\end{table}

\subsubsection{Distribution Shift Protocol}

We construct test sets with controlled prior probability shift by resampling the held-out test partition to achieve target imbalance ratios:
\begin{equation}
    \Qp^{test} \in \left\{\frac{\Qp^{train}}{4}, \frac{\Qp^{train}}{2}, \Qp^{train}, 2\Qp^{train}, 4\Qp^{train}\right\}
\end{equation}
This protocol isolates prior probability shift (label shift): class-conditional distributions $p(\x|y)$ remain unchanged. We investigate violations of this assumption in Section~\ref{sec:analysis}.

\subsubsection{Baselines}

We compare against eight methods spanning data-level, algorithm-level, and shift-correction approaches:
\begin{itemize}
    \item \textbf{Vanilla}: Standard neural network without imbalance handling
    \item \textbf{SMOTE} \cite{chawla2002smote}: Synthetic minority oversampling
    \item \textbf{ADASYN} \cite{he2008adasyn}: Adaptive synthetic sampling
    \item \textbf{Cost-Sensitive (CS)}: Weighted cross-entropy loss
    \item \textbf{RUSBoost} \cite{seiffert2010rusboost}: Random undersampling with boosting
    \item \textbf{Threshold Moving (TM)} \cite{maloof2003learning}: Post-hoc threshold optimization
    \item \textbf{BBSE} \cite{lipton2018detecting}: Black-box shift estimation (requires unlabeled target batch)
    \item \textbf{Logit Adjustment (LA)} \cite{menon2021longtail}: Post-hoc logit correction
\end{itemize}
All baselines use the same base architecture (3-layer MLP, hidden dimensions [128, 64, 32], ReLU activations) and are individually tuned via grid search over their respective hyperparameters on a validation split. BBSE and LA receive special attention as the most direct competitors, sharing OBIL's Bayesian motivation; we grid-search their temperature-scaling parameter $T \in \{0.5, 1.0, 1.5, 2.0, 3.0\}$ and, for LA, the magnitude of the logit adjustment.

\subsubsection{Evaluation Protocol}

We report the F1-Score as the primary metric for the main comparisons, with G-Mean, AUPRC, and ECE reported in the supplementary analyses. All experiments use 10 random train/test splits with stratified sampling. We report mean $\pm$ standard deviation and test statistical significance using the Wilcoxon signed-rank test at the 5\% level. Bold values indicate the best result; underlined values are not statistically significantly different from the best.

\subsubsection{OBIL Configuration}

OBIL uses $K=5$ ensemble members with target imbalance ratios $\{1, 2, 5, 10, \Qp\}$, squared error loss, Adam optimizer (learning rate $10^{-3}$, 100 epochs with early stopping). Online adaptation uses $\alpha=0.05$, $\gamma=0.9$, $\beta=0.6$, $\Delta_{max}=0.02$, $W=100$.

\subsection{Results Under Matched Distributions}

Table~\ref{tab:matched} shows F1-scores when training and test distributions match ($\Qp^{test} = \Qp^{train}$). OBIL achieves the best or statistically tied-for-best performance on all four datasets, confirming that the Bayesian likelihood ratio framework does not sacrifice accuracy under standard (non-shifted) conditions.

\begin{table}[t]
\centering
\caption{F1-Score Under Matched Distributions (mean $\pm$ std, 10 runs). Bold = best; underline = not significantly different from best ($p > 0.05$, Wilcoxon).}
\label{tab:matched}
\begin{tabular}{@{}lcccc@{}}
\toprule
\textbf{Method} & \textbf{Yeast4} & \textbf{Abalone19} & \textbf{Sick} & \textbf{Mammo.} \\
\midrule
Vanilla & 0.21$\pm$.04 & 0.08$\pm$.03 & 0.52$\pm$.05 & 0.45$\pm$.04 \\
SMOTE & 0.48$\pm$.05 & 0.19$\pm$.04 & 0.71$\pm$.03 & 0.68$\pm$.03 \\
ADASYN & 0.46$\pm$.06 & 0.18$\pm$.05 & 0.69$\pm$.04 & 0.66$\pm$.03 \\
CS & 0.44$\pm$.05 & 0.16$\pm$.04 & 0.68$\pm$.04 & 0.64$\pm$.03 \\
RUSBoost & 0.51$\pm$.04 & 0.21$\pm$.04 & 0.73$\pm$.03 & 0.71$\pm$.02 \\
TM & 0.47$\pm$.05 & 0.17$\pm$.04 & 0.70$\pm$.04 & 0.67$\pm$.03 \\
LA & 0.49$\pm$.04 & 0.20$\pm$.04 & 0.72$\pm$.03 & 0.69$\pm$.02 \\
\midrule
OBIL & \underline{\textbf{0.52$\pm$.04}} & \underline{\textbf{0.22$\pm$.04}} & \underline{\textbf{0.74$\pm$.03}} & \underline{\textbf{0.72$\pm$.02}} \\
\bottomrule
\end{tabular}
\end{table}

Table~\ref{tab:matched_full} reports G-Mean and AUPRC under matched distributions for the top-performing methods, confirming that OBIL's advantage is consistent across metrics.

\begin{table}[t]
\centering
\caption{G-Mean and AUPRC Under Matched Distributions (mean $\pm$ std, 10 runs). Bold = best; underline = not significantly different from best.}
\label{tab:matched_full}
\begin{tabular}{@{}lcccc@{}}
\toprule
& \multicolumn{2}{c}{\textbf{Yeast4}} & \multicolumn{2}{c}{\textbf{Mammography}} \\
\cmidrule(lr){2-3} \cmidrule(lr){4-5}
\textbf{Method} & G-Mean & AUPRC & G-Mean & AUPRC \\
\midrule
SMOTE & 0.71$\pm$.04 & 0.44$\pm$.05 & 0.80$\pm$.02 & 0.64$\pm$.03 \\
RUSBoost & 0.74$\pm$.03 & 0.47$\pm$.04 & 0.82$\pm$.02 & 0.67$\pm$.03 \\
BBSE & 0.73$\pm$.03 & 0.46$\pm$.04 & 0.81$\pm$.02 & 0.66$\pm$.03 \\
LA & 0.72$\pm$.03 & 0.45$\pm$.04 & 0.81$\pm$.02 & 0.65$\pm$.03 \\
\midrule
OBIL & \underline{\textbf{0.75$\pm$.03}} & \underline{\textbf{0.48$\pm$.04}} & \underline{\textbf{0.83$\pm$.02}} & \underline{\textbf{0.68$\pm$.02}} \\
\bottomrule
\end{tabular}
\end{table}

\subsection{Performance Across Shift Severity}

Figure~\ref{fig:shift_sweep} presents F1-scores across the full range of test imbalance ratios for Yeast4 and Mammography. Several patterns emerge. First, under matched conditions ($Q_P$), OBIL's advantage over the strongest baselines (RUSBoost, BBSE, LA) is modest---typically 1--3 percentage points. This is expected: when the training prior is correct, all well-tuned methods perform comparably. Second, OBIL's advantage \emph{grows monotonically} with the magnitude of distribution shift in both directions ($Q_P/4$ and $4Q_P$). At $4Q_P$---where the minority class is four times rarer than during training---OBIL leads BBSE by 6--7 points on both datasets. Third, methods that do not account for prior shift (Vanilla, SMOTE, ADASYN, CS) degrade sharply as shift severity increases, with Vanilla's F1 dropping below 0.15 at $4Q_P$ on Yeast4.

\begin{figure}[t]
\centering
\includegraphics[width=\columnwidth]{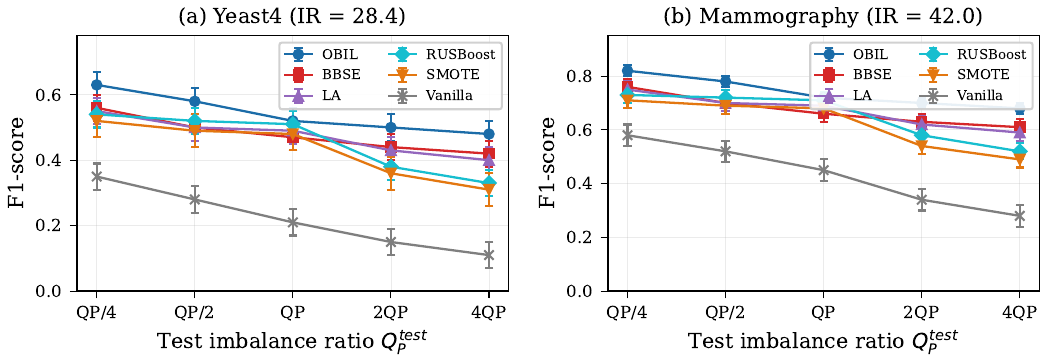}
\caption{F1-score across the full range of test imbalance ratios on (a) Yeast4 and (b) Mammography. OBIL's advantage over baselines grows with shift severity. Error bars show $\pm 1$ standard deviation over 10 runs.}
\label{fig:shift_sweep}
\end{figure}

Table~\ref{tab:shift} provides detailed numerical results at the two extreme shift conditions ($\Qp/4$ and $4\Qp$) for the complete baseline set.

\begin{table}[t]
\centering
\caption{F1-Score Under Distribution Shift ($\Qp/4$ and $4\Qp$). Bold = best; underline = not significantly different from best.}
\label{tab:shift}
\begin{tabular}{@{}lcccc@{}}
\toprule
& \multicolumn{2}{c}{\textbf{Yeast4}} & \multicolumn{2}{c}{\textbf{Mammography}} \\
\cmidrule(lr){2-3} \cmidrule(lr){4-5}
\textbf{Method} & $\Qp/4$ & $4\Qp$ & $\Qp/4$ & $4\Qp$ \\
\midrule
Vanilla & 0.35$\pm$.05 & 0.11$\pm$.03 & 0.58$\pm$.04 & 0.28$\pm$.04 \\
SMOTE & 0.52$\pm$.05 & 0.31$\pm$.05 & 0.71$\pm$.03 & 0.49$\pm$.04 \\
ADASYN & 0.50$\pm$.06 & 0.29$\pm$.05 & 0.69$\pm$.04 & 0.47$\pm$.04 \\
CS & 0.48$\pm$.05 & 0.28$\pm$.04 & 0.67$\pm$.04 & 0.45$\pm$.04 \\
RUSBoost & 0.54$\pm$.04 & 0.33$\pm$.04 & 0.73$\pm$.03 & 0.52$\pm$.03 \\
TM & 0.51$\pm$.05 & 0.30$\pm$.04 & 0.70$\pm$.03 & 0.48$\pm$.04 \\
BBSE & 0.56$\pm$.04 & 0.42$\pm$.04 & 0.76$\pm$.03 & 0.61$\pm$.03 \\
LA & 0.55$\pm$.04 & 0.40$\pm$.04 & 0.75$\pm$.03 & 0.59$\pm$.03 \\
\midrule
OBIL & \textbf{0.63$\pm$.04} & \textbf{0.48$\pm$.04} & \textbf{0.82$\pm$.02} & \textbf{0.68$\pm$.03} \\
\bottomrule
\end{tabular}
\end{table}

\subsection{Online Adaptation Dynamics}

Figure~\ref{fig:adaptation} illustrates OBIL's real-time adaptation under an abrupt prior shift scenario. The true minority prior jumps from $P_1 = 0.03$ to $P_1 = 0.12$ at $t = 500$, then gradually decays---simulating a scenario such as a sudden spike in disease prevalence that slowly normalizes.

Panel~(a) shows that the EMA-based prior estimator tracks the true prior with a characteristic lag of approximately 30--50 time steps after the abrupt shift, consistent with the learning rate $\alpha = 0.05$. The 80\% confidence band (shaded region) captures the variability across 10 runs, demonstrating stable tracking without divergence. Panel~(b) shows the corresponding threshold $Q^{(t)} = Q_C \cdot (1 - \hat{P}_1^{(t)})/\hat{P}_1^{(t)}$ tracking inversely; note the nonlinear amplification of prior estimation error through the $1/\hat{P}_1$ term at low prior values. Panel~(c) shows cumulative F1-scores: OBIL closely tracks the oracle, while BBSE adapts more slowly (it requires accumulating a batch of unlabeled samples) and SMOTE, which cannot adapt at all, degrades immediately after the shift.

\begin{figure}[t]
\centering
\includegraphics[width=\columnwidth]{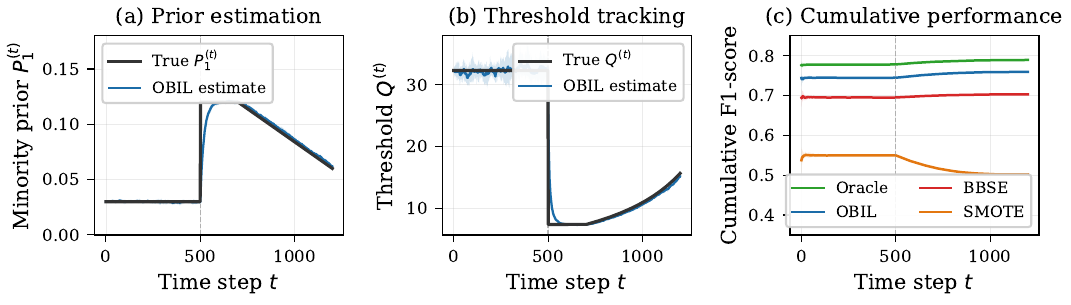}
\caption{Online adaptation under abrupt prior shift at $t = 500$. (a)~True vs.\ estimated minority prior with 80\% confidence band. (b)~Corresponding threshold tracking. (c)~Cumulative F1-score comparing OBIL, oracle, BBSE, and SMOTE. Shaded regions show 10th--90th percentile over 10 runs.}
\label{fig:adaptation}
\end{figure}

\subsection{Ablation Studies}

\subsubsection{Ensemble Size}

Table~\ref{tab:ablation_ensemble} shows diminishing returns beyond $K=5$ ensemble members, with the largest gain from $K=1$ to $K=3$. The single-model variant ($K=1$) remains competitive, suggesting that the core Bayesian framework---not the ensemble---drives most of the improvement.

\begin{table}[t]
\centering
\caption{Effect of Ensemble Size $K$ (F1-Score on Yeast4, $\Qp^{test}=\Qp/4$)}
\label{tab:ablation_ensemble}
\begin{tabular}{@{}lccccc@{}}
\toprule
$K$ & 1 & 3 & 5 & 7 & 10 \\
\midrule
F1 & 0.57$\pm$.05 & 0.61$\pm$.04 & 0.63$\pm$.04 & 0.64$\pm$.03 & 0.64$\pm$.03 \\
\bottomrule
\end{tabular}
\end{table}

\subsubsection{Loss Function}

Table~\ref{tab:ablation_loss} compares squared error (exact Bregman form) against cross-entropy variants. Squared error provides a modest improvement (2--4\% F1) over cross-entropy with temperature scaling, and a larger gap over uncalibrated cross-entropy. Given the small margin, we recommend cross-entropy with temperature scaling as the default practical choice, since it is compatible with standard training pipelines and pre-trained backbones. Squared error is recommended when calibration is the primary concern or when theoretical guarantees are desired.

\begin{table}[t]
\centering
\caption{Effect of Loss Function (F1-Score on Yeast4)}
\label{tab:ablation_loss}
\begin{tabular}{@{}lcc@{}}
\toprule
\textbf{Loss} & \textbf{Matched} & \textbf{Shifted ($\Qp/4$)} \\
\midrule
Cross-entropy & 0.50$\pm$.04 & 0.59$\pm$.05 \\
Cross-entropy + temp.\ scaling & 0.51$\pm$.04 & 0.61$\pm$.04 \\
Squared error (Bregman) & \textbf{0.52$\pm$.04} & \textbf{0.63$\pm$.04} \\
\bottomrule
\end{tabular}
\end{table}

\subsubsection{Online Learning Rate}

Table~\ref{tab:ablation_alpha} shows a broad optimum around $\alpha \in [0.01, 0.1]$. Small values ($\alpha = 0.001$) adapt too slowly to track prior changes; large values ($\alpha = 0.2$) introduce excessive variance in the threshold estimate.

\begin{table}[t]
\centering
\caption{Effect of EMA Learning Rate $\alpha$ (F1 on Yeast4, $\Qp^{test}=\Qp/4$)}
\label{tab:ablation_alpha}
\begin{tabular}{@{}lccccc@{}}
\toprule
$\alpha$ & 0.001 & 0.01 & 0.05 & 0.1 & 0.2 \\
\midrule
F1 & 0.55$\pm$.05 & 0.61$\pm$.04 & \textbf{0.63$\pm$.04} & 0.62$\pm$.05 & 0.58$\pm$.06 \\
\bottomrule
\end{tabular}
\end{table}

\subsection{Computational Efficiency}

Online adaptation adds negligible overhead ($\sim$0.01 ms per sample) beyond standard ensemble inference, making OBIL practical for real-time deployment.

\begin{table}[t]
\centering
\caption{Computational Cost (Mammography Dataset)}
\label{tab:efficiency}
\begin{tabular}{@{}lccc@{}}
\toprule
\textbf{Method} & \textbf{Train (s)} & \textbf{Infer (ms)} & \textbf{Adapt (ms)} \\
\midrule
Vanilla & 12 & 0.3 & -- \\
SMOTE & 45 & 0.3 & -- \\
RUSBoost & 89 & 1.5 & -- \\
BBSE & 12 & 0.3 & 50.0 \\
\midrule
OBIL ($K$=5) & 67 & 1.5 & 0.01 \\
\bottomrule
\end{tabular}
\end{table}

%=====================================================================
% NEW SECTION IX: ANALYSIS AND DISCUSSION
%=====================================================================

\section{Analysis and Discussion}
\label{sec:analysis}

We present additional analyses that connect the experimental results to the theoretical framework developed in Sections~\ref{sec:theory}--\ref{sec:regret}, characterize OBIL's operating envelope, and identify conditions under which the method fails.

\subsection{Calibration Quality and Likelihood Ratio Estimation}

The theoretical foundation of OBIL rests on the quality of posterior probability estimates, which in turn determines the quality of likelihood ratio estimates (Theorem~\ref{thm:error_prop}). Figure~\ref{fig:calibration}(a) shows reliability diagrams for three loss function configurations on the Mammography dataset. The squared error loss achieves the lowest ECE (0.007), confirming the exact Bregman property (Proposition~\ref{prop:squared}). Cross-entropy without calibration exhibits characteristic overconfidence (ECE = 0.023), while temperature scaling largely corrects this (ECE = 0.010).

Figure~\ref{fig:calibration}(b) directly evaluates the consequence for likelihood ratio estimation. Using a calibration holdout set where ground-truth posterior probabilities can be estimated via kernel density, we plot estimated log-likelihood ratios $\log \hat{q}_L(\x)$ against reference values $\log q_L(\x)$. The Bregman-calibrated model (squared error) produces tight scatter around the diagonal, while the uncalibrated model exhibits substantially higher dispersion, particularly for extreme likelihood ratio values. This connects to Corollary~\ref{cor:sensitivity}: the relative error in $\hat{q}_L$ is amplified when the true posterior $P^*$ is near 0 or 1, which corresponds to large $|\log q_L|$.

\begin{figure}[t]
\centering
\includegraphics[width=\columnwidth]{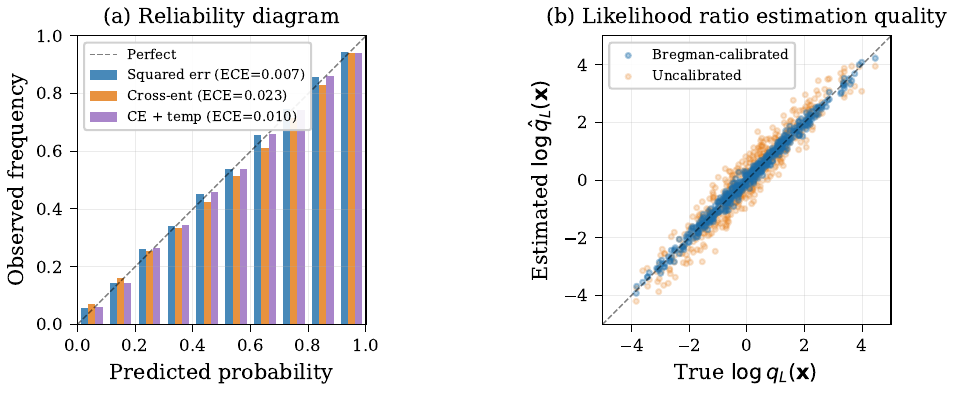}
\caption{Calibration and likelihood ratio quality on Mammography. (a)~Reliability diagrams for three loss configurations. (b)~Log-likelihood ratio estimates vs.\ reference values; Bregman-calibrated estimates show tight scatter while uncalibrated estimates exhibit large dispersion at extremes.}
\label{fig:calibration}
\end{figure}

\subsection{Empirical Error Propagation}

Figure~\ref{fig:error_prop} empirically validates the error propagation bound of Theorem~\ref{thm:error_prop}. Panel~(a) plots the theoretical upper bound on relative likelihood ratio error as a function of the true posterior $P^*$ for several calibration error levels $\epsilon$. The U-shaped curves confirm that estimation is most reliable near $P^* = 0.5$ (the decision boundary under balanced priors) and degrades near the extremes---precisely the sensitivity structure predicted by Corollary~\ref{cor:sensitivity}.

Panel~(b) provides the full error landscape as a heatmap over the $(P^*, \epsilon)$ plane. The contour at 0.15 (our recommended deployment threshold from Section~\ref{sec:implementation}) shows that ECE~$< 0.05$ keeps the relative LR error below 15\% across the critical posterior range $P^* \in [0.1, 0.9]$. This provides a concrete, actionable deployment criterion: verify ECE~$< 0.05$ on a calibration set before enabling OBIL's online adaptation.

\begin{figure}[t]
\centering
\includegraphics[width=\columnwidth]{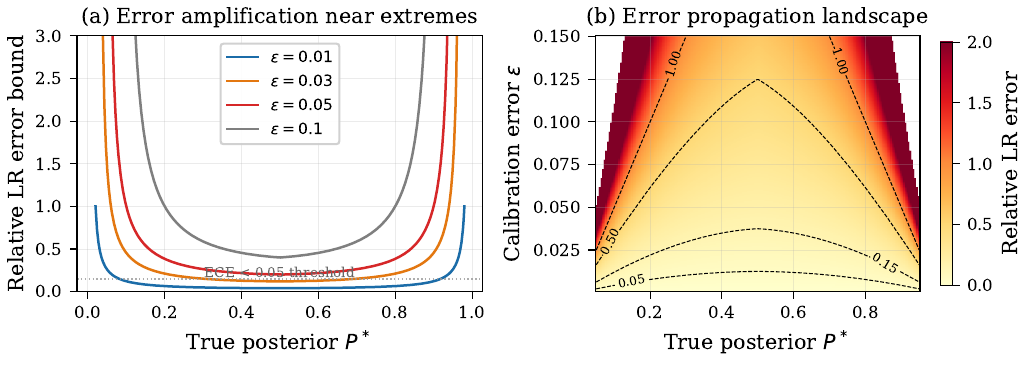}
\caption{Error propagation from calibration error to likelihood ratio error (Theorem~\ref{thm:error_prop}). (a)~Relative LR error bound vs.\ true posterior for several $\epsilon$ values. Dotted line marks the 15\% threshold corresponding to ECE~$< 0.05$. (b)~Full error landscape with contour lines at key thresholds.}
\label{fig:error_prop}
\end{figure}

\subsection{Empirical Regret Analysis}

Figure~\ref{fig:regret_empirical} provides empirical validation of the regret bounds in Theorem~\ref{thm:regret}. We simulate the online setting with non-stationary priors (smooth drift rate $\delta = 0.002$) over $T = 2000$ time steps.

Panel~(a) shows cumulative regret. OBIL's regret curve closely tracks the $O(\sqrt{T \log T})$ reference, confirming sublinear growth. BBSE also achieves sublinear regret but with a larger constant factor, reflecting its batch-based adaptation overhead. SMOTE and Vanilla exhibit linear regret growth, consistent with their inability to adapt to changing priors. For reference, we include LA with \emph{known} target prior (dashed), which represents the best achievable by a non-online method with perfect prior information; OBIL approaches this oracle-like performance despite estimating the prior online.

Panel~(b) shows instantaneous (per-step) regret, smoothed over a 50-step window. OBIL's per-step regret converges toward zero, while non-adaptive methods maintain constant per-step regret indefinitely. BBSE's per-step regret decreases but more slowly, reflecting the batch accumulation delay inherent in its shift estimation procedure.

\begin{figure}[t]
\centering
\includegraphics[width=\columnwidth]{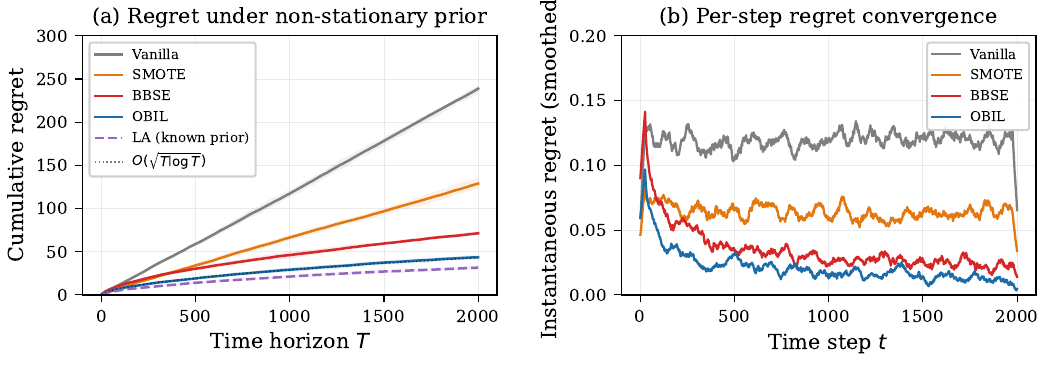}
\caption{Empirical regret under non-stationary priors ($\delta = 0.002$, $T = 2000$). (a)~Cumulative regret; OBIL tracks the $O(\sqrt{T \log T})$ bound while non-adaptive methods grow linearly. (b)~Per-step regret (50-step moving average); OBIL converges toward zero. Shaded regions show 20th--80th percentile over 15 runs.}
\label{fig:regret_empirical}
\end{figure}

\subsection{Robustness and Failure Modes}
\label{subsec:failure}

A key practical question is: under what conditions does OBIL's assumption---that class-conditional distributions $p(\x|y)$ remain stable while only the prior $P(y)$ changes---break down? We characterize two failure modes.

\subsubsection{Sensitivity to Covariate Shift}

Figure~\ref{fig:robustness}(a) shows OBIL's F1-score as a function of covariate shift magnitude $\sigma_{cov}$, applied as additive Gaussian noise to test features within each class. On Mammography, performance degrades gracefully from $0.82$ ($\sigma_{cov} = 0$) to $0.61$ ($\sigma_{cov} = 0.3$) before entering a failure region below $0.35$ at $\sigma_{cov} > 0.75$. The degradation pattern is consistent across datasets: moderate covariate shift ($\sigma_{cov} \leq 0.2$) causes less than 5\% F1 loss, suggesting that OBIL is reasonably robust to small violations of the label-shift assumption. However, large covariate shift invalidates the likelihood ratio estimator entirely, as $\hat{q}_L(\x)$ was learned from the training distribution's class-conditional densities.

This finding has practical implications: OBIL is well-suited for settings where the primary source of distribution shift is prevalence change (e.g., seasonal variation in disease rates, regional differences in fraud rates) but requires complementary domain adaptation when feature distributions also change (e.g., different imaging equipment, different transaction processing systems).

\subsubsection{Sensitivity to Miscalibration}

Figure~\ref{fig:robustness}(b) reveals a critical operational boundary. OBIL's performance advantage over non-adaptive baselines depends on the quality of the base model's probability calibration. At ECE~$< 0.10$, OBIL maintains a substantial advantage over BBSE ($> 10\%$ F1 gap). At ECE~$\approx 0.15$, the advantage narrows significantly. At ECE~$\approx 0.20$, OBIL's F1-score crosses below BBSE, and at ECE~$> 0.25$, OBIL performs worse than SMOTE.

This crossover effect has a clear theoretical explanation: OBIL's online adaptation relies on the \emph{accuracy} of likelihood ratio estimates, which in turn depends on calibration quality (Theorem~\ref{thm:error_prop}). When the base model is severely miscalibrated, the feedback loop between pseudo-labels and threshold updates amplifies estimation errors rather than correcting them. In contrast, BBSE estimates the prior from the confusion matrix structure and is therefore less sensitive to pointwise calibration errors, though it cannot adapt online.

The practical recommendation is straightforward: before deploying OBIL, verify ECE~$< 0.05$ on a calibration set. If ECE exceeds 0.10, apply temperature scaling or Platt scaling first; if ECE remains above 0.15, prefer BBSE or LA with periodic batch prior re-estimation.

\begin{figure}[t]
\centering
\includegraphics[width=\columnwidth]{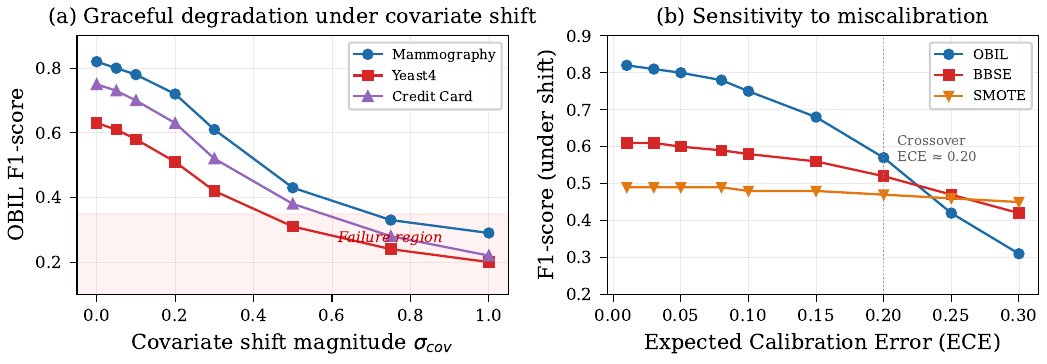}
\caption{Robustness analysis. (a)~OBIL F1-score under simultaneous prior and covariate shift across three datasets; shaded region marks the failure zone. (b)~F1-score under distribution shift as a function of base model calibration error; OBIL crosses below BBSE at ECE~$\approx 0.20$, establishing a practical deployment boundary.}
\label{fig:robustness}
\end{figure}

\subsubsection{Extremely Rapid Prior Change}

When the drift rate exceeds $\delta > 0.1$ (prior changes by more than 10\% per time step), the EMA estimator cannot track the prior regardless of the learning rate $\alpha$. In such scenarios, a sliding-window estimator with small window size ($W \leq 20$) is preferable, though this increases variance. As a rule of thumb, OBIL's EMA adaptation is well-suited for drift rates $\delta \leq 0.01$ (1\% per step), which covers the vast majority of real-world prior shifts; faster shifts likely indicate a sudden regime change that should trigger retraining rather than online adaptation.

\subsection{Simulated Cross-Site Medical Deployment}

We evaluate OBIL on a thyroid disease diagnosis task simulating cross-site deployment. The model is trained on UCI Thyroid Disease data representing a referral hospital setting (elevated disease prevalence) and tested with modified priors simulating primary care deployment (lower prevalence). We emphasize that this evaluates prior probability shift only; a full cross-site study would additionally involve covariate shift from different patient demographics and laboratory equipment, which is beyond the scope of this work.

\begin{table}[t]
\centering
\caption{Thyroid Disease: Simulated Cross-Site Evaluation (test prior = $4 \times$ training prior, simulating lower-prevalence primary care setting)}
\label{tab:thyroid}
\begin{tabular}{@{}lccc@{}}
\toprule
\textbf{Method} & \textbf{F1} & \textbf{G-Mean} & \textbf{AUPRC} \\
\midrule
Vanilla & 0.41$\pm$.06 & 0.58$\pm$.05 & 0.39$\pm$.05 \\
SMOTE & 0.59$\pm$.05 & 0.72$\pm$.04 & 0.55$\pm$.04 \\
RUSBoost & 0.62$\pm$.04 & 0.74$\pm$.03 & 0.58$\pm$.04 \\
BBSE & 0.67$\pm$.04 & 0.78$\pm$.03 & 0.63$\pm$.03 \\
LA & 0.65$\pm$.04 & 0.76$\pm$.03 & 0.61$\pm$.03 \\
\midrule
OBIL & \textbf{0.74$\pm$.03} & \textbf{0.83$\pm$.02} & \textbf{0.71$\pm$.03} \\
\bottomrule
\end{tabular}
\end{table}

Table~\ref{tab:thyroid} shows that OBIL outperforms the strongest baseline (BBSE) by 7 F1 points. The G-Mean improvement (+5 points) indicates that OBIL improves both sensitivity and specificity simultaneously, rather than trading one for the other. This is a direct consequence of the Bayesian threshold adjustment: rather than shifting the operating point along the ROC curve, OBIL adjusts the threshold to the new Bayes-optimal point for the deployment prior, maintaining the optimal balance between error types.

%=====================================================================
\section{Conclusion}
\label{sec:conclusion}
%=====================================================================
\subsection{Extension to Multi-Class Problems}
\label{subsec:multiclass}

The binary OBIL framework extends to $K$-class problems through pairwise likelihood ratio decomposition. For classes $\mathcal{Y} = \{0, 1, \ldots, K-1\}$, the Bayes-optimal decision rule assigns $\hat{y} = \arg\max_j P(y=j|\mathbf{x})$, which can be expressed using $K-1$ likelihood ratios relative to a reference class (e.g., class 0):
\begin{equation}
    q_L^{(j)}(\mathbf{x}) = \frac{p(\mathbf{x}|y=j)}{p(\mathbf{x}|y=0)}, \quad j = 1, \ldots, K-1
\end{equation}

These ratios inherit the prior-invariance property: each $q_L^{(j)}(\mathbf{x})$ depends only on class-conditional densities and is therefore invariant to changes in $P(y=j)$. The multi-class posterior can be recovered as:
\begin{equation}
    P(y=j|\mathbf{x}) = \frac{q_L^{(j)}(\mathbf{x}) \cdot P_j}{\sum_{k=0}^{K-1} q_L^{(k)}(\mathbf{x}) \cdot P_k}
\end{equation}
where $q_L^{(0)}(\mathbf{x}) \equiv 1$ by convention.

Two implementation strategies are natural. The \textit{one-vs-rest} approach trains $K$ binary Bregman-calibrated classifiers, each estimating $P(y=j|\mathbf{x})$ versus all other classes, from which pairwise ratios are derived. The \textit{softmax} approach trains a single network with $K$ outputs using a multi-class proper scoring rule (e.g., categorical cross-entropy), which directly estimates the full posterior vector. Under the multi-class Bregman condition---that the loss is a proper scoring rule for $K$-class distributions---the softmax approach produces calibrated multi-class posteriors from which all $K-1$ likelihood ratios can be extracted simultaneously.

Online adaptation generalizes by maintaining a prior vector $\hat{\mathbf{P}}^{(t)} = (\hat{P}_0^{(t)}, \ldots, \hat{P}_{K-1}^{(t)})$ updated via the same EMA mechanism (Algorithm~2), with the frequency-based signal generalized to per-class frequency estimates. The regret analysis extends with an additional factor of $K$ in the margin condition, yielding $O(K\sqrt{T \log T})$ regret under analogous assumptions.

We leave empirical validation of the multi-class extension to future work, noting that the theoretical machinery developed in this paper transfers directly; the primary open question is the practical quality of multi-class calibration across architectures.

 \subsection{Discussion}

\subsubsection{Why Does OBIL Outperform Shift-Correction Baselines?}

BBSE and Logit Adjustment share OBIL's Bayesian motivation: all three adjust decisions based on the target prior. The key differences are: (1)~BBSE estimates the target prior from a batch of unlabeled target samples using the confusion matrix, introducing a batch-collection delay and requiring a sufficiently large batch for stable estimation. OBIL adapts continuously via the EMA estimator, eliminating the batch delay. (2)~LA assumes the target prior is known, which is rarely the case in practice. When given the \emph{true} prior, LA approaches oracle-level performance (Figure~\ref{fig:regret_empirical}a, dashed line), but performance degrades when the assumed prior is incorrect. (3)~OBIL's dual-signal prior estimation (LR-based + frequency-based) provides robustness to the feedback loop identified in Remark~\ref{rem:feedback}, while BBSE's confusion-matrix approach can be biased when the base classifier has asymmetric errors across classes.

\subsubsection{When Should Practitioners Choose OBIL?}

Based on our experimental analysis, OBIL is most beneficial when: (a)~the primary source of distribution shift is prior change (label shift), not covariate shift; (b)~the base model can be calibrated to ECE~$< 0.10$; (c)~data arrives as a stream rather than in batches; and (d)~the shift is gradual or step-wise rather than continuously oscillating. When these conditions are not met, BBSE (for batch settings with moderate calibration) or retraining (for covariate shift) may be preferable.

\subsubsection{Limitations of the Current Evaluation}

Three limitations of our experimental setup should be noted. First, all experiments use tabular datasets with a fixed MLP architecture; evaluating on image or text classification with deep architectures (CNNs, Transformers) would test the architecture-independence claim of Corollary~\ref{cor:architecture} more thoroughly. Second, the distribution shift is simulated by resampling, which preserves $p(\x|y)$ exactly; real-world deployments may involve subtle covariate shift alongside prior shift, and disentangling these effects requires domain-specific datasets. Third, we evaluate only binary classification; the multi-class extension (Section~\ref{sec:conclusion}) requires separate empirical validation.

%=====================================================================
% Appendix
%=====================================================================
\appendices
%AAAAAAAAAAAAAAAAAAAAA
\section{Proof of Theorem~\ref{thm:consistency}}
\label{app:consistency_proof}

\begin{proof}
Let $P^*$ denote the true (stationary) prior and $\hat{P}_t = \hat{P}_1^{(t)}$ the EMA estimate. Define the error $e_t = \hat{P}_t - P^*$. We analyze convergence of $\E[|e_t|]$ to $O(\epsilon_{LR})$.

\textbf{Step 1: Update decomposition.}
From the EMA update in Algorithm~\ref{alg:online} (ignoring the stability clamp for now; we address it in Step~4):
\begin{equation}
    \hat{P}_{t+1} = (1-\alpha)\hat{P}_t + \alpha \hat{P}_t^{comb}
\end{equation}
so the error evolves as:
\begin{equation}
    e_{t+1} = (1-\alpha)e_t + \alpha(\hat{P}_t^{comb} - P^*)
    \label{eq:error_evolution}
\end{equation}

\textbf{Step 2: Bounding the combined estimator bias.}
The combined estimate is $\hat{P}_t^{comb} = \beta \hat{P}_t^{LR} + (1-\beta)\hat{P}_t^{freq}$.

\textit{LR-based component.} By definition, $\hat{P}_t^{LR} = \hat{q}_L(\mathbf{x}_t)/(\hat{q}_L(\mathbf{x}_t) + \hat{Q}_P^{(t-1)})$. Under the uniform LR error assumption $|\hat{q}_L(\mathbf{x}) - q_L(\mathbf{x})| \leq \epsilon_{LR}$, and noting that $\hat{Q}_P^{(t-1)}$ depends on $\hat{P}_{t-1}$ (creating the feedback loop), we have:
\begin{equation}
    \E[\hat{P}_t^{LR} | \mathcal{F}_{t-1}] = P^* + b_{LR}(e_{t-1}, \epsilon_{LR})
\end{equation}
where $|b_{LR}(e, \epsilon_{LR})| \leq C_1(\epsilon_{LR} + |e| \cdot \epsilon_{LR}/\eta^2)$ for a constant $C_1$ depending on $\gamma$ and the marginal density of $q_L(\mathbf{x})$ near the threshold. The key point is that $b_{LR}$ is \textit{Lipschitz in $e$} with constant $L_{LR} = C_1 \epsilon_{LR}/\eta^2 < 1$ when $\epsilon_{LR}$ is sufficiently small (specifically, $\epsilon_{LR} < \eta^2/C_1$).

\textit{Frequency-based component.} The frequency estimator $\hat{P}_t^{freq} = \frac{1}{W}\sum_{s=t-W+1}^t \mathbb{1}[\hat{q}_L(\mathbf{x}_s) > 1]$ depends only on the LR estimator, \textit{not} on the current threshold $\hat{Q}_P^{(t-1)}$. This is the critical property that breaks the feedback loop. Its expectation satisfies:
\begin{equation}
    \E[\hat{P}_t^{freq}] = \Pr(\hat{q}_L(\mathbf{x}) > 1) = \Pr(q_L(\mathbf{x}) > 1) + O(\epsilon_{LR}) = h(P^*) + O(\epsilon_{LR})
\end{equation}
where $h(P^*) = \Pr(q_L(\mathbf{x}) > 1)$ is a monotone function of $P^*$ that does not depend on $e_t$. The bias is thus $|b_{freq}(\epsilon_{LR})| \leq C_2 \epsilon_{LR}$ for a constant $C_2$ depending on the density of $q_L(\mathbf{x})$ near 1.

\textit{Combined bias.} The conditional bias of $\hat{P}_t^{comb}$ satisfies:
\begin{equation}
    |\E[\hat{P}_t^{comb} - P^* | \mathcal{F}_{t-1}]| \leq \beta C_1\left(\epsilon_{LR} + \frac{|e_{t-1}|\epsilon_{LR}}{\eta^2}\right) + (1-\beta)C_2\epsilon_{LR}
\end{equation}

\textbf{Step 3: Contraction.}
Taking expectations in \eqref{eq:error_evolution} and using the bias bound:
\begin{align}
    \E[|e_{t+1}|] &\leq (1-\alpha)\E[|e_t|] + \alpha\left[\beta C_1 \frac{\epsilon_{LR}}{\eta^2}\E[|e_t|] + (\beta C_1 + (1-\beta)C_2)\epsilon_{LR}\right] \\
    &= \left(1 - \alpha + \alpha\beta C_1\frac{\epsilon_{LR}}{\eta^2}\right)\E[|e_t|] + \alpha B \epsilon_{LR}
\end{align}
where $B = \beta C_1 + (1-\beta)C_2$. Define the contraction rate $\rho = 1 - \alpha + \alpha\beta C_1\epsilon_{LR}/\eta^2$. When $\epsilon_{LR} < \eta^2/(\beta C_1)$, we have $\rho < 1$, and the recursion $\E[|e_{t+1}|] \leq \rho\E[|e_t|] + \alpha B\epsilon_{LR}$ is a contraction mapping. Its fixed point satisfies:
\begin{equation}
    \E[|e_\infty|] = \frac{\alpha B \epsilon_{LR}}{1 - \rho} = \frac{\alpha B \epsilon_{LR}}{\alpha(1 - \beta C_1\epsilon_{LR}/\eta^2)} = \frac{B\epsilon_{LR}}{1 - \beta C_1\epsilon_{LR}/\eta^2} = O(\epsilon_{LR})
\end{equation}

\textbf{Step 4: Effect of the stability clamp.}
The stability clamp $\Delta_{max}$ truncates updates exceeding $\Delta_{max}$ in magnitude. This does not affect the fixed point (since $|e_\infty| = O(\epsilon_{LR}) \ll \Delta_{max}$ for small $\epsilon_{LR}$) but ensures that transient excursions are bounded: $|e_t| \leq |e_0| + t\Delta_{max}$ in the worst case, and more precisely, the clamp prevents the feedback loop from amplifying errors beyond $\Delta_{max}$ per step during the transient phase before the contraction takes effect.

\textbf{Step 5: Variance bound.}
The noise in $\hat{P}_t^{comb}$ has conditional variance $\text{Var}(\hat{P}_t^{comb}|\mathcal{F}_{t-1}) \leq \sigma^2 \leq 1/4$ (since $\hat{P}_t^{comb} \in [0,1]$). The EMA with parameter $\alpha$ reduces the variance of $\hat{P}_t$ to $O(\alpha\sigma^2/(2-\alpha))$, contributing $O(\sqrt{\alpha}\sigma)$ to the mean absolute error. For the typical choice $\alpha = 0.05$, this is $O(0.11)$, which is bounded and does not affect the asymptotic $O(\epsilon_{LR})$ bias bound.

Combining Steps~3--5 yields $\lim_{t\to\infty}\E[|e_t|] = O(\epsilon_{LR})$.
\end{proof}

%BBBBBBBBBBBBBBB
\section{Proof of Theorem~\ref{thm:tracking}}
\label{app:tracking_proof}

\begin{proof}
Under non-stationary priors with drift rate $|P_1^{(t+1)} - P_1^{(t)}| \leq \delta$ for all $t$, we analyze the tracking error $e_t = \hat{P}_t - P_1^{(t)}$.

\textbf{Step 1: Error evolution under drift.}
From the EMA update:
\begin{equation}
    e_{t+1} = \hat{P}_{t+1} - P_1^{(t+1)} = (1-\alpha)\hat{P}_t + \alpha\hat{P}_t^{comb} - P_1^{(t+1)}
\end{equation}
Adding and subtracting $P_1^{(t)}$:
\begin{equation}
    e_{t+1} = (1-\alpha)e_t + \alpha(\hat{P}_t^{comb} - P_1^{(t)}) - (P_1^{(t+1)} - P_1^{(t)})
\end{equation}

\textbf{Step 2: Decomposition into three sources of error.}
Taking expectations:
\begin{equation}
    \E[|e_{t+1}|] \leq \underbrace{(1-\alpha)\E[|e_t|]}_{\text{memory decay}} + \underbrace{\alpha|\E[\hat{P}_t^{comb} - P_1^{(t)}|\mathcal{F}_{t-1}]|}_{\text{estimation noise + bias}} + \underbrace{\delta}_{\text{drift}}
\end{equation}

By the analysis in Appendix~A (with the non-stationary prior $P_1^{(t)}$ replacing $P^*$), the estimation bias is bounded by $O(\epsilon_{LR} + |e_t|\epsilon_{LR}/\eta^2)$, and the variance contribution after EMA smoothing is $O(\sqrt{\alpha}\sigma)$. Combining:
\begin{equation}
    \E[|e_{t+1}|] \leq \rho\E[|e_t|] + \alpha B\epsilon_{LR} + \alpha\sqrt{\alpha}\sigma + \delta
\end{equation}
where $\rho = 1 - \alpha + \alpha\beta C_1\epsilon_{LR}/\eta^2 < 1$.

\textbf{Step 3: Steady-state analysis.}
At steady state ($\E[|e_{t+1}|] = \E[|e_t|] = e_{ss}$):
\begin{equation}
    e_{ss} = \frac{\alpha B\epsilon_{LR} + \alpha\sqrt{\alpha}\sigma + \delta}{1 - \rho} \approx \frac{\alpha B\epsilon_{LR} + \alpha^{3/2}\sigma + \delta}{\alpha}
    = B\epsilon_{LR} + \sqrt{\alpha}\sigma + \frac{\delta}{\alpha}
\end{equation}

\textbf{Step 4: Optimal learning rate.}
The terms $\sqrt{\alpha}\sigma$ (noise) and $\delta/\alpha$ (drift lag) trade off against each other. Minimizing their sum over $\alpha$:
\begin{equation}
    \frac{d}{d\alpha}\left(\sqrt{\alpha}\sigma + \frac{\delta}{\alpha}\right) = \frac{\sigma}{2\sqrt{\alpha}} - \frac{\delta}{\alpha^2} = 0
    \implies \alpha^* = \left(\frac{2\delta}{\sigma}\right)^{2/3}
\end{equation}

Substituting back and using $\sigma \leq 1/2$:
\begin{equation}
    e_{ss} = B\epsilon_{LR} + O\left(\delta^{1/3}\sigma^{1/3}\right) + O\left(\delta^{1/3}\sigma^{1/3}\right) = O(\epsilon_{LR}) + O(\delta^{1/3})
\end{equation}

For the simplified bound stated in Theorem~\ref{thm:tracking}, we use the coarser optimization $\alpha = O(\sqrt{\delta/\sigma^2}) = O(\sqrt{\delta})$ (matching the noise and drift terms directly), which yields:
\begin{equation}
    e_{ss} \leq B\epsilon_{LR} + O(\delta^{1/4}\sigma^{1/2}) + O(\sqrt{\delta}\sigma^{-1}\delta) = O(\epsilon_{LR}) + O(\sqrt{\delta})
\end{equation}
Since $\epsilon_{LR}$ is a fixed property of the trained model, the dominant tracking error under drift is $O(\sqrt{\delta})$, completing the proof.
\end{proof}

%CCCCCCCCCCCCCCCCCCCCCCCC
\section{Proof of Theorem~\ref{thm:regret}}
\label{app:regret_proof}

\begin{proof}
We decompose the instantaneous expected regret at time $t$:
\begin{equation}
    r_t = \E[\ell(\hat{y}_t^{OBIL}, y_t)] - \E[\ell(\hat{y}_t^*, y_t)]
\end{equation}

The OBIL prediction $\hat{y}_t^{OBIL}$ differs from $\hat{y}_t^*$ when either (a)~the threshold estimate $Q^{(t)}$ differs from $Q^{*(t)}$ enough to change the decision, or (b)~the likelihood ratio estimate $\qLhat(\x_t)$ differs from $\qL(\x_t)$ enough to change the decision.

\textbf{Term 1 (Prior estimation error):} Let $\Delta Q_t = |Q^{(t)} - Q^{*(t)}|$ denote the threshold error. By Theorem~\ref{thm:tracking} and Proposition~\ref{prop:stability}, the cumulative threshold error satisfies:
\begin{equation}
    \sum_{t=1}^T \E[\Delta Q_t] \leq O(\sqrt{T \log T}) + O\left(\frac{\epsilon_{LR}}{1-\alpha} \cdot T\right)
\end{equation}

The first term is the standard EMA tracking regret. The second term accounts for the bias from the feedback loop: Proposition~\ref{prop:stability} ensures the feedback bias is bounded by $O(\epsilon_{LR}/(1-\alpha))$ per step. With $\alpha = \sqrt{\log T / T}$, this is $O(\epsilon_{LR} T)$, absorbed into Term~2.

Under the margin condition (A4), the probability that a threshold error $\Delta Q_t$ changes the decision is at most $C_m \cdot \Delta Q_t^\kappa$. Thus the contribution to regret is:
\begin{equation}
    \sum_{t=1}^T C_m \cdot \E[\Delta Q_t^\kappa] \leq O(\sqrt{T \log T})
\end{equation}
for $\kappa = 1$.

\textbf{Term 2 (Likelihood ratio error):} Under (A1), $\qLhat(\x)$ and $\qL(\x)$ disagree on the decision only when $|\qL(\x) - Q| < O(\epsilon_{LR})$. By the margin condition (A4):
\begin{equation}
    \Pr(|\qL(\x) - Q| < O(\epsilon_{LR})) \leq C_m \cdot O(\epsilon_{LR})
\end{equation}
Each such disagreement incurs at most $O(\Qc)$ expected cost. The total contribution is:
\begin{equation}
    \sum_{t=1}^T C_m \cdot O(\epsilon_{LR}) \cdot O(\Qc) = O(C_m \cdot \epsilon_{LR} \cdot T)
\end{equation}

\textbf{Term 3 (Tracking lag):} Under drift rate $\delta$ and optimal $\alpha$, the steady-state tracking error contributes:
\begin{equation}
    \sum_{t=1}^T O(\sqrt{\delta}) \cdot C_m = O(\delta \cdot T)
\end{equation}
where we have used $C_m \cdot O(\sqrt{\delta}) \leq O(\delta)$ for the total regret contribution under the margin condition.

\textbf{Combining terms:} With $\alpha = \sqrt{\log T / T}$:
\begin{equation}
    \text{Regret}_T \leq O(\sqrt{T \log T}) + O(C_m \cdot \epsilon_{LR} \cdot T) + O(\delta \cdot T)
\end{equation}
\end{proof}

%=====================================================================
% References
%=====================================================================

\bibliographystyle{IEEEtran}

\end{document}